\setlist[enumerate]{leftmargin=.5in}
\setlist[itemize]{leftmargin=.5in}
\crefname{hypothesis}{Hypothesis}{Hypotheses}
\title{Random Fourier Signature  Features
}
\author{Csaba T{\'o}th\thanks{Mathematical Institute, University of Oxford 
  (\texttt{\href{mailto:toth@maths.ox.ac.uk}{toth@maths.ox.ac.uk}}, \texttt{\href{mailto:oberhauser@maths.ox.ac.uk}{oberhauser@maths.ox.ac.uk}}).
}
\and Harald Oberhauser\footnotemark[2]
\and Zolt{\'a}n Szab{\'o}\thanks{Department of Statistics, London School of Economics (\texttt{\href{mailto:z.szabo@lse.ac.uk}{z.szabo@lse.ac.uk}}).}}
\DeclareMathOperator{\med}{med}
\newcommand{\p}{\prime}
\newcommand{\SVM}{\texttt{SVM}}
\newcommand{\RKHS}{\texttt{RKHS}}
\newcommand{\RFF}{\texttt{RFF}}
\newcommand{\RBF}{\texttt{RBF}}
\newcommand{{\RFSF}}{{\texttt{RFSF}}}
\newcommand{\RFSFD}{\texttt{RFSF-DP}}
\newcommand{\RFSFT}{\texttt{RFSF-TRP}}
\newcommand{\KS}{\texttt{KSig}}
\newcommand{\KSP}{\texttt{KSigPDE}}
\newcommand{\GAK}{\texttt{GAK}}
\newcommand{\RWS}{\texttt{RWS}}
\newcommand{\TRP}{\texttt{TRP}}
\newcommand{\CP}{\texttt{CP}}
\newcommand{\pr}{\texttt{Pr}}
\newcommand{\bx}{\mathbf{x}}
\newcommand{\bX}{\mathbf{X}}
\newcommand{\by}{\mathbf{y}}
\newcommand{\bz}{\mathbf{z}}
\newcommand{\bu}{\mathbf{u}}
\newcommand{\bv}{\mathbf{v}}
\newcommand{\bw}{\mathbf{w}}
\newcommand{\bi}{\mathbf{i}}
\newcommand{\bj}{\mathbf{j}}
\newcommand{\bk}{\mathbf{k}}
\newcommand{\bl}{\mathbf{l}}
\newcommand{\ba}{\mathbf{a}}
\newcommand{\bb}{\mathbf{b}}
\newcommand{\bp}{\mathbf{p}}
\newcommand{\be}{\mathbf{e}}
\newcommand{\bq}{\mathbf{q}}
\newcommand{\bs}{\mathbf{s}}
\newcommand{\bt}{\mathbf{t}}
\newcommand{\bW}{\mathbf{W}}
\newcommand{\bP}{\mathbf{P}}
\newcommand{\bA}{\mathbf{A}}
\newcommand{\bh}{\mathbf{h}}
\newcommand{\cI}{\mathcal{I}}
\newcommand{\cX}{\mathcal{X}}
\newcommand{\cM}{\mathcal{M}}
\newcommand{\cN}{\mathcal{N}}
\newcommand{\cH}{\mathcal{H}}
\newcommand{\bbP}{\mathbb{P}}
\newcommand{\bbV}{\mathbb{V}}
\newcommand{\bbZ}{\mathbb{Z}}
\newcommand{\bbR}{\mathbb{R}}
\newcommand{\bbE}{\mathbb{E}}
\newcommand{\bbN}{\mathbb{N}}
\newcommand{\spc}{\hspace{5pt}}
\newcommand{\given}{\, : \,}
\newcommand{\cond}{\, \middle\vert \,}
\newcommand{\norm}[1]{\left\lVert #1 \right\rVert}
\newcommand{\expe}[1]{\bbE\left[ #1 \right]}
\newcommand{\prob}[1]{\bbP\left[ #1 \right]}
\newcommand{\abs}[1]{\left\lvert #1 \right\rvert}
\newcommand{\bracks}[1]{\left\lbrack #1 \right\rbrack}
\newcommand{\curls}[1]{\left\{ #1 \right\}}
\newcommand{\pars}[1]{\left( #1 \right)}
\newcommand{\inner}[2]{\left\langle #1, #2 \right\rangle}
\newcommand{\sumnolim}{\sum\nolimits}
\newcommand{\kernel}{\texttt{k}} 
\newcommand{\sigkernel}[1][]{\ifthenelse{\equal{#1}{}}{\kernel_\texttt{Sig}}{\kernel_{\texttt{Sig}_{#1}}}} 
\newcommand{\rffkernel}{\tilde{\kernel}}
\newcommand{\rffsigkernel}[1][]{\ifthenelse{\equal{#1}{}}{\rffkernel_\texttt{Sig}}{\rffkernel_{\texttt{Sig}_{#1}}}}
\newcommand{\rffsigkernelhat}[1][]{\ifthenelse{\equal{#1}{}}{\hat{\kernel}_\texttt{Sig}}{\hat{\kernel}_{\texttt{Sig}_{#1}}}}
\newcommand{\rffsigkernelDP}[1][]{\ifthenelse{\equal{#1}{}}{\rffkernel^{\texttt{DP}}_\texttt{Sig}}{\rffkernel^{\texttt{DP}}_{\texttt{Sig}_{#1}}}}
\newcommand{\rffsigkernelTRP}[1][]{\ifthenelse{\equal{#1}{}}{\rffkernel^\texttt{TRP}_\texttt{Sig}}{\rffkernel^\texttt{TRP}_{\texttt{Sig}_{#1}}}}
\newcommand{\test}[1][]{\ifthenelse{\equal{#1}{}}{given}{not given}}
\newcommand{\kernelfeatures}{\varphi}
\newcommand{\signature}[1][]{\ifthenelse{\equal{#1}{}}{\kernelfeatures_\texttt{Sig}}{\kernelfeatures_{\texttt{Sig}_{#1}}}}
\newcommand{\rff}{\tilde{\kernelfeatures}}
\newcommand{\rffsig}[1][]{\ifthenelse{\equal{#1}{}}{\rff_{\texttt{Sig}}}{\rff_{\texttt{Sig}_{#1}}}}
\newcommand{\rffsighat}[1][]{\ifthenelse{\equal{#1}{}}{\hat{\kernelfeatures}_{\texttt{Sig}}}{\hat{\kernelfeatures}_{\texttt{Sig}_{#1}}}}
\newcommand{\rffsigDP}[1][]{\ifthenelse{\equal{#1}{}}{\rff_{\texttt{Sig}}^{\texttt{DP}}}{\rff_{\texttt{Sig}_{#1}}^{\texttt{DP}}}}
\newcommand{\rffsigTRP}[1][]{\ifthenelse{\equal{#1}{}}{\rff_{\texttt{Sig}}^{\texttt{TRP}}}{\rff_{\texttt{Sig}_{#1}}^{\texttt{TRP}}}}
\newcommand{\dimRFF}{\tilde{d}}
\newcommand{\dimTRP}{\dimRFF_{\texttt{TRP}}}
\newcommand{\Dp}[1]{D^{(p)}}
\newcommand{\Ep}[1]{E^{(p)}}
\newcommand{\Hil}{\cH}
\newcommand{\HilT}{\Hil_{\texttt{Sig}}}
\newcommand{\HilRFF}{\tilde{\Hil}}
\newcommand{\HilRFFT}{\HilRFF_{\texttt{Sig}}}
\newcommand{\HilRFFTDP}{\HilRFF_{\texttt{Sig}}^{\texttt{DP}}}
\newcommand{\HilRFFTTRP}{\HilRFF_{\texttt{Sig}}^{\texttt{TRP}}}
\DeclareMathOperator{\seq}{\cX_{\texttt{seq}}}
\DeclareMathOperator{\seqH}{\Hil_{\text{seq}}}
\newcommand{\onevar}{{1\text{-}\mathrm{var}}}
\DeclareMathOperator{\Jac}{J}
\newcommand{\comment}[1]{}
\DeclareMathOperator{\Span}{span}
\renewcommand{\d}{\mathrm{d}} 
\newcommand{\R}{\bbR}         
\renewcommand{\b}{\mathbf}    
\newcommand{\iid}{\text{i.i.d.}} 
\begin{document}

\maketitle

\begin{abstract}
    Tensor algebras give rise to one of the most powerful measures of similarity for sequences of arbitrary length called the signature kernel accompanied with attractive theoretical guarantees from stochastic analysis.
    Previous algorithms to compute the signature kernel scale quadratically in terms of the length and number of the sequences. To mitigate this severe computational bottleneck, we develop a random Fourier feature-based acceleration of the signature kernel acting on the inherently non-Euclidean domain of sequences. We show uniform approximation guarantees for the proposed unbiased estimator of the signature kernel, while keeping its computation linear in the sequence length and  number. In addition, combined with recent advances on tensor projections, we derive two even more scalable time series features with favourable concentration properties and computational complexity both in time and memory. Our empirical results show that the reduction in computational cost comes at a negligible price in terms of accuracy on moderate size datasets, and it enables one to scale to large datasets up to a million time series. We release the code publicly available at \texttt{\href{https://github.com/tgcsaba/ksig}{https://github.com/tgcsaba/ksig}}.
\end{abstract}

\begin{keywords}
Signature kernel, tensor random projections, concentration of measure, sequential data.
\end{keywords}

\begin{MSCcodes}
60L10, 65C20, 68T10
\end{MSCcodes}

\section{Introduction} \label{sec:introduction}
Machine learning has successfully been applied to tasks that require learning from complex and structured data types on non-Euclidean domains. Feature engineering on such domains is often tackled by exploiting the geometric structure and symmetries existing within the data \cite{bronstein2021geometric}.
Learning from sequential data (such as video, text, audio, time series, health data, etc.) is a classic, but an ongoing challenge due to the following properties:
\begin{itemize}
\item \emph{Non-Euclidean data.} The data domain is nonlinear since there is no obvious and natural way of adding sequences of different length. 

\item \emph{Time-space patterns.} 
  Statistically significant patterns can be distributed over time and space, that is, capturing the order structure in which ``events'' arise is crucial. 

\item \emph{Time-warping invariance.} The meaning of many sequences is often invariant to reparametrization also frequently called time-warping, at least to an extent; e.g.~a sentence spoken quicker or slower contains (essentially) the same information. 

\item \emph{Discretization and irregular sampling.}
  Sequences often arise by sampling along an irregularly spaced grid of an underlying continuous time process. A general methodology should be robust as the sampling gets finer, sequences approximate paths (continuous-time limit), or as the discretization grid varies between sequences.

\item \emph{Scalability.} Sequence datasets can quickly become massive, so the computational complexity should grow subquadratically, in terms of all of the state-space dimension, and the length and number of sequences. 
\end{itemize}
The signature kernel $\sigkernel$ is the state-of-the-art kernel for sequential data \cite{toth2020bayesian,salvi2021signature,lemercier2021scaling} that addresses the first 4 of the above questions and can rely on the modular and powerful framework of kernel learning \cite{scholkopf2002learning}.
Its construction is motivated by classic ideas from stochastic analysis that give a structured description of a sequence by developing it into a series of tensors.
We refer to \cite{lee2023signature} for a recent overview of its various constructions and applications. In the real-world, various phenomena are  well-modelled by systems of differential equations. The path signature arises naturally in the context of controlled differential equations. The role of the signature here is to provide a basis for the effects of a driving signal on systems of controlled differential equations. In essence, it captures the interactions of a controlling signal with a nonlinear system. This explains the widespread applicability of signatures to various problems across the sciences \cite{lyons2007differential}. There is also geometric intuition behind signatures, see Section 1.2.4 in \cite{chevyrev2016primersignaturemethodmachine}.
\paragraph{Features vs Kernel/Primal vs Dual}
Kernel learning circumvents the costly evaluation of a high- or infinite-dimensional feature map by replacing it with the computation of a Gram matrix which contains as entries the inner products of features between all pairs of data points.
This can be very powerful since the inner product evaluation can often be done cheaply by the celebrated "kernel trick", even for infinite-dimensional feature spaces, but the price is that now the computational cost is quadratic in the number of samples, and downstream algorithms further often incur a cubic cost usually in the form of a matrix inversion. On the other hand, when finite-dimensional features can used for learning, the primal formulation of a learning algorithm can perform training and inference in a cost that is linear with respect to the sample size assuming that the feature dimension is fixed. This motivates the investigation of finite-dimensional approximations to kernels that mimic their expressiveness at a lower computational cost. It is an interesting question how the feature dimension should scale with the dataset size to maintain a given (optimal) learning performance in downstream tasks,
which is investigated for instance by \cite{rudi17generalization,carratino18learning,sun2018but,li2019towards,sriperumbudur22approximate,lanthaler2023error}.

\paragraph{Computational Cost of the Signature kernel}
In the context of the signature kernel, one data point is itself a whole sequence. 
Hence, given a data set $\bX$ consisting of $N \in \bbZ_+$ sequences where each sequence $\bx=(\bx_1,\ldots,\bx_{\ell_\bx})$ is of maximal length $\ell_\bx \leq \ell \in \bbZ_+$ and has sequence entries $\bx_i$ in a state-space of dimension $d$, then the existing algorithms to evaluate the Gram matrix of the $\sigkernel$ scale quadratically, i.e.~as $O(N^2 \ell^2 d)$, both in sequence length $\ell$ and number of sequences $N$.
So far this has only been addressed by subsampling (either directly the sequence elements to reduce the length or by column subsampling via the Nyström approach \cite{williams2000nystrom}), which can lead to crude approximations and performance degradation on large-scale datasets.

\paragraph{Contribution}
Random Fourier Features (\RFF) \cite{rahimi2007random} is a classic technique to enjoy both the benefits of the primal and dual approach. 
Here, a low-dimensional and random feature map is constructed, which although does not approximate the feature map of a translation-invariant kernel, its inner product is with high probability close to the kernel itself.
The main contribution of this article is to carry out such a construction for the signature kernel.
Concretely, we construct a random feature map on the domain of sequences called Random Fourier Signature Features (\RFSF), such that its inner product is a random kernel $\rffsigkernel$ for sequences that is both 
\begin{enumerate*}[label=(\roman*)]
    \item an unbiased estimator for $\sigkernel$, and 
    \item has analogous probabilistic approximation guarantees to the classic \RFF{} kernel.
\end{enumerate*}
The challenge is that a direct application of the classic \RFF{} technique is not feasible since this relies on Bochner's theorem which does not apply since the sequence domain is not even a linear space and the feature domain is non-Abelian, which makes the use of (generalizations of \cite{fukumizu2008characteristic}) Bochner's theorem difficult due to the lack of sufficiently explicit representations.
We tackle this challenge by combining the algebraic structure of signatures with probabilistic concentration arguments; a careful analysis of the error propagation yields uniform concentration guarantees similar to the \RFF{} on $\R^d$.
Then, we introduce dimensionality reduction techniques for random tensors further approximating $\rffsigkernel$ to define the extremely scalable variants $\rffsigkernelDP$ and $\rffsigkernelTRP$ called \RFSFD{} and \RFSFT{} saving considerable amounts of computation time and memory by low-dimensional projection of the feature set of the \RFSF{}.

Hence, analogously to the classic \RFF{} construction, the random kernels $\rffsigkernel, \rffsigkernelDP, \rffsigkernelTRP$ simultaneously enjoy the expressivity of an infinite-dimensional feature space as well as linear complexity in sequence length.
This overcomes the arguably biggest drawback of the signature kernel, which is the quadratic complexity in sample size and sequence length; the price for reducing the complexities by an order is that this approximation only holds with high probability. As in the case of the classic \RFF, our experiments show that this is in general a very attractive tradeoff. Concretely, we demonstrate in the experiments that the proposed random features \begin{enumerate*}[label=(\arabic*)] \item provide comparable performance on moderate sized datasets to full-rank (quadratic time) signature kernels, \item outperform other random feature approaches for time series on both moderate- and large-scale datasets, \item allow scaling to datasets of a million time series\end{enumerate*}.

\paragraph{Related Work}
The signature kernel has found many applications; for example, it is used in ABC-Bayes \cite{dyer2023approximate}, economic scenario validation \cite{andres2023signaturebased}, amortised likelihood estimation \cite{dyer2022amortised}, the analysis of RNNs \cite{fermanian2021framing}, analysis of trajectories in Lie groups \cite{lee2020path}, metrics for generative modelling \cite{buehler2021generating,kidger2021neural}, or dynamic analysis of topological structures \cite{giusti2023signatures}. 
For a general overview see \cite{lee2023signature}.
All of these applications can benefit from a faster computation of the signature kernel with theoretical guarantees.
Previous approaches address the quadratic complexity of the signature kernel only by subsampling in one form or another:
\cite{kiraly2019kernels} combine a structured Nystr{\"o}m type-low rank approximation to reduce complexity in dimension of samples and sequence length, \cite{toth2020bayesian} combine this with inducing point and variational methods, \cite{salvi2021signature} uses sequence-subsampling, \cite{lemercier2021scaling} use diagonal approximations to Gram matrices in a variational setting.
Related to this work is also the random nonlinear projections in \cite{lyons2017sketching}; further, \cite{morrill2021generalised} combine linear dimension projection in a general pipeline and \cite{cuchiero2021discrete} use signatures in reservoir computing. 
Directly relevant for this work is recent progress on tensorized random projections \cite{sun2021tensor,rakhshan2020tensorized}.
Random Fourier Features \cite{rahimi2007random,rahimi2008weighted} are well-understood theoretically \cite{sutherland2015error, sriperumbudur2015optimal,sriperumbudur22approximate,liao2020random,avron2017random, szabo2019kernel, chamakh2020orlicz, ullah2018streaming,chamakh2020orlicz}. 
In particular, its generalization properties are studied in e.g.~\cite{bach2013sharp,li2019towards, sun2018but, lanthaler2023error}, where it is shown that the feature dimension need only scale sublinearly in the dataset size for supervised learning, and a similar result also holds for kernel principal component analysis \cite{sriperumbudur22approximate}. 
Several variations have been proposed over the years \cite{le2013fastfood, feng2015random, choromanski2016recycling, yu2016orthogonal, choromanski2017unreasonable, choromanski2018geometry, choromanski2022hybrid}, even finding applications in deep learning \cite{tancik2020fourier}. Alternative random feature approaches for polynomial and Gaussian kernels based on tensor sketching have been proposed in e.g.~\cite{wacker2022complex, wacker2022improved, wacker2022local}. Gaussian sketching has also been applied in the RKHS for kernel approximation \cite{kpotufe2020gaussian}.  For a survey, the reader is referred to \cite{chamakh2020orlicz,liu2021random}.

\paragraph{Outline} Section \ref{sec:prereq} provides background on the prerequisites of our work: Random Fourier Features, and Signature Features/Kernels. Section \ref{sec:RFSF} contains our proposed methods with theoretical results; it introduces Random Fourier Signature Features (\RFSF) $\rffsig[\leq M]$,
\RFSF{} kernels $\rffsigkernel[\leq M]$ (where $M \in \bbZ_+$ is the truncation level introduced later), and most importantly their theoretical guarantees. Theorem \ref{thm:main} quantifies the approximation $\sigkernel[\leq M](\bx,\by) \approx \rffsigkernel[\leq M](\bx,\by)$ uniformly. Then, we discuss additional variants: the \RFSFD{} kernel $\rffsigkernelDP[\leq M]$ and the \RFSFT{} kernel $\rffsigkernelTRP[\leq M]$, which build on the previous construction using dimensionality reduction with corresponding concentration results in Theorems \ref{thm:main2} and \ref{thm:main3}. Section \ref{sec:experiments} compares the performance of the proposed scalable signature kernels against popular approaches on \SVM{} multivariate time series classification, which demonstrates that the proposed kernel not only significantly improves the computational complexity of the signature kernel, it also provides comparable performance, and in some cases even improvements in accuracy as well. Hence, we take the best of both worlds: linear batch, sequence, and state-space dimension complexities, while approximately enjoying the expressivity of an infinite-dimensional \RKHS{} with high probability.

\section{Prerequisites} \label{sec:prereq}

\paragraph{Notation}
We denote the real numbers by $\bbR$, natural numbers by $\bbN \coloneqq \curls{0, 1, 2, \dots}$, positive integers by $\bbZ_+ \coloneqq \curls{1, 2, 3, \dots}$, the range of positive integers from $1$ to $n \in \bbZ_+$ by $\bracks{n} \coloneqq \curls{1, 2, \dots, n}$. Given $a, b \in \bbR$, we denote their maximum by $a \vee b \coloneqq \max(a, b)$ and their minimum by $a \wedge b \coloneqq \min(a, b)$. We define the collection of all ordered $m$-tuples with non-repeating entries starting from $1$ up to $n$ including the endpoints by 
\begin{align} \label{eq:delta_m_def}
    \Delta_m(n) \coloneqq \curls{1 \leq i_1 < i_2 < \cdots < i_m \leq n \given i_1, i_2, \dots, i_n \in [n]}.
\end{align}

In general, $\cX$ refers to a subset of the input domain, where the various objects are defined, generally taken to be a subset $\bbR^d$ unless otherwise stated. For a vector $\bx \in \bbR^d$, we denote its $\ell_p$ norm by $\norm{\bx}_p \coloneqq \pars{\sum_{i=1}^d \abs{x_i}^p}^{1/p}$. For a matrix $\bA \in \bbR^{d \times e}$, we denote the spectral and the Frobenius norm
by $\norm{\bA}_2 \coloneqq \sup_{\norm{\bx}_2 = 1} \norm{\bA \bx}_2$ and $\norm{\bA}_F \coloneqq \pars{\sum_{i=1}^e \norm{\bA \be_i}_2^2}^{1/2}$, where $\{\be_1, \dots, \be_e\}$ is the canonical basis of $\bbR^e$. The transpose of a matrix $\b A$ is denoted by $\b A^\top$. For a differentiable $f: \bbR^d \to \bbR$, we denote its gradient at $\bx \in \bbR^d$ by $\nabla f(\bx) \coloneqq \pars{\nicefrac{\partial f(\bx)}{\partial x_i}}_{i=1}^d$, and its collection of partial derivatives with respect to $\bs \coloneqq (x_{i_1}, \dots, x_{i_k})$ by $\partial_\bs f(\bx) \coloneqq \pars{\nicefrac{\partial f(\bx)}{\partial x_{i_j}}}_{j=1}^k$.

$\seq$ refers to sequences of finite, but unbounded length with values in the set $\cX$:
\begin{align}
\seq \coloneqq \{\bx=(\bx_1,\ldots,\bx_L): \bx_i \in \cX, L \in \bbZ_+   \}.
\end{align}
We denote the length of a sequence $\bx = (\bx_1, \dots, \bx_L) \in \seq$ by $\ell_\bx \coloneqq L$, and define the 1\textsuperscript{st}-order forward differencing operator as $\delta \bx_i \coloneqq \bx_{i+1} - \bx_i$. We define the $1$-variation functional of a sequence $\bx \in \seq$ as $\norm{\bx}_{\onevar} \coloneqq \sum_{i=1}^{\ell_{\bx} - 1} \norm{\delta \bx_i}_2$ as a measure of sequence complexity.

\paragraph{Random Fourier Features} Kernel methods allow to implicitly use an infinite-dimensional feature map $\kernelfeatures:\cX \to \Hil$ by evaluation of the inner product $\kernel(\bx,\by)= \langle \kernelfeatures(\bx), \kernelfeatures(\by) \rangle_\Hil$, when $\cH$ is a Hilbert space. This inner product can often be evaluated without direct computation of $\kernelfeatures(\bx)$ and $\kernelfeatures(\by)$ via the kernel trick. Although this makes them a powerful tool due to the resulting flexibility, the price of this flexibility is a trade-off in complexity with respect to the number of samples $N \in \bbZ_+$. Disregarding the price of evaluating the kernel $\kernel(\bx, \by)$ momentarily, kernel methods require the computation of a Gram matrix with $O(N^2)$ entries, that further incurs an $O(N^3)$ computational cost by most downstream algorithms, such as \texttt{KRR} \cite{shawe2004kernel}, \texttt{GP} \cite{rasmussen2006gaussian}, and \texttt{SVM} \cite{scholkopf2002learning}. Several techniques reduce this complexity, and the focal point of this article is the Random Fourier Feature (\RFF) technique of \cite{rahimi2007random, rahimi2008uniform, rahimi2008weighted}, which can be applied to any continuous, bounded, translation-invariant kernel on $\bbR^{d}$.\footnote{A kernel is called translation-invariant if $\kernel(\bx, \by) = \kernel(\bx+\bz, \by+\bz)$ for any $\bx, \by, \bz \in \bbR^{d}$.} Throughout, we write with some abuse of notation $\kernel(\bx-\by) \equiv \kernel(\bx, \by)$. Next, we outline the \RFF{} construction.

A corollary of Bochner's theorem \cite{rudin2017fourier} is that any continuous, bounded, and translation-invariant kernel $\kernel: \bbR^{d} \times \bbR^{d} \to \bbR$ can be represented as the Fourier transform of a non-negative finite measure $\Lambda$ called the spectral measure associated to $\kernel$, i.e. for $\bx, \by \in \cX$
\begin{align} \label{eq:bochner}
    \kernel(\bx - \by) = \int_{\bbR^d} \exp(i \bw^\top(\bx - \by)) \d\Lambda(\bw).
\end{align}
We may, without loss of generality, assume that $\Lambda$ is a probability measure such that $\Lambda(\bbR^{d}) = 0$, which amounts to working with the kernel $\kernel(\bx - \by) / \kernel(\mathbf{0})$. \cite{rahimi2007random} proposed to draw $\dimRFF \in \bbZ_+$ $\iid$ 
samples from $\Lambda$, $\bw_1,\ldots,\bw_{\dimRFF} \stackrel{\iid}{\sim} \Lambda$, to define the random feature map for $\bx \in \cX$ by
\begin{align} \label{eq:rff_def}
    \rff: \cX \to \HilRFF \coloneqq \bbR^{2\dimRFF}, \quad \rff(\bx) \coloneqq \frac{1}{\sqrt{\tilde d}}\pars{\cos\pars{\bW^\top \bx}, \sin{\pars{\bW^\top \bx}}},     
\end{align}
where $\bW = \pars{\bw_i}_{i=1}^{\dimRFF} \in \bbR^{d \times \dimRFF}$. Then, the corresponding random kernel is defined for $\bx, \by \in \cX$ as 
\begin{align} \label{eq:rffkernel_def}
    \rffkernel: \cX \times \cX \to \bbR, \quad \rffkernel(\bx, \by) = \inner{\rff(\bx)}{\rff(\by)}_{\HilRFF} = \frac{1}{\dimRFF} \sum_{i=1}^{\dimRFF} \cos\pars{\bw_i^\top(\bx - \by)} 
\end{align}
to provide a probabilistic approximation to $\kernel$. Indeed, it is a straightforward exercise to check that $\kernel(\bx, \by) = \expe{\rffkernel(\bx, \by)} \approx\rffkernel(\bx, \by)$.
This approximation converges exponentially fast in $\tilde d$ and uniformly over compact subsets of $\bbR^d$ as proven in \cite[Claim~1]{rahimi2007random}. This bound was later tightened and extended to the derivatives of the kernel in the series of works \cite{sriperumbudur2015optimal,szabo2019kernel, chamakh2020orlicz}, and we provide an adapted version under Theorem~\ref{thm:rff_derivative_approx} in the supplement.

\paragraph{Tensors and the tensor product} First, we provide a brief overview of tensors and tensor products of Hilbert spaces, which we will use to construct our feature space called the \emph{free algebra over a Hilbert space}.
The construction we adapt was first proposed by \cite{murray1936rings}.


Let $\cH_1, \dots, \cH_m$ be Hilbert spaces. To each element $(h_1, \dots, h_m) \in \cH_1 \times \cdots \cH_m$, associate the multi-linear operator $h_1 \otimes \cdots \otimes h_m$ defined for each $(f_1, \dots, f_m) \in \cH_1 \times \cdots \times \cH_m$ by
\begin{align}
    (h_1 \otimes \cdots \otimes h_m)(f_1, \dots, f_m) \coloneqq \inner{h_1}{f_1}_{\cH_1} \cdots \inner{h_m}{f_m}_{\cH_m}.
\end{align}
Take the linear span of all such multi-linear operators to build the space
\begin{align}
    \cH_1 \otimes^\p  \cdots \otimes^\p  \cH_m \coloneqq \Span\curls{h_1 \otimes \cdots \otimes h_m \given h_1 \in \cH_1, \dots, h_m \in \cH_m},
\end{align}
and endow $\cH_1 \otimes^\p  \cdots \otimes^\p  \cH_m$ with an inner product via
\begin{align} \label{eq:inner_tensor}
    \inner{h_1 \otimes \cdots h_m}{f_1 \otimes \cdots \otimes f_m}_{\cH_1 \otimes^\p  \cdots \otimes^\p  \cH_m} \coloneqq \inner{h_1}{f_1}_{\cH_1} \cdots \inner{h_m}{f_m}_{\cH_m}
\end{align}
for all $h_1, f_1 \in \cH_1, \dots, h_m, f_m \in \cH_m$, and extend by linearity to $\cH_1 \otimes^\p  \cdots \otimes^\p  \cH_m$. Taking the topological completion of this space under this inner product gives a Hilbert space denoted by $\cH_1 \otimes  \cdots \otimes \cH_m$ called the tensor product of the Hilbert spaces $\cH_1, \dots, \cH_m$. For more details about the tensor product from an algebraic point of view, see \cite{lang2002algebra}.

\paragraph{Free algebras} Now we introduce our feature space $\HilT$. That is, we show how to embed a Hilbert space $\Hil$ into a bigger Hilbert space $\HilT$ which is also an associative algebra\footnote{An algebra $A$ is a vector space $A$, where one can multiply elements together, i.e.~$\ba \bb \in A$ for $\ba, \bb \in A$.} using a so-called free construction. 
Since the tensor product is associative, we can unambiguously take tensor powers of the vector space $\cH$. 
Denoting $\cH^{\otimes m} \coloneqq \cH \otimes \cdots \otimes \cH$, we define the free algebra over $\cH$ as the set of sequences of tensors indexed by their degree $m \in \bbN$,
\begin{align} \label{eq:free_alg_def}
   \bigoplus_{m \geq 0} \Hil^{\otimes m} = \curls{(t_0, \bt_1, \bt_2, \dots) \given \bt_m \in \cH^{\otimes m} \text{ for~} m \in \bbN, \exists n \in \bbN \text{ s.t.~} N \geq n, \bt_N = \b 0},
\end{align}
where $\bigoplus$ is the direct sum operation, $\otimes$ is the tensor product.
For example, if $\Hil = \bbR^d$, then the degree-$1$ component is a $d$-dimensional vector, the degree-$2$ component is a $d \times d$ matrix, the degree-$3$ component is an array of shape $d \times d \times d$. The space $ \bigoplus_{m \geq 0} \Hil^{\otimes m}$ is a vector space with addition and scalar multiplication defined for $\lambda \in \bbR$, $\bs, \bt \in  \bigoplus_{m \geq 0} \Hil^{\otimes m}$ as
\begin{align}
    \bs + \bt \coloneqq \pars{\bs_m + \bt_m}_{m \geq 0}, \quad \lambda \bs \coloneqq \pars{\lambda \bs_m}_{m \geq 0},
\end{align}
and $\Hil$ is a linear subspace of $\bigoplus_{m \geq 0} \Hil^{\otimes m}$ by identifying $\bv \in \Hil$ 
as $(0, \bv, 0, 0, \dots) \in  \bigoplus_{m \geq 0} \Hil^{\otimes m}$. 
Further, $\bigoplus_{m \geq 0} \Hil^{\otimes m}$ is also an associative algebra since it is endowed with a (noncommutative\footnote{Noncommutative means that $\ba \bb \neq \bb \ba$ in general for elements $\ba, \bb \in V$ of the algebra.}) product defined for tensors $\bs, \bt \in \bigoplus_{m \geq 0} \Hil^{\otimes m}$ as
\begin{align}
    \bs \bt = \pars{\sum_{i=0}^m \bs_i \otimes \bt_{m-i}}_{m \geq 0} \in  \bigoplus_{m \geq 0} \Hil^{\otimes m}.
\end{align}
This process of turning $\Hil$ into an algebra $\bigoplus_{m \geq 0} \Hil^{\otimes m}$ is a free construction; informally this means that \eqref{eq:free_alg_def} is the minimal structure that turns $\Hil$ into an algebra; for more details about free algebras, see \cite{yokonuma1992tensor, reutenauer2003free}.
We now define for $\bs, \bt \in \bigoplus_{m \geq 0} \Hil^{\otimes m}$ their inner product as
\begin{align}\label{ref:hilbert inner product}
    \inner{\bs}{\bt}_{\bigoplus_{m \geq 0} \Hil^{\otimes m}} = \sum_{m \geq 0} \inner{\bs_m}{\bt_m}_{\cH^{\otimes m}},
\end{align}
where the inner product $\inner{\bs_m}{\bt_m}_{\cH^{\otimes m}}$ on $\cH^{\otimes m}$ is as in \eqref{eq:inner_tensor}.
Finally, the completion of $\bigoplus_{m \geq 0} \Hil^{\otimes m}$ in this inner product gives a Hilbert space $\HilT$, which is equivalently defined  as
\begin{align} \label{eq:hilt_def}
    \HilT = \{ \bt = (t_0,\bt_1,\bt_2,\ldots): \bt_m \in \Hil^{\otimes m}, \,\langle \bt, \bt \rangle_{\HilT} < \infty\}.
\end{align}

\paragraph{Path Signatures} A classic way to obtain a structured and hierarchical description of a path $\bx:[0,T] \to \R^d$ is by computing a sequence of iterated integrals called the path signature of $\bx$ given as tensors of increasing degrees $m \in \bbN$ such that the degree-$m$ object is
\begin{align} \label{eq:pathsig}
    S_m(\bx) \coloneqq \idotsint\limits_{0<t_1<\cdots<t_m<T} \d\bx(t_1) \otimes \cdots \otimes \d\bx(t_m) = \idotsint\limits_{0<t_1<\cdots<t_m<T} \dot \bx(t_1) \otimes \cdots \otimes\dot \bx(t_m) \d t_1\cdots \d t_m.
\end{align}

Formally, we refer to the map that takes a path to its iterated integrals, $S: \texttt{Paths} \to \HilT$, $S(\bx) \coloneqq \pars{1, S_1(\bx), S_2(\bx), \ldots}$
as the path signature map. The domain of $S$ is a space of paths that are regular enough such that the integrals are well-defined. 
Its feature space is given by applying the above construction of $\HilT$ in \eqref{eq:hilt_def} to $\Hil=\R^d$ with the Euclidean inner product. 

Among the attractive properties of $S$ is that it linearizes nonlinear functions of paths, that is for any continuous function $f$ one can find a linear functional $\bw$ of $S$ such that
\begin{align} \label{eq:approx}
f(\bx) \approx \langle \bw, S(\bx) \rangle
\coloneqq 
\sum_{m{\ge 0}} \sum_{i_1,\ldots,i_m {\in [d]}} w_{i_1,\ldots,i_m} \int \dot \bx^{i_1}(t_1) \cdots \dot \bx^{i_m}(t_m) \d t_1 \cdots \d t_m,
\end{align}
where \eqref{eq:approx} $w_1, \ldots, w_d, w_{1,1}, \ldots, w_{d,d}, \ldots, w_{d,\ldots,d} \in \R$ denote the coordinates of $\bw$, and the approximation holds uniformly on compacts \cite[Theorem II.5]{fliess1981fonctionnelles} whenever the path $\bx$ includes time as a coordinate\footnote{{This means that} $\bx^i(t)=t$ for some $i{\in[d]}$; more generally, a strictly increasing coordinate is sufficient.}.
The same results generalize to paths without an increasing coordinate up to reparametrization (i.e.~time-warping) and backtracking, formally called ``tree-like'' equivalence, see \cite{hambly2010uniqueness}.
Moreover, these iterated integrals can be well-defined beyond the setting of smooth paths; for example, the same results extend to Brownian motion, semimartingales, and even rougher paths. 
Rough path theory provides a systematic study that comes with a rich toolbox, that combines analytic and algebraic estimates, rich enough to cover the trajectories of large classes of stochastic processes; see \cite{lyons2014rough,friz2020course} for an introduction.
Informally, iterated integrals of paths can be seen as a generalization of classical monomials and from this perspective, the approximation \eqref{eq:approx} can be regarded as the extension of classic polynomial regression to path-valued data.
Thus at least informally it is not surprising, that vanilla signature features suffer from similar drawbacks as classic monomial features; for example, if classic monomials are replaced by other nonlinearities this often drastically improves the approximations; see e.g.~\cite{toth2020bayesian, salvi2021signature}, where precomposing the signature with the \texttt{RBF} kernel increases learning performance.

\paragraph{Signature Features for Sequential Data}
A challenge in machine learning when constructing feature maps for datasets of sequences is that the sequence length can vary from instance to instance; the space of sequences $\seq = \curls{(\bx_i)_{i=1}^\ell \given \bx_1, \dots, \bx_\ell \in \cX \spc \text{and} \spc \ell \in \bbZ_+}$ includes sequences of various lengths, and they should all get mapped to the same feature space, while preserving the information about the elements themselves and their ordering. A concatenation property of path signatures called Chen's identity \cite[Thm.~2.9]{lyons2007differential}
turns concatentation into multiplication provides a principled approach to construct features for sequences. Below we recall the construction of discrete-time signatures based on \cite{toth2021seq2tens}.

The key idea is to define the discrete-time signature of $1$-step increments, and then glue features together by algebra multiplication to guarantee that the Chen identity holds by construction.
Now assume we are given a static feature map $\kernelfeatures: \cX \to \Hil$ into some Hilbert space $\Hil$. Our task is to construct from this feature map for elements of $\cX$, a feature map for sequences of arbitrary length in $\cX$.
A natural first step is to apply the feature map $\kernelfeatures$ elementwise to a sequence $\bx \in \seq$ to lift it to a sequence into the feature space $\Hil$ of $\kernelfeatures$, $\kernelfeatures(\bx) \coloneqq \pars{\kernelfeatures(\bx_i)}_{i=1}^{\ell_\bx} \in \seqH$. The challenge is now to construct a feature map for sequences in $\Hil$.
Simple aggregation of the individual features fails; e.g.~summation of the individual features $\kernelfeatures(\bx_i)$ would lose the order information, vectorization $\pars{\kernelfeatures(\bx_1), \ldots, \kernelfeatures(\bx_{\ell_\bx})} \in \Hil^{\ell_\bx}$ would make sequences of different length not comparable.
It turns out that multiplication is well-suited for this task in a suitable algebra.

Fortunately, there is a natural way to embed any Hilbert space $\Hil$ into a larger Hilbert space $\HilT$ that is also a non-commutative algebra.
First, we take the 1\textsuperscript{st}-order differences,
\begin{align} \label{eq:seq_diff}
\bx \mapsto \delta \kernelfeatures(\bx) \coloneqq \pars{\kernelfeatures(\bx_{i+1}) - \kernelfeatures(\bx_i)}_{i=1}^{\ell_\bx - 1} \in \Hil^{\ell_\bx-1},\quad \text{where} \spc \bx \in \seq
\end{align}
since it is more natural to keep track of changes rather than absolute values.
Then we identify $\Hil$ as a subset of $\HilT$. 
The simplest choice given the above construction of $\HilT$ is
\begin{align} \label{eq:embed_1px}
\iota: \bh \mapsto (1, \bh, \mathbf{0}, \mathbf{0}, \ldots) \in \HilT \quad \text{where} \spc \bh \in \Hil.
\end{align}

A direct calculation shows that composing the maps \eqref{eq:seq_diff}, \eqref{eq:embed_1px}, and multiplying the individual entries in $\HilT$ results in a sequence summary using all non-contiguous subsequences, since in each multiplication step a sequence entry is either selected once or not at all.
This gives rise to the discretized signatures $\signature: \seq \to \HilT$ for $\bx \in \seq$ with $\ell_\bx \geq 2$:
\begin{align} \label{eq:sig_feat}
    \signature(\bx) \coloneqq \prod_{i=1}^{\ell_\bx-1} \iota(\delta\kernelfeatures(\bx_i)) = \pars{\sum_{\bi \in \Delta_m(\ell_\bx - 1)} \delta \kernelfeatures(\bx_{i_1}) \otimes \cdots \otimes \delta \kernelfeatures(\bx_{i_m})}_{m \geq 0},
\end{align}
where $\Delta_m: \bbZ_+ \to \bbZ_+^m$
is as defined in \eqref{eq:delta_m_def} and $\bi = (i_1, \dots, i_m)$. Thus, the sequence feature is itself a sequence, however, now a sequence of tensors indexed by their degree $m \in \bbN$ in contrast to being indexed by the time index $i \in [\ell_\bx]$.
These sequence features are invariant to a natural transformation of time series called time-warping, but can also be made sensitive to it by including time as an extra coordinate with the mapping $\bx = (\bx_i)_{i=1}^{\ell_\bx} \mapsto (t_i, \bx_i)_{i=1}^{\ell_\bx}$. It also possesses similar approximation properties to path signatures in \eqref{eq:approx}, i.e.~uniform approximation of functions of sequences on compact sets; see Appendices A and B in \cite{toth2021seq2tens}.

Despite the abstract derivation, the resulting feature map $\signature$ is---in principle---explicitly computable when $\Hil=\R^d$; see \cite{kidger2021signatory} for details.
However, when the static feature map $\kernelfeatures$ is high- or infinite-dimensional, this is not feasible and we discuss a kernel trick further below.

\begin{remark} \label{remark:truncation}
We used the map $\iota$, as defined in \eqref{eq:embed_1px}, to embed $\Hil$ into $\HilT$.
Other choices are possible, for example one could use the embedding $\hat\iota: \Hil \to \HilT$ for $\bh \in \Hil$
\begin{align}\label{eq:embed_exp}
  \hat\iota(\bh) \coloneqq \pars{1, \bh, \frac{\bh^{\otimes 2}}{2!}, \frac{\bh^{\otimes 3}}{3!},\ldots} {\in\HilT}.
\end{align}
This embedding is actually the classical choice in mathematics, but different choices of the embedding lead to, besides potential improvements in benchmarks, mildly different computational complexities and interesting algebraic questions  \citep{diehl2023generalized,toth2021seq2tens, toth2022capturing}.
\end{remark}

Finally, it can be useful to only consider the first $M \in \bbZ_+$ tensors in the series $\signature(\bx)$ analogously to using the first $M$ moments in classic polynomial regression to avoid overfitting. Hence, we define the $M$-truncated signature features for $M \in \bbZ_+$ as
\begin{align}\label{eq:sig trunc}
 \signature[\leq M](\bx) \coloneqq \pars{1,\signature[1](\bx),\ldots,\signature[M](\bx), \mathbf{0}, \mathbf{0}, \dots } \quad \text{for} \spc \bx \in \seq,
\end{align}
where $\signature[m](\bx)$ is the projection of $\signature(\bx)$ onto $\Hil^{\otimes m}$.
In practice, we regard $M \in \bbZ_+$, and the choice of the embedding as hyperparameters to optimize.

\paragraph{Signature Kernels}
The signature is a powerful feature set for nonlinear regression on paths and sequences. 
A computational bottleneck associated with it is the dimensionality of the feature space $\HilT$. As we are dealing with tensors, for $\Hil$ finite-dimensional $\signature[m](\bx)$ is a tensor of degree-$m$ which has $\pars{\dim \Hil}^{m}$ coordinates that need to be computed. This can quickly become computationally expensive. For infinite-dimensional $\Hil$, e.g.~when $\Hil$ is a reproducing kernel Hilbert space (\RKHS), which is one of the most interesting settings due to the modelling flexibility, it is infeasible to directly compute $\signature$. 
In \cite{kiraly2019kernels}, the signature kernel was introduced, and it was shown that a kernel trick allows to compute the inner product of signature features up to a given degree $M \in \bbZ_+$ using dynamic programming, even when $\Hil$ is infinite-dimensional. 
Subsequently, \cite{salvi2021signature} proposed a PDE-based algorithm to approximate the untruncated signature kernel, which was further extended in \cite{cass2021general}, 
and we refer to \cite{lee2023signature} for a recent overview of signature kernels. 
Here, we focus on discrete-time, and our starting point is the approach of \cite{kiraly2019kernels} combined with the non-geometric approximation \cite{diehl2023generalized} resulting in the features \eqref{eq:sig_feat}.

Above we described a generic way to turn a static feature map $\kernelfeatures: \cX \to \Hil$ into a feature map $\signature[\leq M](\bx)$ for sequences, see \eqref{eq:sig_feat}. The signature kernel is a powerful formalism that allows to transform any static kernel on $\cX$ into a kernel for sequences that evolve in $\cX$. Let $\kernel: \cX \times \cX \to \bbR$ be a continuous and bounded kernel, and from now on, let $\Hil$ denote its \RKHS, and $\kernelfeatures(\bx) \coloneqq \kernel_\bx \equiv \kernel(\bx, \cdot)$ the associated reproducing kernel lift for $\bx \in \cX$. We define the $M$-truncated (discretized) signature kernel $\sigkernel[\leq M]: \seq \times \seq \to \bbR$ for $M \in \bbZ_+$ as
\begin{align}
    \sigkernel[\leq M](\bx, \by)
    &\coloneqq
    \inner{\signature[\leq M](\bx)}{\signature[\leq M](\by)}_{\HilT}
    =
    \sum_{m=0}^M \inner{\signature[m](\bx)}{\signature[m](\by)}_{\Hil^{\otimes m}}
    \\
    &=
    \sum_{m=0}^M \sigkernel[m](\bx, \by)
    =
    \sum_{m=0}^M \sum_{\substack{\bi \in \Delta_m(\ell_\bx - 1)\\\bj \in \Delta_m(\ell_\by-1)}} \delta^2_{i_1, j_1} \kernel(\bx_{i_1}, \by_{j_1}) \cdots \delta^2_{i_m, j_m} \kernel(\bx_{i_m}, \by_{j_m}), \label{eq:sigkernel_def}
\end{align}
where we defined the level-$m$ (discretized) signature kernel $\sigkernel[m]: \seq \times \seq \to \bbR$ for $m \in [M]$ as $\sigkernel[m](\bx, \by) \coloneqq \inner{\signature[m](\bx)}{\signature[m](\by)}_{\Hil^{\otimes m}}$, and $\delta^2$ denotes a 2\textsuperscript{nd}-order cross-differencing operator such that $\delta^2_{i, j} \kernel (\bx_i, \by_j) \coloneqq \kernel(\bx_{i+1}, \by_{j+1}) - \kernel(\bx_{i+1}, \by_{j}) - \kernel(\bx_{i}, \by_{j+1}) + \kernel(\bx_{i}, \by_{j})$ for $i \in [\ell_\bx-1]$ and $j \in [\ell_\by-1]$.
The key insight by \cite{kiraly2019kernels} is equation \eqref{eq:sigkernel_def}, i.e.~that $\sigkernel[\leq M]$ can be computed\footnote{The computation can be carried out exactly for finite $M$ and approximately for $M=\infty$.} without computing $\signature[\leq M]$ itself by a kernel trick that only uses kernel evaluations.

The kernel hyperparameters are the choice of the static kernel $\kernel$, for which there is a wide range of options, e.g.~for $\cX = \bbR^d$ the Gaussian, exponential or Mat{\'e}rn family of kernels; any hyperparameters that $\kernel$ comes with, such as the bandwidth; the truncation level $M \in \bbZ_+$; the choice of the algebra embedding, e.g. \eqref{eq:embed_1px} or \eqref{eq:embed_exp}; and the choice of kernel normalization \cite{chevyrev2022signature} that scales each level $\sigkernel[m]$ appropriately. It also comes with nice theoretical guarantees such as analytic estimates when sequences converge to paths, its maximum mean discrepancy (MMD) metrizes classic topologies for stochastic processes, and can lead to robust statistics in the classic statistical sense (B-robustness); see \cite{chevyrev2022signature} for details.

Although \eqref{eq:sigkernel_def} looks expensive to compute, \cite{kiraly2019kernels} applies dynamic programming to efficiently compute $\sigkernel[\leq M]$ using a recursive algorithm; an alternative algorithm is the above mentioned approach of approximating the (untruncated) signature kernel $\sigkernel$ using PDE-discretization. Importantly, \eqref{eq:sigkernel_def} avoids computing tensors, and only depends on the entry-wise evaluations of the static kernel $\kernel(\bx_i, \by_j)$. Indeed, this leads to a computational cost of $O((M + d) \ell_\bx \ell_\by)$, which is feasible for sequences evolving in high-dimensional state-spaces, but only with moderate sequence length. Note that the same bottleneck applies to PDE-based approaches. In part, the aim of this article is to alleviate this quadratic cost in sequence length, while approximately enjoying the modelling capability of working within an infinite-dimensional \RKHS. 

\section{Random Fourier Signature Features} \label{sec:RFSF}
The goal of this section is to build random features for sequences, that enjoy the benefit of linear sequence length and low-dimensional feature complexity with theoretical guarantees that the corresponding inner product is close to the $M$-truncated (discretized) signature kernel $\sigkernel[\leq M]$ with high probability. We construct these random features in a two step process: firstly, we reduce the feature space from infinite to finite (but high) dimensionality through a careful construction using random Fourier features (RFFs), and in the second step we apply further dimensionality reduction to reduce the complexity to an even lower dimensional space in order to aid in scalability. Although we present this construction as conceptually distinct steps, the steps are coupled during the computation, and the features can be computed directly without going through the initial step.

\paragraph{From infinite to finite dimensions} In Section~\ref{sec:prereq}, we recalled the \RFF{} construction, which associates to a continuous, bounded, translation-invariant kernel $\kernel: \cX \times \cX \to \bbR$ on $\cX$ a spectral measure $\Lambda$, and approximates $\kernel$ by drawing samples from $\Lambda$ to define the random features $\rff: \cX \to \HilRFF$ \eqref{eq:rff_def}, and the random kernel $\rffkernel: \cX \times \cX \to \bbR$ \eqref{eq:rffkernel_def}. Afterwards, we presented a generic way to turn any such static features $\rff: \cX \to \HilRFF$ for elements of $\cX$ into sequence features for sequences that evolve in $\cX$ via $\signature[\leq M]: \seq \to \HilT$. Applying this construction with the \RFF{} as feature map on $\cX$ would already result in a random feature map for sequences, i.e.~a map from $\seq$ into $\HilRFFT$.
Taking the inner product in $\HilRFFT$ of this new random feature map for sequences would, however, only yield a biased estimator for the truncated signature kernel $\sigkernel[\leq M]$.
We correct for this bias by revisiting our previous construction, and build an unbiased approximation to $\sigkernel[\leq M]$ using independent \RFF{} copies in each tensor multiplication step. Then, we show in Theorem \ref{thm:main} that this random estimator comes with good probabilistic guarantees.

The probabilistic construction procedure is outlined in the following definition.

\begin{definition} \label{def:rffsig_def}
Let $\bW^{(1)}, \dots, \bW^{(M)} \stackrel{\iid}{\sim} \Lambda^{\dimRFF}$ be $\iid$ random matrices sampled from $\Lambda^{\dimRFF}$ for \RFF{} dimension $\dimRFF \in \bbZ_+$, and define the independent \RFF{} maps $\rff_m: \cX \to \HilRFF$ as in \eqref{eq:rff_def}, i.e.~$\rff_m(\bx) = \frac{1}{\sqrt{\dimRFF}}\pars{\cos({{\bW^{(m)}}^\top} \bx), \sin({\bW^{(m)}}^\top \bx)}$ for $m \in \bracks{M}$ and $\bx \in \cX$. The $M$-truncated Random Fourier Signature Feature (\RFSF) map $\rffsig[\leq M]: \seq \to \HilRFFT$ from sequences in $\cX$ into the free algebra over $\HilRFF$ is defined for truncation level $M \in \bbZ_+$ and $\bx \in \seq$ as
\begin{align}
\rffsig[\leq M](\bx) \coloneqq \pars{\sum_{\bi \in \Delta_m(\ell_\bx-1)} \delta \rff_1(\bx_{i_1}) \otimes \cdots \otimes \delta \rff_m(\bx_{i_m})}_{m=0}^M.
\label{eq:rffsigdef}
\end{align}
Further, the \RFSF{} kernel $\rffsigkernel[\leq M]: \seq \times \seq \to \bbR$ can be computed for $\bx, \by \in \seq$ as
\begin{align}
    \rffsigkernel[\leq M](\bx,\by) &\coloneqq \inner{\rffsig[\leq M](\bx)}{\rffsig[\leq M](\by)}_{\HilRFFT}
    =
    \sum_{m=0}^M \inner{\rffsig[m](\bx)}{\rffsig[m](\by)}_{\HilRFF^{\otimes m}}
    \\
    &=
    \sum_{m=0}^M \rffsigkernel[m](\bx, \by)
    =
    \sum_{m=0}^M \sum_{\substack{\bi \in \Delta_m(\ell_\bx -1)\\\bj \in \Delta_m(\ell_\by-1)}} \delta^2_{i_1, j_1} \tilde\kernel_1(\bx_{i_1}, \by_{j_1}) \cdots \delta^2_{i_m, j_m} \tilde\kernel_m(\bx_{i_m}, \by_{j_m}), \label{eq:rffsigkernel_def}
\end{align}
where we defined the level-$m$ \RFSF{} kernel $\rffsigkernel[m]: \seq \times \seq \to \bbR$ for $m \in \bbN$ as $\rffsigkernel[m](\bx, \by) \coloneqq \inner{\rffsig[m](\bx)}{\rffsig[m](\by)}_{\HilRFF^{\otimes m}}$ with the convention that $\rffsigkernel[0] \equiv 1$, and $\rffkernel_1, \dots, \rffkernel_M: \cX \times \cX \to \bbR$ are independent \RFF{} kernels defined as in \eqref{eq:rffkernel_def} with the random weights $\bW^{(1)}, \dots, \bW^{(M)} \in \bbR^{d \times \dimRFF}$.
\end{definition}

Since the feature map $\rffsig[\leq M]$ can be directly evaluated in the feature space recursively, $\rffsigkernel[\leq M]$ has linear complexity in the sequence length. However, it requires computing high-dimensional tensors, where the degree-$m$ component $\rffsig[m](\bx) \in \HilRFF^{\otimes m}$ has $(\dim \HilRFF)^m = (2\dimRFF)^m$ coordinates, making it infeasible for large $m, \dimRFF \in \bbZ_+$. Remark \ref{remark:alg_RFSF} discusses the computational complexity in detail. Further, note that the kernel can be evaluated by means of a kernel trick exactly analogously to the evaluation of \eqref{eq:sigkernel_def}, but in this case there are no computational gains compared to the infinite-dimensional signature kernel $\sigkernel[\leq M](\bx, \by)$.

Next, we provide a theoretical analysis to show that the random kernel $\rffsigkernel[\leq M](\bx, \by)$ converges to the ground truth signature kernel $\sigkernel[\leq M](\bx, \by)$ exponentially fast and uniformly over compact state-spaces $\cX \subseteq \bbR^d$, generalizing the result \cite[Claim~2]{rahimi2007random} to this non-Euclidean domain of sequences.
Throughout the analysis, we need certain regularity properties of $\Lambda$ in order to invoke quantitative versions of the law of large numbers, i.e.~properties such as boundedness, existence of the moment-generating function, moment-boundedness, or belonging to certain Orlicz spaces of random variables. Boundedness of the spectral measure is too restrictive an assumption, since a continuous, bounded, translation-invariant kernel $\kernel: \cX \times \cX \to \bbR$ is characteristic if and only if the support of its spectral measure is $\bbR^d$, see \cite[Prop.~8]{sriperumbudur2010relation}. Hence, we instead work with the assumption that its moments are well-controllable, i.e.~the tails of the distribution are not ``too heavy''. Specifically, we assume the Bernstein moment condition that
\begin{align} \label{eq:w_cond_main}
    \bbE_{\bw \sim \Lambda}\bracks{w_i^{2m}} \leq \frac{m! S^2 R^{m-2}}{2} \quad \text{for all} \spc i \in [d]
\end{align}
for some $S, R > 0$. We show in the Supplementary Material under Lemmas \ref{lem:alpha_bernstein_cond} and \ref{lem:alpha_exp_norm_to_bernstein}, in a more general context, that this is equivalent to $\Lambda$ being a sub-Gaussian probability measure; see e.g.~\cite[Sec~2.3]{boucheron2013concentration} and \cite[Sec.~2.5]{vershynin2018high} about sub-Gaussianity. This of course includes the spectral measure of the Gaussian kernel defined for bandwidth $\sigma > 0$ and $\bx, \by \in \cX$ $\kernel(\bx, \by) = \exp\pars{-\nicefrac{\norm{\bx - \by}_2^2}{2\sigma^2}}$, which has a Gaussian spectral distribution $\bw \sim \cN\pars{0, \nicefrac{1}{\sigma^2} \b I_d}$, and therefore calculation gives $\bbE_{w \sim \cN\pars{0, \nicefrac{1}{\sigma^2}}}\bracks{w^{2m}} = \frac{2^m \Gamma\pars{m + \frac{1}{2}})}{ \sigma^{2m}\sqrt{\pi}} < \frac{m!}{2} \pars{\frac{2\sqrt{2}}{\sigma^2 \sqrt[4]{\pi}}}^2\pars{\frac{2}{\sigma^2}}^{m-2}$,
since $\Gamma\pars{m + \nicefrac{1}{2}} < \Gamma(m + 1) = m!$. Hence $\Lambda$ satisfies condition \eqref{eq:w_cond_main} with $S, R$ as given here.
Now we state our approximation theorem regarding $\rffsigkernel[m]$, which quantifies that it is a (sub-)exponentially good estimator of $\sigkernel[m]$ with high probability and uniformly.

\begin{theorem} \label{thm:main}
    Let $\kernel: \bbR^d \times \bbR^d \to \bbR$ be a continuous, bounded, translation-invariant
    kernel with spectral measure $\Lambda$, which satisfies \eqref{eq:w_cond_main}.
    Let $\cX \subset \bbR^d$ be compact and convex with diameter $\abs{\cX}$, $\cX_\Delta \coloneqq \{\bx - \by : \bx, \by \in \cX \}$. Then, the following quantities are finite: $\sigma_\Lambda^2 \coloneqq \bbE_{\bw \sim \Lambda}\bracks{\norm{\bw}_2^2}$, $L \coloneqq \norm{\bbE_{\bw \sim \Lambda}\bracks{\bw \bw^\top}}_2^{1/2}$, $E_{i,j} \coloneqq \bbE_{{\bw \sim \Lambda}}\bracks{\abs{w_i w_j} \norm{\bw}_2}$ and $D_{i, j} \coloneqq \sup_{\bz \in \cX_\Delta} \norm{\nabla \bracks{\frac{\partial^2\kernel(\bz)}{\partial z_i \partial z_j}}}_2$ for $i, j \in [d]$. Further, for any max.~sequence $1$-var $V>0$, and signature level $m \in \mathbb{Z}_+$, for $\epsilon > 0$
    \begin{align}
        \bbP & \bracks{\sup_{\substack{\bx, \by \in \seq \\ \norm{\bx}_\onevar, \norm{\by}_\onevar \leq V}} \abs{\sigkernel[m](\bx, \by) - \rffsigkernel[m](\bx, \by)} \geq \epsilon } \le
        \\
        &\leq
        m
        \begin{cases}
        \pars{C_{d, \cX} \pars{\frac{\beta_{d, m, V}}{\epsilon}}^\frac{d}{d+1} + d}
        \exp\pars{-\frac{\dimRFF}{2(d+1)(S^2 + R)} \pars{\frac{\epsilon}{\beta_{d, m, V}}}^{2}} \,&\text{ for }\, \epsilon < \beta_{d, m, V}  \\
        \pars{C_{d, \cX} \pars{\frac{\beta_{d, m, V}}{\epsilon}}^{\frac{d}{(d+1)m}} + d}
        \exp\pars{-\frac{\dimRFF}{2(d+1)(S^2 + R)} \pars{\frac{\epsilon}{\beta_{d, m,v}}}^{\frac{1}{m}} } \,&\text{ for }\, \epsilon \geq \beta_{d, m, V},
        \end{cases}
        \label{eq:mainthm_bound}
    \end{align}
    where $C_{d, \cX} \coloneqq 2^\frac{1}{d+1} 16 \abs{\cX}^\frac{d}{d+1} \sum_{i,j=1}^d (D_{i,j} + E_{i,j})^\frac{d}{d+1}$ and $\beta_{d, m, V} \coloneqq m \pars{2 V^{2} \pars{L^2 \vee 1} \pars{\sigma_\Lambda^2 \vee d}}^m$.
\end{theorem}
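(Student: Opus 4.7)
The plan is to reduce the signature-kernel approximation to the (already well-studied) uniform \RFF{} approximation of $\kernel$ and its derivatives at points in $\cX_\Delta$, exploiting that the $m$ independent \RFF{} batches $\bW^{(1)},\ldots,\bW^{(M)}$ are independent across the telescoping levels. My starting observation is the product-difference identity $\prod_{l=1}^m a_l - \prod_{l=1}^m b_l = \sum_{k=1}^m \bigl(\prod_{l<k} b_l\bigr)(a_k - b_k)\bigl(\prod_{l>k} a_l\bigr)$, applied inside the double sum \eqref{eq:rffsigkernel_def} with $a_l = \delta^2_{i_l,j_l}\kernel(\bx_{i_l},\by_{j_l})$ and $b_l = \delta^2_{i_l,j_l}\rffkernel_l(\bx_{i_l},\by_{j_l})$. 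This decomposes $\sigkernel[m](\bx,\by)-\rffsigkernel[m](\bx,\by)$ into $m$ hybrid terms, each of which has exactly one factor of the approximation error $g_k \coloneqq \kernel-\rffkernel_k$ while otherwise involving only $\kernel$ or a single independent $\rffkernel_l$.

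Next, I would bound each second-order cross-difference by two applications of the fundamental theorem of calculus: for any smooth translation-invariant $g$, $\abs{\delta^2_{i,j}g(\bx_i,\by_j)} \le \sup_{\bz\in\cX_\Delta}\norm{\nabla^2 g(\bz)}_2 \,\norm{\delta\bx_i}_2\,\norm{\delta\by_j}_2$. For $g=\kernel$, Bochner's representation \eqref{eq:bochner} yields $\sup_{\bz}\norm{\nabla^2\kernel(\bz)}_2 \le L^2$; for $g=\rffkernel_l$ the Hessian is random but deterministically controlled by $\frac{1}{\dimRFF}\sum_i \norm{\bw_i^{(l)}}_2^2$, which concentrates sub-exponentially around $\sigma_\Lambda^2$ under the Bernstein condition \eqref{eq:w_cond_main} (giving the $\sigma_\Lambda^2\vee d$ factor after absorbing the deviation); and the key quantity $\eta_k \coloneqq \sup_{\bz\in\cX_\Delta}\norm{\nabla^2 g_k(\bz)}_2$ is controlled through Theorem~\ref{thm:rff_derivative_approx} --- the derivative-uniform version of the classical \RFF{} bound, whose proof combines an $\epsilon$-net covering of $\cX_\Delta$ with Bernstein-type concentration using \eqref{eq:w_cond_main} --- producing a sub-Gaussian tail of the form $\prob{\eta_k \geq t} \lesssim (\text{const}/t)^{d/(d+1)}\exp(-c\,\dimRFF\, t^2)$. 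Summing over ordered index tuples via $\sum_{\bi \in \Delta_m(\ell-1)}\prod_{l=1}^m\norm{\delta\bx_{i_l}}_2 \le V^m/m!$ (and symmetrically in $\bj$), each hybrid term is bounded by $\eta_k$ times the product of the Hessian norms of the $m-1$ other kernels times $V^{2m}/(m!)^2$; a union bound over $k\in[m]$ then produces both the leading $m$ in \eqref{eq:mainthm_bound} and the product of $m$ Hessian factors $(L^2\vee 1)(\sigma_\Lambda^2\vee d)$ inside $\beta_{d,m,V}$.

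The two regimes in \eqref{eq:mainthm_bound} arise by inverting the exponential failure probability of the combined events: in the small-$\epsilon$ regime the single error factor $\eta_k$ dominates so $\eta_k\sim \epsilon$, yielding the $(\epsilon/\beta)^2$ tail exponent and the $d/(d+1)$ covering exponent; in the large-$\epsilon$ regime all $m$ Hessian norms in the product must be controlled simultaneously, forcing $\eta_k \sim \epsilon^{1/m}$ and the $(\epsilon/\beta)^{1/m}$ tail with covering exponent $d/((d+1)m)$. The principal obstacle, as I see it, is the careful bookkeeping of two coupled high-probability events --- the $m$ uniform derivative errors $\eta_k$ and the $m$ random Hessian magnitudes $\norm{\nabla^2\rffkernel_l}_2$ --- which decouple across $k$ by the independence of $\bW^{(k)}$, but whose optimized covering radii, sub-Gaussian tail exponents, and the regime split at $\epsilon = \beta_{d,m,V}$ must all be aligned to produce the single uniform formula \eqref{eq:mainthm_bound} with exactly the stated $\beta_{d,m,V}$ and $C_{d,\cX}$.
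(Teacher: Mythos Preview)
Your proposal is correct and essentially coincides with the paper's proof: the paper packages the decomposition as a recursion on the signature level (Lemma~\ref{lem:RFSF_approx}), whose unrolling is precisely your product-difference identity, so that the $k$-th summand carries $k-1$ random RFF-Hessian factors $\prod_{p<k}\|\bW^{(p)}\|_2^2/\dimRFF$, one derivative-error factor handled by Theorem~\ref{thm:rff_derivative_approx}, and $m-k$ deterministic $L^2$ factors; the remaining ingredients (Bernstein for the random norms, union bound over $k$, and the regime split) are the same. One small sharpening of your intuition for the large-$\epsilon$ regime: the $(\epsilon/\beta)^{1/m}$ exponent is driven by the worst summand $k=m$, where $m-1$ random factors multiply a single error $\eta_m$, and the paper decouples that product via AM-GM followed by a $(k-1)$-th root and the splitting parameter $\tau=t^{(k-1)/k}$---not by all $m$ errors being simultaneously large.
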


The proof is provided in the supplement under Theorem \ref{thm:rfsf_approx}. The result shows that the random kernel $\rffsigkernel[m]$ approximates the signature kernel $\sigkernel[m]$ uniformly over subsets of $\seq$ of sequences $\bx \in \seq$ with maximal $1$-variation $V$, $\norm{\bx}_\onevar \leq V$, assuming that the state-space $\cX \subset \bbR^d$ is a convex and compact domain.
The error bound is analogous to the classic \RFF{} bounds, in the sense that the tail probability decreases exponentially fast as a function of the \RFF{} dimension $\dimRFF$. The functional form of the bound is inherited from Theorem \ref{thm:rff_derivative_approx}, which provides an analogous result for the derivatives of \RFF. This link follows from Lemma \ref{lem:RFSF_approx}, which connects the concentration of the \RFSF{} kernel to the second derivatives of \RFF.

The main difference from the classic case, i.e.~\cite[Claim~1]{rahimi2007random} and Theorem \ref{thm:rff_derivative_approx}, is the appearance of $\beta_{d,m,V}$ which controls a regime change in the tail behaviour. Concretely, for $\epsilon < \beta_{d, m, V}$ \eqref{eq:mainthm_bound} has a polynomial plus a sub-Gaussian tail, while for $\epsilon > \beta_{d, m, V}$ has a $\pars{\nicefrac{1}{m}}$-subexponential tail.
This is not surprising as the inner summand in \eqref{eq:rffsigkernel_def} is the $m$-fold tensor product of $m$ independent \RFF{} kernels, which makes the tail heavier exactly by an exponent of $\nicefrac{1}{m}$.
The constant itself, $\beta_{d,m,V}$, depends on \begin{enumerate*}[label=(\roman*)] \item the maximal sequence 1-variation $V$, which measures a notion of time-warping invariant sequence complexity;
\item the Lipschitz constant of the kernel $L$ (see Examples \ref{example:2ndmoment_lip} and \ref{example:rff_lip}); \item the trace of the second moment of $\Lambda$, $\sigma_\Lambda^2 = \bbE_{\bw \sim \Lambda}\bracks{\norm{\bw}_2^2}$; \item the state-space dimension $d$; \item and the signature level $m$ itself. \end{enumerate*} 

\begin{remark} \label{remark:alg_RFSF}
    Algorithm \ref{alg:rfsf} demonstrates the computation of the \RFSF{} map $\rffsig[\leq M]$ given a dataset of sequences $\bX = (\bx_i)_{i=1}^{N} \subset \seq$. Upon inspection, we can deduce that the algorithm has a computational complexity of $O\pars{N \ell (M d \dimRFF + 1 + \dimRFF + \ldots + \dimRFF^M)}$. Importantly, it is linear in $\ell$, the sequence length, although scales polynomially in the \RFF{} sample size $\dimRFF^M$.
\end{remark}

\paragraph{Dimensionality Reduction: Diagonal Projection} Previously, we introduced a featurized approximation $\rffsigkernel[\leq M]$ to the signature kernel $\sigkernel[\leq M]$, called the \RFSF{} kernel, which reduces the computation from the infinite-dimensional \RKHS{} to a finite-dimensional feature space using random tensors. Although this makes the computation in the feature space viable of the \RFSF{} map $\rffsig[\leq M]$, it is still tensor-valued, which incurs a computational cost of $O(\dimRFF + \dimRFF^2 + \cdots + \dimRFF^m)$ in the \RFF{} dimension $\dimRFF \in \bbZ_+$. Now, we take another step towards scalability and apply further dimensionality reduction. By examining the structure of these tensors, we introduce a diagonally projected variant called \RFSFD{} that considerably reduces their sizes. We emphasize that the above \RFSF{} construction is the crucial step: it approximates the inner product in an infinite-dimensional space, and now we further approximate it in an even lower dimensional space. The benefit is that one does not have to go through the computation of the initial \RFSF{} map, but only the selected degrees of freedom have to be computed from the beginning.

As a first observation, we notice that the computation of \eqref{eq:rffsigkernel_def} can be reformulated, due to \eqref{eq:rffkernel_def} and linearity of the differencing operator, in the following way:
\begin{align}
    \rffsigkernel[m](\bx, \by)
    =
    \frac{1}{\dimRFF^m} \sum_{q_1, \dots, q_m = 1}^{\dimRFF}  \sum_{\substack{\bi \in \Delta_m(\ell_\bx-1) \\ \bj \in \Delta_m(\ell_\by-1)}} \prod_{p=1}^m \delta^2_{i_p, j_p} \cos\pars{{\bw_{q_p}^{(p)}}^\top (\bx_{i_p} - \by_{j_p})} \label{eq:rffsigkernel_explicit}
\end{align}
by spelling out the definition of the \RFF{} kernel, where $\bw_{1}^{(1)}, \dots, \bw_{\dimRFF}^{(m)} \stackrel{\iid}{\sim} \Lambda$, such that $\bW^{(p)} = \pars{\bw_1^{(p)}, \dots, \bw_{\dimRFF}^{(p)}} \in \bbR^{d \times \dimRFF}$ as defined in Def.~\ref{def:rffsig_def}. Now, we may observe that there is a dependency structure among the samples being averaged in \eqref{eq:rffsigkernel_explicit}, since the outer summation is over the Cartesian product $(q_1, \dots, q_m) \in [\dimRFF]^{\times m}$, which suggests that we might be able to drastically reduce the degrees of freedom by restricting this summation to only go over an independent set of samples.
One way to do this is to restrict to multi-indices of the form $\cI \coloneqq \curls{(q, \dots, q) \in [\dimRFF]^{\times m} \given q \in [\dimRFF]}$, i.e.~we diagonally project the index set, motivating the name of the approach stated in the following definition.
\begin{definition} \label{def:rffsigdp}
    Let $\bw^{(1)}_1, \dots, \bw^{(M)}_{\dimRFF} \stackrel{\iid}{\sim} \Lambda$ for $\dimRFF \in \bbZ_+$, and define $\hat\kernelfeatures_{m,q}: \cX \to \hat\Hil \coloneqq \bbR^2$ with sample size $\hat d = 1$ for $q \in [\dimRFF]$ and $m \in [M]$, such that $\hat\kernelfeatures_{m,q}(\bx) = \pars{\cos({\bw^{(m)}_q}^\top \bx), \sin({\bw^{(m)}_q}^\top \bx)}$ for $\bx \in \cX$. The $M$-truncated Diagonally Projected Random Fourier Signature Feature (\RFSFD) map $\rffsigDP[\leq M]: \seq \to \HilRFFTDP \coloneqq \bigoplus_{m = 0}^M \pars{{\hat\Hil}^{\otimes m}}^{\dimRFF}$ is defined for truncation $M \in \bbZ_+$ and $\bx \in \seq$ as
    \begin{align}
        \rffsigDP[\leq M](\bx) \coloneqq \frac{1}{\sqrt{\dimRFF}} \pars{\pars{\sum_{\bi \in \Delta_m(\ell_\bx-1)} \delta \hat\kernelfeatures_{1,q}(\bx_{i_1}) \otimes \cdots \otimes \delta \hat\kernelfeatures_{m,q}(\bx_{i_m})}_{q=1}^{\dimRFF}}_{m=0}^M.
    \end{align}
Then, the \RFSFD{} kernel can be directly computed for $\bx, \by \in \seq$ via
\begin{align}
    \rffsigkernelDP[\leq M](\bx, \by)
    &\coloneqq
    \inner{\rffsigDP[\leq M](\bx)}{\rffsigDP[\leq M](\by)}_{\HilRFFTDP} 
    =
    \sum_{m=0}^M \inner{\rffsigDP[m](\bx)}{\rffsigDP[m](\by)}_{\pars{{\hat\Hil}^{\otimes m}}^{\dimRFF}}
    \\
    &=
    \sum_{m=0}^M \rffsigkernelDP[m](\bx, \by)
    =
    \frac{1}{\dimRFF} \sum_{m=0}^M \sum_{q=1}^{\dimRFF}  \mathrlap{\sum_{\substack{\bi \in \Delta_m(\ell_{\bx}-1)\\\bj \in \Delta_m(\ell_{\by}-1)}}} \hspace{38pt} \delta^2_{i_1, j_1} \hat\kernel_{1,q}(\bx_{i_1}, \by_{j_1}) \cdots \delta^2_{i_m, j_m} \hat\kernel_{m,q}(\bx_{i_m}, \by_{j_m}), \label{eq:rffsigdpkernel_def}
\end{align}
    where we defined the level-$m$ \RFSFD{} kernel $\rffsigkernelDP[m]: \seq \times \seq \to \bbR$ for $m \in \bbN$ and $\bx, \by \in \seq$ as $\rffsigkernel(\bx, \by) \coloneqq \inner{\rffsigDP[m](\bx)}{\rffsigDP[m](\by)}_{\pars{{\hat\Hil}^{\otimes m}}^{\dimRFF}}$ with the convention that $\rffsigkernelDP[0] \equiv 1$, and $\hat\kernel_{m,q}: \cX \times \cX \to \bbR$ are independent \RFF{} kernels with sample size $\hat d = 1$ defined for $\bx, \by \in \cX$ as $\hat\kernel_{m,q}(\bx, \by) \coloneqq \inner{\hat\kernelfeatures_{m,q}(\bx)}{\hat\kernelfeatures_{m,q}(\by)}_{\hat\Hil}$ with the random weights $\bw^{(m)}_q \in \bbR^d$ for $q \in [\dimRFF], m \in [M]$.
\end{definition}

Note that by the definition of the \RFF{} kernels in \eqref{eq:rffsigdpkernel_def}, we may substitute that $\hat\kernel_{p,q}(\bx, \by) = \cos({\bw^{(p)}_q}^\top(\bx - \by))$ for $\bx, \by \in \cX$, so \eqref{eq:rffsigdpkernel_def} is equivalently written for $\bx, \by \in \seq$ as
\begin{align}
    \rffsigkernelDP[m](\bx, \by) = \frac{1}{\dimRFF} \sum_{q=1}^{\dimRFF} \mathrlap{\sum_{\substack{\bi \in \Delta_m(\ell_{\bx}-1)\\\bj \in \Delta_m(\ell_{\by}-1)}}} \hspace{40pt} \delta^2_{i_1, j_1} \cos({\bw^{(1)}_q}^\top(\bx_{i_1} - \by_{j_1})) \cdots \delta^2_{i_m, j_m} \cos({\bw^{(m)}_q}^\top(\bx_{i_m} - \by_{j_m})), 
\end{align}
which is what we set out to do in the above paragraph; that is, restrict the outer summation onto the diagonal projection of the index set.
Another way to look at Definition \ref{def:rffsigdp} is that the \RFSFD{} kernel in \eqref{eq:rffsigdpkernel_def} is constructed by defining $\dimRFF$ independent \RFSF{} kernels, each with internal \RFF{} sample size $\hat d = 1$, and then taking their average; the concatenation of their corresponding features are then the features of the \RFSFD{} map. Note that for \RFF{} sample size $1$, each \RFF{} map has dimension $2$, i.e.~$\hat\Hil = \bbR^2$, and hence, the corresponding \RFSF{} kernels have dimension $1 + 2 + \cdots + 2^M = (2^{M+1} - 1)$, which by concatenation results in the overall dimensionality of the \RFSFD{} kernel being $\dim \HilRFFTTRP = \dimRFF\pars{2^{M+1} - 1}$. This relates to the computational complexity of the \RFSFD{} map; for details see Remark \ref{remark:alg_rfsf_dp}.

Next, we state our concentration result regarding the level-$m$ \RFSFD{} kernel $\rffsigkernelDP[m](\bx, \by)$.
\begin{theorem} \label{thm:main2}
    Let $\kernel: \bbR^d \times \bbR^d \to \bbR$ be a continuous, bounded, translation-invariant kernel with spectral measure $\Lambda$, which satisfies \eqref{eq:w_cond_main}. Then, for level $m \in \bbZ_+$, $\bx, \by \in \seq$, and $\epsilon > 0$ 
    \begin{align} \label{eq:mainthm2_bound}
        \bbP\bracks{\abs{\rffsigkernelDP[m](\bx, \by) - \sigkernel[m](\bx, \by)} \geq \epsilon}
        \leq
        2\exp\pars{-\frac{1}{4}\min\curls{
        \pars{\frac{\sqrt{\dimRFF} \epsilon}{2C_{d, m, \bx, \by}}}^2, 
        \pars{\frac{\dimRFF \epsilon}{\sqrt{8}C_{d, m, \bx, \by}}}^{\frac{1}{m}}
        }},
     \end{align}
    where $L \coloneqq \norm{\bbE_{\bw \sim \Lambda}\bracks{\bw \bw^\top}}$ is the Lipschitz constant of $\kernel$, and $C_{d, m, \bx, \by} > 0$ is bounded by
    \begin{align}
    C_{d, m, \bx, \by}
    \leq
    \sqrt{8} e^4 (2\pi)^{1/4} e^{1/24} (4e^3\norm{\bx}_\onevar \norm{\by}_\onevar /m)^m \pars{\pars{2d\max(S, R)}^m + \pars{L^2/\ln 2}^m}.
    \end{align}
\end{theorem}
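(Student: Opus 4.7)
The plan is to reduce the concentration inequality to a standard Bernstein bound for an i.i.d.\ average of centred sub-Weibull$(1/m)$ random variables. From the defining expansion \eqref{eq:rffsigdpkernel_def}, one has
\begin{align}
    \rffsigkernelDP[m](\bx, \by) = \frac{1}{\dimRFF}\sum_{q=1}^{\dimRFF} Z_q,
    \quad\text{where}\quad
    Z_q \coloneqq \inner{T_q(\bx)}{T_q(\by)}_{\hat\Hil^{\otimes m}}
\end{align}
and $T_q(\bx) \coloneqq \sum_{\bi \in \Delta_m(\ell_\bx-1)} \delta\hat\kernelfeatures_{1,q}(\bx_{i_1}) \otimes \cdots \otimes \delta\hat\kernelfeatures_{m,q}(\bx_{i_m}) \in \hat\Hil^{\otimes m}$. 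The blocks $(\bw^{(1)}_q, \dots, \bw^{(m)}_q)$ are independent across $q$, so the $Z_q$ are i.i.d., and by unbiasedness of the underlying \RFF{} kernel together with independence across the levels $p = 1, \dots, m$, we have $\bbE[Z_q] = \sigkernel[m](\bx, \by)$. The task thus reduces to controlling the deviation of $\frac{1}{\dimRFF}\sum_q (Z_q - \bbE[Z_q])$ from zero.

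The crucial step is a sub-Weibull$(1/m)$ norm bound on each $Z_q$. By Cauchy--Schwarz in $\hat\Hil^{\otimes m}$, $\abs{Z_q} \leq \norm{T_q(\bx)}_{\hat\Hil^{\otimes m}} \cdot \norm{T_q(\by)}_{\hat\Hil^{\otimes m}}$. Since the map $\theta \mapsto (\cos\theta, \sin\theta)$ is $1$-Lipschitz, we have $\norm{\delta\hat\kernelfeatures_{p,q}(\bx_i)}_{\hat\Hil} \leq \norm{\bw^{(p)}_q}_2 \norm{\delta\bx_i}_2$, and the triangle inequality in $\hat\Hil^{\otimes m}$ combined with the elementary-symmetric-polynomial inequality $\sum_{\bi \in \Delta_m(n)} \prod_p a_{i_p} \leq \frac{1}{m!}\pars{\sum_i a_i}^m$ gives $\norm{T_q(\bx)}_{\hat\Hil^{\otimes m}} \leq \frac{\norm{\bx}_\onevar^m}{m!}\prod_{p=1}^m \norm{\bw^{(p)}_q}_2$. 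Hence
\begin{align}
    \abs{Z_q} \leq \frac{(\norm{\bx}_\onevar \norm{\by}_\onevar)^m}{(m!)^2} \prod_{p=1}^m \norm{\bw^{(p)}_q}_2^2.
\end{align}
Under the Bernstein moment condition \eqref{eq:w_cond_main}, each coordinate of $\bw^{(p)}_q$ is sub-Gaussian, hence $\norm{\bw^{(p)}_q}_2^2 = \sum_{i=1}^d (w^{(p)}_{q,i})^2$ is sub-exponential with $\psi_1$-norm of order $d\max(S, R)$ (to be made quantitative via the supplementary Lemmas \ref{lem:alpha_bernstein_cond} and \ref{lem:alpha_exp_norm_to_bernstein}). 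The standard Orlicz product rule $\pars{\frac{1}{\alpha} = \sum_p \frac{1}{\alpha_p}}$ then yields that $\prod_{p=1}^m \norm{\bw^{(p)}_q}_2^2$ is sub-Weibull of order $1/m$, with norm bounded by a numerical constant to the $m$-th power times $(d\max(S,R))^m$. Stirling's formula converts the $(m!)^{-2}$ factor into a $(\mathrm{const}/m)^m$ form and produces the $(4e^3 \norm{\bx}_\onevar \norm{\by}_\onevar/m)^m (2d\max(S, R))^m$ contribution to $C_{d,m,\bx,\by}$. The $(L^2/\ln 2)^m$ contribution accounts for centring: $\abs{\bbE Z_q} = \abs{\sigkernel[m](\bx, \by)} \leq (L\norm{\bx}_\onevar)^m (L\norm{\by}_\onevar)^m /(m!)^2$ by the identical Cauchy--Schwarz argument in the \RKHS\ $\Hil$, with the $(\ln 2)^{-m}$ factor coming from the conversion of an $\ell_\infty$-bound into a $\psi_{1/m}$-norm bound.

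The proof concludes by applying a Bernstein-type inequality for i.i.d.\ averages of centred sub-Weibull$(1/m)$ random variables (for instance, Kuchibhotla--Chakrabortty or G{\"o}tze--Sambale--Sinulis), producing a tail of the form $2\exp\pars{-c\min\curls{(\sqrt{\dimRFF}\epsilon/K)^2, (\dimRFF\epsilon/K)^{1/m}}}$ with $K = \norm{Z_q - \bbE[Z_q]}_{\psi_{1/m}}$; substituting the upper bound on $K$ developed above gives precisely \eqref{eq:mainthm2_bound}. The main technical obstacle is bookkeeping: one must propagate numerical constants carefully through Stirling's formula, the Orlicz product rule, and the sub-Weibull Bernstein bound to match the explicit upper bound on $C_{d,m,\bx,\by}$ stated in the theorem. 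Conceptually, however, the argument rests entirely on the three ingredients above: (i) i.i.d.\ decomposition across the $\dimRFF$ diagonal copies, (ii) Cauchy--Schwarz plus a signature-type tensor-norm bound yielding a product of $m$ independent sub-exponentials, and (iii) sub-Weibull Bernstein concentration for the resulting i.i.d.\ average.
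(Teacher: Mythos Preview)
Your proposal is correct and follows essentially the same approach as the paper's proof (Theorem~\ref{thm:rfsf_dp_approx}): decompose $\rffsigkernelDP[m]$ as an i.i.d.\ average over the $\dimRFF$ diagonal copies, bound each summand via Cauchy--Schwarz and the signature tensor-norm estimate (Lemma~\ref{lem:1var_rffsig_norm}), pass from sub-exponential coordinates to a sub-Weibull$(1/m)$ bound on the product via the Orlicz H\"older inequality (Lemma~\ref{lem:alpha_exp_holder}), handle centring exactly as you describe via the constant's $\Psi_{1/m}$-norm, and conclude with the Kuchibhotla--Chakrabortty inequality (Theorem~\ref{thm:alpha_subexp_concentration}). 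Your identification of the $(L^2/\ln 2)^m$ term as arising from the centring step and the $(4e^3/m)^m$ factor from Stirling is precisely what the paper does.
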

The proof is provided in the supplement under Theorem \ref{thm:rfsf_dp_approx}. The result shows that the \RFSFD{} kernel converges for any two sequences $\bx, \by \in \seq$ with a $\pars{\nicefrac{1}{m}}$-subexponential convergence rate with respect to the sample size $\dimRFF \in \bbZ_+$. 
Similarly to Theorem \ref{thm:main}, the bound has a phase transition, where for small values of $\epsilon$, it has a sub-Gaussian tail, while for larger values, it has a $\pars{\nicefrac{1}{m}}$-subexponential tail. A crucial difference from the previous bound is that now the phase transition happens at $\epsilon^\star = C_{d, m, \bx, \by} 2^\frac{2m-3/2}{2m-1} \dimRFF^{\frac{1-m}{2m-1}}$, which depends on the sample size $\dimRFF$. This means that for fixed value of $\epsilon > 0$, the phase transition always happens eventually as $\dimRFF$ gets large enough, hence the convergence rate with respect to $\dimRFF$ is $\pars{\nicefrac{1}{m}}$-subexponential regardless of the value of $\epsilon$. The slightly reduced rate of convergence compared to the \RFSF{} kernel in Theorem \ref{thm:main} is to be expected, since the sample size of the \RFSFD{} kernel is analogously reduced by an exponent of $\pars{\nicefrac{1}{m}}$ with respect to $\dimRFF$ in comparison.
The constant $C_{d,m,\bx, \by}$, similarly to \eqref{eq:mainthm_bound}, depends on \begin{enumerate*}[label=(\roman*)] \item the 1-variation of sequences $\norm{\bx}_\onevar, \norm{\by}_\onevar$ that measure the complexity of the sequences; \item $L > 0$, the Lipschitz constant of the kernel $\kernel$ (see Examples \ref{example:2ndmoment_lip},  \ref{example:rff_lip}); \item the moment bound parameters $S, R > 0$ from condition \eqref{eq:w_cond_main}; \item the state-space dimension $d$;  and \item the signature level $m$. \end{enumerate*}

\begin{remark} \label{remark:alg_rfsf_dp}
    Algorithm \ref{alg:rsfsdp} demonstrates the computation of the \RFSFD{} map $\rffsigDP[\leq M]$ given a dataset of sequences $\bX = (\bx_i)_{i=1}^N \subset \seq$. Upon counting the operations, we deduce that the algorithm has a computational complexity $O\pars{N \ell \dimRFF (M d + 2^{M})}$. Crucially, it is linear in both $\ell$, the maximal sequence length, and $\dimRFF$, the sample size of the random kernel.
\end{remark}
\paragraph{Dimensionality Reduction: Tensor Random Projection} Previously, we built the \RFSFD{} map by subsampling an independent set from the samples that constitute \RFSF{} kernel. Here, we propose an alternative dimensionality reduction technique that starts again from the \RFSF{} map, and uses random projections to project this generally high-dimensional tensor onto a lower dimension. Random projections are a classic technique in data science for reducing the data dimension, while preserving its important structural properties. They are built upon the celebrated Johnson-Lindenstrauss lemma \cite{johnson1986extensions}, which states that a set of points in a high-dimensional space can be embedded into a space of much lower dimension, while approximately preserving their 
geometry. Exploiting this property, we construct a tensor random projected (\TRP) variant of our random kernel called \RFSFT{}, such that the computation is coupled between the \RFSF{} and \TRP{} maps, similarly to a kernel trick.

Tensorized random projections \cite{sun2021tensor, rakhshan2020tensorized} construct random projections for tensors with concise parametrization that respects their tensorial nature. Given tensors $\bs, \bt \in \pars{\bbR^d}^{\otimes m}$ for $m \in \bbZ_+$, the \TRP{} map with \CP{} (\texttt{CANDECOMP/PARAFAC} \cite{kolda2009tensor})  rank-$1$ is built via a random functional 
$\pr: \pars{\bbR^d}^{\otimes m} \to \bbR$ such that $\pr(\bs) = \inner{\bp_1 \otimes \cdots \otimes \bp_m}{\bs}_{\pars{\bbR^d}^{\otimes m}}$, where $\bp_1, \dots, \bp_m \stackrel{\iid}{\sim} \cN(\b 0, \b I_d)$ are $d$-dimensional component vectors sampled from a standard normal distribution. Then, the inner product can be estimated as $\pr(\bs) \pr(\bt) \approx \expe{\pr(\bs) \pr(\bt)} = \inner{\bs}{\bt}_{\pars{\bbR^d}^{\otimes m}}$. Variance reduction is achieved by stacking $n \in \bbZ_+$ such random projections, each with $\iid$ component vectors $\bp_1^{(1)}, \dots, \bp_m^{(n)} \stackrel{\iid}{\sim} \cN(\b 0, \b I_d)$. Hence, the \TRP{} operator is defined as
\begin{align} \label{eq:trp_def}
    \TRP: \pars{\bbR^d}^{\otimes m} \to \bbR^n, \quad  \TRP(\bs) \coloneqq \frac{1}{\sqrt{n}}\pars{\inner{\bp_1^{(i)} \otimes \cdots \otimes \bp_m^{(i)}}{\bs}}_{i=1}^n.
\end{align}
On the one hand, this allows to represent the random projection map onto $\bbR^n$ using only $O(n m d)$ parameters as opposed to the $O(n d^m)$ parameters in a densely parametrized random projection; and on the other, it allows for downstream computations to exploit the low-rank structure of the operator, as we shall do so in the definition stated below.
\begin{definition} \label{def:rffsigtrp_def}
Let $\bW^{(1)}, \dots, \bW^{(M)} \stackrel{\iid}{\sim} \Lambda^{\dimRFF}$ be $\iid$ random matrices sampled from $\Lambda^{\dimRFF}$ for \RFF{} dimension $\dimRFF \in \bbZ_+$, define the independent \RFF{} maps $\rff_m: \cX \to \HilRFF$ as in \eqref{eq:rff_def}, i.e.~$\rff_m(\bx) = \nicefrac{1}{\sqrt{\dimRFF}}\pars{\cos({{\bW^{(m)}}^\top} \bx), \sin({\bW^{(m)}}^\top \bx)}$ for $m \in \bracks{M}$ and $\bx \in \cX$, and let $\bP^{(1)}, \dots, \bP^{(M)} \stackrel{\iid}{\sim} \cN^{\dimRFF}\pars{\b 0, \b I_{2 \dimRFF}}$
be random matrices with $\iid$ standard normal entries. The $M$-truncated Tensor Random Projected Random Fourier Signature Feature (\RFSFT) map $\rffsigTRP[\leq M] \seq \to \HilRFFTTRP = \bbR^{M \dimRFF}$ is defined for truncation level $M \in \bbZ_+$ and $\bx \in \seq$ as
\begin{align}
    \rffsigTRP[\leq M](\bx) &\coloneqq
    \frac{1}{\sqrt{\dimRFF}} \pars{\pars{\sum_{\bi \in \Delta_m(\ell_\bx-1)} \inner{\bp^{(1)}_q}{\delta \rff_1(\bx_{i_1})} \cdots \inner{\bp^{(m)}_q}{\delta \rff_m(\bx_{i_m})}}_{q=1}^{\dimRFF}}_{m=0}^M
    \\
    &=
    \frac{1}{\sqrt{\dimRFF}} \pars{\sum_{\bi \in \Delta_m(\ell_\bx-1)} \pars{{\bP^{(1)}}^\top \delta \rff_1(\bx_{i_1})} \odot \cdots \odot \pars{{\bP^{(m)}}^\top \delta \rff_m(\bx_{i_m})}}_{m=0}^M,
\label{eq:rffsigtrpdef}
\end{align}
where $\bP^{(m)} = \pars{\bp_q^{(m)}}_{q=1}^{\dimRFF} \in \bbR^{2\dimRFF \times \dimRFF}$, and $\odot$ denotes the Hadamard product\footnote{The Hadamard product stands for component-wise multiplication of the vectors $\bx, \by \in \bbR^n$, $\bx \odot \by = \pars{x_i y_i}_{i=1}^n$.}.
The \RFSFT{} kernel $\rffsigkernelTRP[\leq M]: \seq \times \seq \to \bbR$ can then be directly computed for sequences $\bx, \by \in \seq$ by
\begin{align}
    \rffsigkernelTRP[\leq M](\bx,\by) &\coloneqq \inner{\rffsigTRP[\leq M](\bx)}{\rffsigTRP[\leq M](\by)}_{\HilRFFT}
    =
    \sum_{m=0}^M \inner{\rffsigTRP[m](\bx)}{\rffsigTRP[m](\by)}_{\HilRFF^{\otimes m}}
    \\
    &=
    \sum_{m=0}^M \rffsigkernelTRP[m](\bx, \by)
    =
    \frac{1}{\dimRFF} \sum_{m=0}^M \sum_{q=1}^{\dimRFF} \sum_{\substack{\bi \in \Delta_m(\ell_\bx -1)\\\bj \in \Delta_m(\ell_\by-1)}} \prod_{p=1}^m \inner{\bp^{(p)}_q}{\delta \rff_p(\bx_{i_p})} \inner{\bp^{(p)}_q}{\delta \rff_p(\by_{j_p})}, \label{eq:rffsigtrpkernel_def}
\end{align}
where we defined the level-$m$ \RFSFT{} kernel $\rffsigkernelTRP[m]: \seq \times \seq \to \bbR$ for $m \leq M$ as $\rffsigkernelTRP[m](\bx, \by) \coloneqq \inner{\rffsigTRP[m](\bx)}{\rffsigTRP[m](\by)}_{\HilRFF^{\otimes m}}$ with the convention that $\rffsigkernelTRP[0] \equiv 1$.
\end{definition}

We remark that \eqref{eq:rffsigtrpdef} is equivalent to the \TRP{} operator \eqref{eq:trp_def} applied to the \RFSF{} map \eqref{eq:rffsigdef} by exploiting bilinearity of the inner product, and using that it factorizes over the tensor components, as described in \eqref{eq:inner_tensor}. Then, the unbiasedness of \eqref{eq:rffsigtrpkernel_def} follows from the fact that the \TRP{} operator is an isometry under expectation, which is applied to the \RFSF{} tensor $\rffsig[m]$, therefore $\rffsigkernelTRP[m]$ kernel is conditionally an unbiased estimator of $\rffsigkernel[m]$ given the \RFSF{} weights $\bW^{(1)}, \dots, \bW^{(m)} \in \bbR^{d \times \dimRFF}$.
By the tower rule for expectations, $\rffsigkernelTRP[m]$ is an unbiased estimator of $\sigkernel[m]$. The approximation quality is then governed by two factors: \begin{enumerate*}[label=(\roman*)] \item \label{it:trp1} how well the \TRP{} projected kernel $\rffsigkernelTRP[m]$ approximates $\rffsigkernel[m]$; \item \label{it:trp2} the quality of the approximation of $\rffsigkernel[m]$ with respect to $\sigkernel[m]$. \end{enumerate*} Note that \ref{it:trp2} has already been discussed in Theorem \ref{thm:main}. Here, we state the following theoretical result which quantifies \ref{it:trp1}. Combining these two results by means of triangle inequality and union bounding quantifies that $\rffsigkernelTRP[m]$ is a good estimator of $\sigkernel[m]$.

\begin{theorem} \label{thm:main3}
    Let $\kernel: \bbR^d \times \bbR^d \to \bbR$ be a continuous, bounded, translation-invariant kernel with spectral measure $\Lambda$, which satisfies \eqref{eq:w_cond_main}. Then, the following bound holds for \RFSFT{} kernel for signature level $m \in \bbZ_+$ sequences $\bx, \by \in \seq$ and $\epsilon > 0$
    \begin{align} \label{eq:mainthm3_bound}
        \bbP\bracks{\abs{\rffsigkernelTRP[m](\bx, \by) - \rffsigkernel[m](\bx, \by)} \geq \epsilon}
        \leq
        C_{d, \Lambda}
        \exp\pars{- \pars{\frac{m^2 \dimRFF^{\frac{1}{2m}} \epsilon^{\frac{1}{m}}}{2\sqrt{2}e^3 R \norm{\bx}_\onevar \norm{\by}_\onevar}}^\frac{1}{2}},
    \end{align}
    where the absolute constant is defined as $C_{d, \Lambda} \coloneqq 2\pars{1 + \frac{S}{2R} + \frac{S^2}{4R^2}}^d$.
\end{theorem}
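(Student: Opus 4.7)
The plan is to first condition on the RFF weights $\bW := (\bW^{(1)}, \dots, \bW^{(M)})$ so that $\bs := \rffsig[m](\bx)$ and $\bt := \rffsig[m](\by)$ become deterministic tensors in $\HilRFF^{\otimes m}$, exploit the Gaussian structure of the TRP vectors $\bP^{(1)}, \dots, \bP^{(m)}$ to obtain a conditional concentration inequality in terms of $K_\bW := \norm{\bs}_F \norm{\bt}_F$, and finally decondition using the Bernstein moment assumption \eqref{eq:w_cond_main}. Conditionally on $\bW$, Definition~\ref{def:rffsigtrp_def} rewrites the TRP kernel as an i.i.d.~average
\[
\rffsigkernelTRP[m](\bx,\by) = \frac{1}{\dimRFF}\sum_{q=1}^{\dimRFF} Z_q,\quad Z_q := \pi_q(\bs)\pi_q(\bt),\quad \pi_q(\bu) := \inner{\bp^{(1)}_q \otimes \cdots \otimes \bp^{(m)}_q}{\bu},
\]
with $\bbE[Z_q \mid \bW] = \inner{\bs}{\bt} = \rffsigkernel[m](\bx,\by)$, by the expectation-preserving property of the TRP operator \eqref{eq:trp_def}.

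\textbf{Conditional tail via Gaussian chaos.} Each $\pi_q(\bu)$ is a homogeneous Gaussian chaos of order $m$ in $2m\dimRFF$ i.i.d.~standard normals with variance $\norm{\bu}_F^2$, so Nelson's hypercontractivity gives $\norm{\pi_q(\bu)}_{L^p} \leq (p-1)^{m/2}\norm{\bu}_F$ and Cauchy--Schwarz then yields $\norm{Z_q}_{L^p}\lesssim (2p)^m K_\bW$, identifying the $Z_q$ as $(1/m)$-sub-Weibull conditional on $\bW$. A Marcinkiewicz--Zygmund moment estimate applied to $S_{\dimRFF} := \sum_q (Z_q - \bbE[Z_q\mid \bW])$ then gives $\norm{S_{\dimRFF}}_{L^p} \lesssim p^{m+1/2}\sqrt{\dimRFF}\,K_\bW$, and optimising $p$ in Markov's inequality produces the conditional bound
\[
\bbP\left[\,\abs{\rffsigkernelTRP[m](\bx,\by) - \rffsigkernel[m](\bx,\by)} \geq \epsilon \,\middle|\, \bW\,\right] \leq 2\exp\pars{-c\,\dimRFF^{1/(2m)}\epsilon^{1/m}\big/K_\bW^{1/m}}
\]
for an absolute constant $c>0$.

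\textbf{Bounding $K_\bW$ and deconditioning.} To make this unconditional I bound $K_\bW$ by combining the triangle inequality in $\HilRFF^{\otimes m}$ with the combinatorial estimate $\sum_{i_1<\cdots<i_m}\prod_p a^{(p)}_{i_p} \leq \frac{1}{m!}\prod_p \sum_i a^{(p)}_i$ applied to $a^{(p)}_i := \norm{\delta\rff_p(\bx_i)}_2$, and invoking the Lipschitz bound $\norm{\delta\rff_p(\bx_i)}_2 \leq \dimRFF^{-1/2}\norm{{\bW^{(p)}}^\top \delta \bx_i}_2$ (from $2-2\cos a \leq a^2$). This produces a factorisation $K_\bW \leq (m!)^{-2}\norm{\bx}_\onevar^m\norm{\by}_\onevar^m \prod_{p=1}^m \xi_p$ where each $\xi_p$ is controlled by a sum of squared sub-Gaussian inner products in the rows of $\bW^{(p)}$. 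The Bernstein condition \eqref{eq:w_cond_main}, together with the power-series computation $\bbE[\exp(\norm{\bw}^2/(2R))] \leq (1 + S/(2R) + S^2/(4R^2))^d$ (obtained by choosing $s = 1/(2R)$ in $\bbE[\exp(s\norm{\bw}^2)]$), bounds the MGF of each $\xi_p$, simultaneously furnishing the constant $C_{d,\Lambda}$ and showing that $K_\bW$ is itself $(1/m)$-sub-Weibull with parameter proportional to $R^m\norm{\bx}_\onevar^m\norm{\by}_\onevar^m/(m!)^2$. Combining the conditional tail with the tail of $K_\bW$ via the split $\bbP[A] \leq \bbP[A,\,K_\bW\leq u^\star] + \bbP[K_\bW > u^\star]$ and balancing $u^\star$ so that both contributions coincide yields the extra square root in the exponent of \eqref{eq:mainthm3_bound}, with Stirling's approximation of $m!$ supplying the factor $m^2$ in the denominator. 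The hardest step is this last balancing: carrying it out cleanly enough that the Nelson prefactors, the Marcinkiewicz--Zygmund slack, and the Stirling corrections all collapse into the precise absolute constants $2\sqrt{2}e^3$ and $m^2$ appearing in the theorem.
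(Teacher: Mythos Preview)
Your overall architecture matches the paper's proof: condition on the RFF weights $\bW$, exploit Gaussian-chaos concentration for the TRP part, then decondition using the Bernstein assumption \eqref{eq:w_cond_main}. The identification of $K_\bW = \norm{\rffsig[m](\bx)}\,\norm{\rffsig[m](\by)}$, the Lipschitz bound from $2-2\cos a \le a^2$, and the MGF computation $\bbE[\exp(\norm{\bw}^2/(2R))]\le(1+S/(2R)+S^2/(4R^2))^d$ are all exactly what the paper uses (the latter is where $C_{d,\Lambda}$ comes from).

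Two places where your route departs from the paper deserve comment. First, for the conditional tail you go through Marcinkiewicz--Zygmund on the i.i.d.\ summands $Z_q$, obtaining $\norm{S_{\dimRFF}}_{L^p}\lesssim p^{m+1/2}\sqrt{\dimRFF}\,K_\bW$; optimising Markov over $p$ then yields an exponent of order $(\sqrt{\dimRFF}\,\epsilon/K_\bW)^{2/(2m+1)}$, not the $\dimRFF^{1/(2m)}\epsilon^{1/m}/K_\bW^{1/m}$ you state. The paper avoids this loss by treating the \emph{entire} average $\rffsigkernelTRP[m](\bx,\by)\,\vert\,\bW$ as a single degree-$2m$ Gaussian polynomial, computing its conditional variance via Isserlis' theorem (picking up a $3^m$ factor), and applying the hypercontractivity bound (their Theorem on Gaussian polynomials) directly; this gives the correct $\dimRFF^{1/(2m)}$ with clean constants. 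Your MZ route would need a separate patch to recover the exact exponent.

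Second, for the deconditioning you propose a split $\bbP[A]\le\bbP[A,\,K_\bW\le u^\star]+\bbP[K_\bW>u^\star]$ and balance, whereas the paper takes the expectation of the conditional bound and uses the reverse Young inequality $ab\ge 2\sqrt{a}-1/b$ (their Lemma with $p=1/2,\,q=1$) to turn $\exp(-T/X)$ into $\exp(-2\sqrt{T})\exp(X)$, after which the MGF bound on $\sum_p\norm{\bW^{(p)}}_2^2$ produces $C_{d,\Lambda}$ in one step. Both mechanisms introduce the extra square root in the final exponent, but the paper's route delivers the specific constant $C_{d,\Lambda}$ directly as a single multiplicative prefactor, whereas your split would give a sum of two tails whose prefactors do not collapse so neatly. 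So your scheme is sound in spirit, but for the sharp constants $2\sqrt{2}e^3$ and the exact $C_{d,\Lambda}$ the paper's two shortcuts (global hypercontractivity on the sum, and reverse Young for deconditioning) are what actually land the statement as written.
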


The proof is given in the supplement under Theorem \ref{thm:rfsf_trp_approx} utilizing the hypercontractivity of Gaussian polynomials \cite{janson1997gaussian} that is used to quantify the concentration of the \TRP{} estimator. The concentration of the \RFSFT{} kernel is then governed by Theorems \ref{thm:main} and \ref{thm:main3} combined. Together, they show that for smaller values of $\epsilon$ (i.e.~the regime change as discussed below Theorem \ref{thm:main}), the probability has a polynomial plus a sub-Gaussian tail, while for large $\epsilon$, it has a $\pars{\frac{1}{2m}}$-subexponential tail due to \eqref{eq:mainthm3_bound}, and the dominant convergence rate with respect to $\dimRFF$ is $\pars{\frac{1}{4m}}$-subexponential. This means that in terms of convergence, \RFSFT{} is the slowest among the 3 variations introduced so far. However, it is also the most efficient in terms of overall dimension, hence downstream computational complexity as well, since $\HilRFFTTRP = \bbR^{M \dimRFF}$.
Remark \ref{remark:alg_rfsf_trp} discusses the computational complexity in detail.

\begin{remark} \label{remark:alg_rfsf_trp}
    Algorithm \ref{alg:rsfstrp} demonstrates the computation of the \RFSFT{} map $\rffsigTRP[\leq M]$ given a dataset of sequences $\bX = (\bx_i)_{i=1}^N \subset \seq$. Counting the operations, here we can deduce that the algorithm has an $O\pars{M  N  \ell  \dimRFF  (d + \dimRFF)}$ computational complexity. This variation is also linear in $\ell$, the maximal sequence length, although it is quadratic in $\dimRFF$.
\end{remark}

\paragraph{Numerical evaluation}
\begin{wrapfigure}{r}{0.5\textwidth} 
\vspace{-10pt}
\hspace{-10pt}\includegraphics[width=0.5\textwidth]{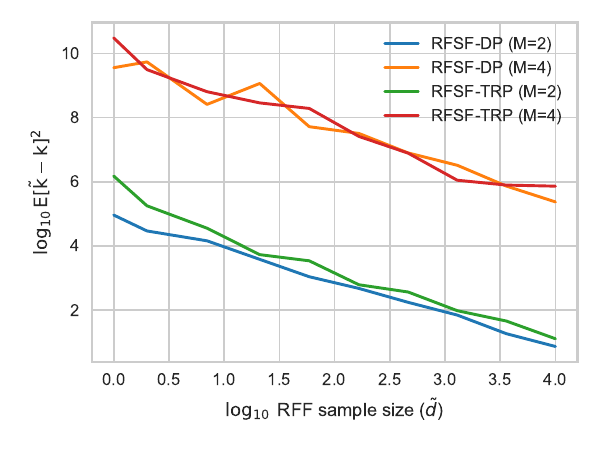}
\vspace{-12.5pt}
\caption{Approximation error of random kernels against \RFF{} sample size on $\log$-$\log$ plot.}
\label{fig:approx}
\vspace{-20pt}
\end{wrapfigure}
Here, we numerically evaluate the approximation error of the proposed scalable kernels, that is, \RFSFD{} and \RFSFT{}. We do not include \RFSF{} since its dimensionality shows polynomial explosion in the base sample size $\dimRFF$ due to its tensor-based representation, which makes its computation infeasible for reasonable values of $\dimRFF$. We generate $d$-dimensional synthetic time series of length-$\ell$ using a \texttt{VAR}(1) process $\tilde\bx \in \seq$, such that $\tilde\bx_0 = \b 0$ and $\tilde\bx_{t+1} = \nicefrac{1}{\sqrt{d}} A \tilde\bx_t + \b\epsilon_t$, where $A \sim \cN^{d \times d}(0, 1)$ and $\b\epsilon_t \sim \cN(\b 0, \sigma^2 \b I_d)$. Then, we compute the normalized version $\bx \in \seq$ of $\tilde\bx$, which is rescaled to have $1$-variation $V > 0$, i.e. $\bx_t = V \tilde\bx_t / \norm{\tilde\bx}_{\onevar}$. We set $d = 10$, $\ell = 100$, $\sigma = 0.1$ and $V = 100$. We compute the squared deviation between the groundtruth signature kernel and the randomized approximations for two randomly sampled time series in this way. This process is repeated for $100$ randomly sampled time series and $100$ times resampled random kernel evaluations, giving rise to overall $10000$ evaluations for each value of $\dimRFF$. Figure \ref{fig:approx} shows the average approximation error plotted against values of $\dimRFF$ on a $\log$-$\log$ plot. We can observe that both \RFSFD{} and \RFSFT{} have approximately the same error curves for a given value of truncation level $M$, and the steepness appears to be the same across different levels of $M$. This means that \RFSFT{} is slightly more efficient in terms of dimensionality, since its dimension is $M\dimRFF$ as opposed to $2^{M+1} \dimRFF$ in \RFSFD{}. We also observe that both curves are close to being linear, which indicates that the approximation error scales approximately as $O(\dimRFF^{-\alpha})$ for some value of $\alpha > 0$.


\section{Experiments}\label{sec:experiments}

\paragraph{Time series classification}
We perform multivariate time series classification to investigate the performance of the scalable \RFSF{} variants compared to the full-rank signature kernel and other quadratic time baseline kernels, and further, to demonstrate the scalability to large-scale datasets, where the quadratic sample complexity becomes prohibitive.
We use support vector machine (\SVM) \citep{steinwart2008support} classification for classifying multivariate time series on datasets of various sizes. For quadratic time kernels, the dual \SVM{} formulation is used, while for kernels with feature representations, we use the primal formulation that has linear complexity in the size of the dataset $n \in \bbZ_+$ aiding in scalability to truly large-scale datasets. For each considered kernel/feature, we use a \texttt{GPU}-based implementation provided in the \KS{} library\footnote{\href{https://github.com/tgcsaba/KSig}{https://github.com/tgcsaba/KSig}}. For large-scale experiments with the featurized kernels, linear \SVM{} implementation is used from the \texttt{cuML} library \cite{raschka2020machine}, while the dual \SVM{} on moderate-scale datasets uses the \texttt{sklearn} library \cite{scikit-learn}. For multi-class problems, we use the one-vs-one classification strategy. This study is also the largest scale comparison of signature kernels to date which extends the datasets considered in \cite{salvi2021signature}.
The hardware used was 2 computer clusters equipped with overall 8 NVIDIA 3080 Ti \texttt{GPU}s.
\begin{table}
 \caption{Computational complexities of kernels in our experiments; $N \in \bbZ_+$ is the number of time series, $\ell \in \bbZ_+$ is their length, $d \in \bbZ_+$ is their state-space dimension, $M \in \bbZ_+$ is the signature truncation level, $\dimRFF \in \bbZ_+$ is the \texttt{RF} dimension, $W \in \bbZ_+$ is the warping length in \texttt{RWS}.}
 \label{table:complexity}
 \resizebox{\textwidth}{!}{
\begin{tabular}{cccccccc}
\toprule
 \texttt{RFSF-DP} & \texttt{RFSF-TRP} & \texttt{KSig} & \texttt{KSigPDE} & \texttt{RWS} & \texttt{GAK} & \texttt{RBF} & \texttt{RFF}
 \\
 \midrule
 $O\pars{N\ell\tilde d\pars{Md + 2^M}}$ & $O\pars{N\ell M\tilde d\pars{d + \tilde d}}$ & $O\pars{N^2 \ell^2 \pars{M + d}}$ & $O\pars{N^2 \ell^2 d}$ & $O\pars{N\ell W d}$ & $O\pars{N^2 \ell^2 d}$ & $O\pars{N^2 \ell d}$ & $O\pars{N \ell d\dimRFF}$
 \\
\bottomrule
\end{tabular}
}
\end{table}

\paragraph{Methods}
We compare the proposed variants \RFSFD{} and \RFSFT{} to the baselines described here: 
\begin{enumerate*}[label=(\arabic*)] \item the $M$-truncated Signature Kernel \cite{kiraly2019kernels} \KS{} formulated via the kernel trick, and is a quadratic time baseline; \item the Signature-PDE Kernel \cite{salvi2021signature} \KSP{}, which uses the 2\textsuperscript{nd}-order PDE solver and also has quadratic complexity; \item the Global Alignment Kernel \cite{cuturi2011fast} \GAK{}, one of the most popular sequence kernels to day and can can be related to the signature kernel, see \cite[Sec.~5]{kiraly2019kernels}; \item Random Warping Series \cite{wu2018random} \RWS{}, which produces features by \texttt{DTW} alignments between the input and randomly sampled time series; \item the \RBF{} kernel, which treats the whole time series as a vector of length $\bbR^{\ell d}$, \item Random Fourier Reatures \cite{rahimi2007random} \RFF{}, which also treats the time series as a long vector. We excluded \RFSF{} from the comparison, as it is unfeasible to compute it with reasonable sample sizes $\dimRFF$ due to the polynomial explosion of dimensions in its tensor-based representation. The complexities are compared in Table \ref{table:complexity}.
\end{enumerate*}

\paragraph{Hyperparameter selection}
For each dataset-kernel, we perform cross-validation to select the optimal hyperparameters that are optimized over the Cartesian product of the following options. For each method that requires a static kernel, we use the \RBF{} kernel with bandwidth hyperparameter $\sigma > 0$. This is specified in terms of a rescaled median heuristic, i.e.~
\begin{align} \label{eq:alpha_select}
    \sigma = \alpha \med \curls{\norm{\bx_i - \bx_j^\prime}_2 / 2 \given i \in [\ell_\bx], j \in [\ell_{\bx^\prime}], \bx, \bx^\prime \in \bX}, \quad \text{for} \spc \alpha > 0,
\end{align}
where $\alpha$ is chosen from $\alpha \in \{10^{-3}, \dots, 10^3\}$ on a logarithmic grid with $19$ steps. For each kernel that is not normalized by default (i.e.~the \GAK{} and \RBF{} kernels are normalized, the former is because without normalization it blows up) , we select whether to normalize to unit norm in feature space via $\kernel(\bx, \by) \mapsto \kernel(\bx, \by) / \sqrt{\kernel(\bx, \bx) \kernel(\by, \by)}$. The \SVM{} hyperparameter $C > 0$ is selected from $C \in \{10^0, 10^1, 10^2, 10^3, 10^4\}$. Further, motivated by previous work that investigates the effect of path augmentations in the context of signature methods \cite{morrill2021generalised}, we chose 3 augmentations to cross-validate over. First is parametrization encoding, which gives the classifier the ability to remove the warping invariance of a given sequence kernel, adding the time index as an additional coordinate, i.e. for each time series in the dataset $\bx \in \bX$, we augment it via $\bx = (\bx_i)_{i=1}^{\ell_\bx} \mapsto (\beta i / \ell_\bx, \bx_i)_{i=1}^{\ell_\bx}$, where $\beta > 0$ is the parametrization intensity chosen from $\beta \in \{10^0, 10^1, 10^2, 10^3, 10^4\}$. The second augmentation is the basepoint encoding, the role of which is to remove the translation invariance of signature features. Note that when the static base kernel is chosen to be a nonlinear kernel other than the Euclidean inner product, the signature kernel is not completely translation-invariant due to the state-space nonlinearities, but it is close being that by the $L$-Lipschitz property in Lemma \ref{example:2ndmoment_lip} valid for of the static kernels considered in this work. The basepoint encoding adds an initial $\mathbf{0}$ step at the beginning of each time series, i.e.~for $\bx \in \bX$, $\bx=(\bx_1, \dots, \bx_{\ell_\bx}) \mapsto (\mathbf{0}, \bx_1, \dots, \bx_{\ell_\bx})$. The third augmentation is the lead-lag map, which is defined as $\bx = (\bx_1, \dots, \bx_{\ell_\bx}) \mapsto \pars{(\bx_1, \bx_1), (\bx_2, \bx_1), (\bx_2, \bx_2), \dots, (\bx_{\ell_\bx}, \bx_{\ell_\bx-1}), (\bx_{\ell_\bx}, \bx_{\ell_\bx})}$. For the truncation-based signature kernels, we select the truncation level $M \in \bbZ_+$ from $M \in \{2, 3, 4, 5\}$. For \RWS{}, we select the warping length from $W \in \{10, 20, \dots, 100\}$ as suggested by the authors. This makes \RWS{} the most expensive feature-based kernel, and so as to fit within the same resource limitations, we omit cross-validating over the path augmentations. We select the standard deviation $\sigma > 0$ of the warping series from the same grid as $\alpha$ in \eqref{eq:alpha_select}. For all \texttt{RF} approaches, we set the \texttt{RF} dimension $\tilde d \in \bbZ_+$, so the overall dimension is $1000$. Note that for \RFSFD{} and \RFSFT{} this is respectively $2^{M+1} \dimRFF$ and $M \dimRFF$, where $\dimRFF$ is the base \RFF{} sample size; for \RWS{} it is the number of warping series $\dimRFF$; while for \RFF{} it is twice the number of samples $2 \dimRFF$.

\paragraph{Datasets: UEA Archive}
The UEA archive \cite{dau2019ucr} is a collection of overall 30 datasets for benchmarking classifiers on multivariate time series classification problems containing both binary and multi-class tasks.
The data modality ranges from various sources e.g.~human activity recognition, motion classification, ECG classification, EEG/MEG classification, audio spectra recognition, and others. The sizes of the datasets in terms of number of time series range from moderate ($\leq 1000$ examples) to large ($\leq 30000$), and includes various lengths between $8$ and $18000$. A summary of the dataset characteristics can be found in Table~2 in \cite{dau2019ucr}. Pre-specified train-test splits are provided for each dataset, which we follow. We evaluate all considered kernels on the moderate datasets ($\leq 1000$ time series), but because the non-feature-based become very expensive computationally beyond these sizes, we only evaluate feature-based approaches on medium and large datasets ($\geq 1000$ time series). Each featurized approach is trained and evaluated $5$ times on each dataset in order to account for the randomness in the hyperparameter selection procedure and evaluation.

\begin{table}[!t]
\begin{minipage}{\textwidth}
 \centering
 \captionof{table}{Comparison of \SVM{} test accuracies on moderate multivariate time series classification datasets. For each row, the best result is highlighted in \textbf{bold}, and the second best in \textit{italic}.}
 \label{table:results}
 \resizebox{\textwidth}{!}{
\begin{tabular}{lcccccccc}
\toprule
 & \texttt{RFSF-DP} & \texttt{RFSF-TRP} & \texttt{KSig} & \texttt{KSigPDE} & \texttt{RWS} & \texttt{GAK} & \texttt{RBF} & \texttt{RFF}
\\
\midrule
ArticularyWordRecognition & $0.984$ & $0.981$ & $\mathbf{0.990}$ & $0.983$ & $\mathit{0.987}$ & $0.977$ & $0.977$ & $0.978$
\\ 
AtrialFibrillation & $0.373$ & $0.320$ & $\mathit{0.400}$ & $0.333$ & $\mathbf{0.427}$ & $0.333$ & $0.267$ & $0.373$
\\ 
BasicMotions & $\mathbf{1.000}$ & $\mathbf{1.000}$ & $\mathbf{1.000}$ & $\mathbf{1.000}$ & $\mathit{0.995}$ & $\mathbf{1.000}$ & $0.975$ & $0.860$
\\ 
Cricket & $0.964$ & $0.964$ & $0.958$ & $\mathit{0.972}$ & $\mathbf{0.978}$ & $0.944$ & $0.917$ & $0.886$
\\ 
DuckDuckGeese & $0.636$ & $\mathit{0.664}$ & $\mathbf{0.700}$ & $0.480$ & $0.492$ & $0.500$ & $0.420$ & $0.372$
\\ 
ERing & $0.921$ & $0.936$ & $0.841$ & $\mathit{0.941}$ & $\mathbf{0.945}$ & $0.926$ & $0.937$ & $0.915$
\\ 
EigenWorms & $\mathit{0.817}$ & $\mathbf{0.837}$ & $0.809$ & $0.794$ & $0.623$ & $0.511$ & $0.496$ & $0.443$
\\ 
Epilepsy & $\mathbf{0.949}$ & $\mathit{0.942}$ & $\mathbf{0.949}$ & $0.891$ & $0.925$ & $0.870$ & $0.891$ & $0.777$
\\ 
EthanolConcentration & $0.457$ & $0.439$ & $\mathbf{0.479}$ & $\mathit{0.460}$ & $0.284$ & $0.361$ & $0.346$ & $0.325$
\\ 
FingerMovements & $0.608$ & $0.624$ & $\mathbf{0.640}$ & $\mathit{0.630}$ & $0.612$ & $0.500$ & $0.620$ & $0.570$
\\ 
HandMovementDirection & $\mathit{0.573}$ & $0.568$ & $\mathbf{0.595}$ & $0.527$ & $0.403$ & $\mathbf{0.595}$ & $0.541$ & $0.454$
\\ 
Handwriting & $0.434$ & $0.400$ & $0.479$ & $0.409$ & $\mathbf{0.591}$ & $\mathit{0.481}$ & $0.307$ & $0.249$
\\ 
Heartbeat & $0.717$ & $0.712$ & $0.712$ & $\mathbf{0.722}$ & $0.714$ & $0.717$ & $0.717$ & $\mathit{0.721}$
\\ 
JapaneseVowels & $0.978$ & $0.978$ & $\mathbf{0.986}$ & $\mathbf{0.986}$ & $0.955$ & $\mathit{0.981}$ & $\mathit{0.981}$ & $0.979$
\\ 
Libras & $0.898$ & $\mathbf{0.928}$ & $\mathit{0.922}$ & $0.894$ & $0.837$ & $0.767$ & $0.800$ & $0.800$
\\ 
MotorImagery & $\mathit{0.516}$ & $\mathbf{0.526}$ & $0.500$ & $0.500$ & $0.508$ & $0.470$ & $0.500$ & $0.482$
\\ 
NATOPS & $0.906$ & $0.908$ & $0.922$ & $\mathbf{0.928}$ & $\mathit{0.924}$ & $0.922$ & $0.917$ & $0.900$
\\ 
PEMS-SF & $0.800$ & $0.808$ & $0.827$ & $\mathit{0.838}$ & $0.701$ & $\mathbf{0.855}$ & $\mathbf{0.855}$ & $0.770$
\\ 
RacketSports & $0.874$ & $0.861$ & $\mathbf{0.921}$ & $\mathit{0.908}$ & $0.878$ & $0.849$ & $0.809$ & $0.755$
\\ 
SelfRegulationSCP1 & $0.868$ & $0.856$ & $\mathit{0.904}$ & $\mathit{0.904}$ & $0.829$ & $\mathbf{0.915}$ & $0.898$ & $0.885$
\\ 
SelfRegulationSCP2 & $0.489$ & $0.510$ & $\mathit{0.539}$ & $\mathbf{0.544}$ & $0.481$ & $0.511$ & $0.439$ & $0.492$
\\ 
StandWalkJump & $0.387$ & $0.333$ & $\mathit{0.400}$ & $\mathit{0.400}$ & $0.347$ & $0.267$ & $\mathbf{0.533}$ & $0.267$
\\ 
UWaveGestureLibrary & $0.882$ & $0.881$ & $\mathbf{0.912}$ & $0.866$ & $\mathit{0.897}$ & $0.887$ & $0.766$ & $0.846$
\\ 
\midrule
Avg.~acc. & $\mathit{0.740}$ & $0.738$ & $\mathbf{0.756}$ & $0.735$ & $0.710$ & $0.702$ & $0.692$ & $0.656$
\\ 
Avg.~rank & $3.609$ & $3.739$ & $\textbf{2.348}$ & $\mathit{2.957}$ & $3.957$ & $4.174$ & $4.913$ & $5.913$
\\
\bottomrule
\end{tabular}
}
\end{minipage}
\vspace{10pt}
\resizebox{\textwidth}{!}{
\begin{minipage}{0.51\textwidth}
    \vspace{40pt}
        \centering
        \includegraphics[width=\textwidth]{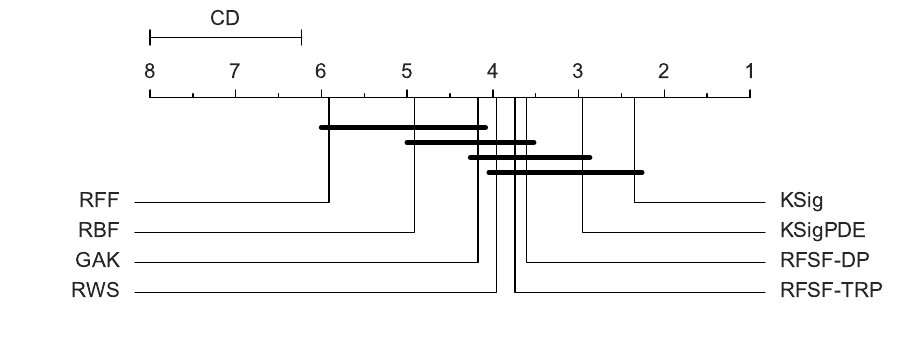}
        \captionof{figure}{Critical difference diagram comparison on moderate datasets of considered approaches using two-tailed Nemenyi test \cite{demsar2006statistical}.}
        \label{fig:cd_diagram}
\end{minipage}
\hspace{1.5pt}
\begin{minipage}{0.49\textwidth}
\vspace{10pt}
\captionof{table}{Comparison of accuracies on large-scale datasets of random features.}
\label{table:large_scale_results}
\resizebox{1.\textwidth}{!}{
\begin{tabular}{lcccc}
\toprule
  & \RFSFD{} & \RFSFT{} & \texttt{RWS} & \texttt{RFF}
\\ 
\midrule
CharacterTrajectories & $\mathit{0.990}$ & $\mathit{0.990}$ & $\mathbf{0.991}$ & $0.989$
\\ 
FaceDetection & $\mathit{0.653}$ & $\mathbf{0.656}$ & $0.642$ & $0.572$
\\ 
InsectWingbeat & $\mathit{0.436}$ & $\mathbf{0.459}$ & $0.227$ & $0.341$
\\ 
LSST & $0.589$ & $\mathit{0.624}$ & $\mathbf{0.631}$ & $0.423$
\\ 
PenDigits & $\mathit{0.983}$ & $0.982$ & $\mathbf{0.989}$ & $0.980$
\\ 
PhonemeSpectra & $\mathit{0.204}$ & $\mathit{0.204}$ & $\mathbf{0.205}$ & $0.083$
\\ 
SITS1M & $\mathbf{0.745}$ & $\mathit{0.740}$ & $0.610$ & $0.718$
\\ 
SpokenArabicDigits & $\mathbf{0.981}$ & $\mathit{0.980}$ & $\mathbf{0.981}$ & $0.964$
\\ 
fNIRS2MW & $\mathbf{0.659}$ & $\mathit{0.658}$ & $0.621$ & $0.642$
\\
\midrule
Avg.~acc. & $\mathit{0.693}$ & $\mathbf{0.699}$ & $0.655$ & $0.635$
\\ 
Avg.~rank & $\mathbf{1.778}$ & $\mathit{1.889}$ & $2.222$ & $3.333$
\\
\bottomrule
\end{tabular}
}
\end{minipage}
}
\end{table}

\paragraph{Datasets: Mental Workload Intensity Classification}
We evaluate featurized approaches on a large-scale brain-activity recording data set called \texttt{fNIRS2MW}.\footnote{\href{https://github.com/tufts-ml/fNIRS-mental-workload-classifiers}{https://github.com/tufts-ml/fNIRS-mental-workload-classifiers}} This dataset contains brain activity recordings collected from overall $68$ participants during a $30$-$60$ minute experimental session, where they were asked to carry out tasks of varying intensity. The collected time series are sliced into $30$ second segments using a sliding window, and each segment is labelled with an intensity level ($0$-$3$), giving rise to overall $\sim 100000$ segments, which we split in a ratio of $80-20$ for training and testing. We convert the task into a binary classification problem by assigning a label whether the task is low ($0$ or $1$) or high ($2$ or $3$) intensity.

\paragraph{Datasets: Satellite Image Classification} As a massive scale task, we use a satellite imagery dataset\footnote{ \href{https://cloudstor.aarnet.edu.au/plus/index.php/s/pRLVtQyNhxDdCoM}{https://cloudstor.aarnet.edu.au/plus/index.php/s/pRLVtQyNhxDdCoM}} of $N = 10^6$ time series. Each length $\ell = 46$ time series corresponds to a vegetation index calculated from remote sensing data, and the task is to classify land cover types \cite{petitjean2012satellite} by mapping vegetation profiles to various types of crops and forested areas corresponding to $24$ classes. We split the dataset in a ratio of $90$-$10$ for training and testing.

\subsection{Results} 
Table \ref{table:results} compares test accuracies on moderate size multivariate time series classification datasets with $N \leq 1000$ from the UEA archive. \KS{} provides state-of-the-art performance among all sequence kernels with taking the highest aggregate score in terms of all of average accuracy, average rank, and number of first places. Our proposed random feature variants \RFSFD{} and \RFSFT{} provide comparable performance on most of the datasets in terms of accuracy, and they are only outperformed by \KS{} and \KSP{} with respect to average accuracy and rank. Interestingly, \RFSFT{} has more first place rankings, but \RFSFD{} performs slightly better on average. This shows that on datasets of these sizes, using either of \RFSFD{} and \RFSFT{} does not sacrifice model performance - even leading to improvements in some cases, potentially due to the implicit regularization effect of restricting to a finite-dimensional feature space - and it can already provide speedups. We visualize the critical difference diagram comparison of all considered approaches in Figure \ref{fig:cd_diagram}.

Table \ref{table:large_scale_results} demonstrates the performance of scalable approaches, i.e.~\RFSFD{}, \RFSFT{}, \RWS{} and \RFF{} on the remaining UEA datasets ($N \geq 1000$), the dataset \texttt{fNIRS2MW} ($N=10^5$), and the satellite dataset \texttt{SITS1M} ($N = 10^6$). We find it infeasible to perform full cross-validation for quadratic time kernels on these datasets due to expensive kernel computations and downstream cost of dual \SVM{}. The results show that both variants \RFSFD{} and \RFSFT{} perform significantly better on average with respect to accuracy and rank then both \RWS{} and \RFF{}. Note when \RWS{} takes first place, it only improves over our approach marginally, however, when it underperforms, it often does so severely. This is not surprising as both \RFSFD{} and \RFSFT{} approximate the signature kernel, which is a universal kernel on time series; it is theoretically capable of learning from any kind of time series data as supported by its best overall performance above.

\section{Conclusion}
We constructed a random kernel $\rffsigkernel[\leq M]$ for sequences that benefits from
\begin{enumerate*}[label=(\roman*)]
  \item lifting the original sequence to an infinite-dimensional \RKHS{} $\Hil$,
  \item linear complexity in sequence length,
  \item being with high probability close to the signature kernel $\sigkernel$.
\end{enumerate*}
Thereby it combines the strength of the signature kernel $\sigkernel$ which is to implicitly use the iterated integrals of a sequence that has an infinite-dimensional \RKHS{} $\Hil$ as state-space with the strength of (unkernelized) signature features $\signature$ that only require linear time complexity.
Our main theoretical result extends the theoretical guarantees for translation-invariant kernels on linear spaces to the signature kernel defined on the nonlinear domain $\seq$; however, the proofs differ from the classic case and require to analyse the error propagation in tensor space.
A second step is more straightforward, and combines this approach with random projections in finite-dimensions for tensors to reduce the complexity in memory further.
The advantages and disadvantages of the resulting approach are analogous to the classic \RFF{} technique on $\R^d$, namely a reduction of computational complexity by an order for the price of an approximation that only holds with high probability.
As in the classic \RFF{} case, our experiments indicate that this is in general a favourable tradeoff.

In the future, it would be interesting both theoretically and empirically to replace the vanilla Monte Carlo integration in the \RFF{} construction by block-orthogonal random matrices as done in \cite{yu2016orthogonal}.
Further, our random features can also be used to define an unbiased appoximation to the inner product of expected signatures, which has found usecases, among many, in nonparametric hypothesis testing and market regime detection \cite{chevyrev2022signature, horvath2023non}, training of generative models \cite{ni2021sig, issa2023non}, and graph representation learning \cite{toth2022capturing}.  

\section*{Acknowledgements}
CT was supported by the Mathematical Institute Award by the University of Oxford. HO was supported by the Hong Kong Innovation and Technology Commission (InnoHK Project CIMDA) and by the EPSRC grant Datasig [EP/S026347/1].


\newpage

\bibliographystyle{siamplain}
\bibliography{BIB/clean}

\newpage

\appendix

\section{Concentration of measure} \label{apx:prob}

\paragraph{Classic inequalities} The following inequalities are classic, and their proofs are in analysis textbooks, e.g.~\cite{dudley2002real,rudin1976principles}. Firstly, Jensen's inequality is useful for convex (concave) expectations.
\begin{lemma}[Jensen's inequality] \label{lem:jensen}
    Let $X$ be an integrable random variable, and $f: \bbR \to \bbR$ a convex function, such that $f\pars{X}$ is also integrable. Then, the following inequality holds:
    \begin{align}
        f\pars{\expe{X}} \leq \expe{f\pars{X}}.
    \end{align}
\end{lemma}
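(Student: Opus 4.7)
The plan is to prove Jensen's inequality via the supporting hyperplane (subgradient) characterization of convex functions, which is the standard route and avoids any appeal to higher-level machinery. The only mild subtlety is measurability/integrability at the endpoint, but since the lemma already assumes both $X$ and $f(X)$ are integrable, these can be handled without fuss.

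First I would recall that for any convex $f:\bbR\to\bbR$, at every interior point $x_0$ of its domain there exists a subgradient $c\in\bbR$ (take, for instance, the left or right derivative of $f$ at $x_0$, both of which exist by convexity) such that
\begin{align}
    f(x) \;\geq\; f(x_0) + c\,(x - x_0) \quad \text{for all } x \in \bbR.
\end{align}
This is the supporting line inequality and is the only analytic fact we need about convex functions.

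Next I would apply this with the choice $x_0 \coloneqq \expe{X}$, which lies in $\bbR$ by integrability of $X$. Substituting $x = X(\omega)$ pointwise gives
\begin{align}
    f(X) \;\geq\; f(\expe{X}) + c\,(X - \expe{X}).
\end{align}
Both sides are integrable: the left side by assumption, the right side because it is affine in the integrable random variable $X$. Taking expectations of both sides, linearity yields
\begin{align}
    \expe{f(X)} \;\geq\; f(\expe{X}) + c\,\bigl(\expe{X} - \expe{X}\bigr) \;=\; f(\expe{X}),
\end{align}
which is exactly the claim.

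There is essentially no main obstacle here; the one place worth flagging is the existence of the subgradient $c$, which I would justify in one line by noting that convexity on $\bbR$ forces the existence of finite one-sided derivatives at every point and that either of them serves as a valid slope for the supporting line. If one wanted to extend the statement to convex $f$ defined only on an interval, one would additionally need $X$ to take values in that interval almost surely so that $\expe{X}$ lies in the domain; but for the stated $\bbR\to\bbR$ formulation this is automatic.
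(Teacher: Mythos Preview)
Your proof is correct and is the standard supporting-line argument. The paper itself does not supply a proof of this lemma at all; it simply states that Jensen's inequality is classical and refers the reader to analysis textbooks (Dudley, Rudin), so your write-up is more detailed than what the paper provides.
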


Hölder's inequality is a generalization of the Cauchy-Schwarz inequality to $L^p$ spaces.

\begin{lemma}[Hölder's inequality] \label{lem:holder}
    Let $p, q \geq 1$
    such that $\frac{1}{p} + \frac{1}{q} = 1$. Let $X$ and $Y$ respectively be $L^p$ and $L^q$ integrable random variables, i.e.~$\expe{\abs{X}^p} < \infty$ and $\expe{\abs{Y}^q} < \infty$. Then, $XY$ is integrable, and it holds that
    \begin{align}
        \expe{\abs{XY}} \leq \bbE^{1/p}\bracks{\abs{X}^p} \bbE^{1/q}\bracks{\abs{Y}^q}.
    \end{align}
\end{lemma}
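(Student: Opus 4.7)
The plan is to reduce Hölder's inequality to a pointwise algebraic estimate (Young's inequality) and then integrate, which is the standard textbook route. Throughout I will assume $p, q \in (1, \infty)$ as the substantive case; the boundary case $p = 1$, $q = \infty$ (which really requires the essential-supremum norm) is a trivial separate argument and is not the main content of the statement as written.

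First I would dispose of the degenerate cases: if either $\bbE[|X|^p] = 0$ or $\bbE[|Y|^q] = 0$, then $X = 0$ or $Y = 0$ almost surely, so $XY = 0$ a.s., and both sides of the claimed inequality vanish. Assume henceforth that $A \coloneqq \bbE^{1/p}[|X|^p] > 0$ and $B \coloneqq \bbE^{1/q}[|Y|^q] > 0$, and normalize by setting $\tilde X \coloneqq X/A$ and $\tilde Y \coloneqq Y/B$, so that $\bbE[|\tilde X|^p] = \bbE[|\tilde Y|^q] = 1$. The claim is then equivalent to $\bbE[|\tilde X \tilde Y|] \leq 1$.

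The key pointwise ingredient is Young's inequality: for all $a, b \geq 0$,
\begin{align}
    ab \leq \frac{a^p}{p} + \frac{b^q}{q}.
\end{align}
I would establish this by writing $ab = \exp\pars{\frac{1}{p} \log a^p + \frac{1}{q} \log b^q}$ for $a, b > 0$ and invoking convexity of $\exp$ with weights $\frac{1}{p}, \frac{1}{q}$ (which sum to $1$). This is essentially Jensen's inequality (Lemma~\ref{lem:jensen}) applied to the two-point measure putting mass $1/p$ on $\log a^p$ and mass $1/q$ on $\log b^q$; boundary points $a = 0$ or $b = 0$ are handled by continuity.

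Applying Young pointwise to $a = |\tilde X(\omega)|$, $b = |\tilde Y(\omega)|$ and taking expectations gives
\begin{align}
    \bbE[|\tilde X \tilde Y|] \leq \frac{\bbE[|\tilde X|^p]}{p} + \frac{\bbE[|\tilde Y|^q]}{q} = \frac{1}{p} + \frac{1}{q} = 1,
\end{align}
so in particular $\tilde X \tilde Y$ is integrable; rescaling by $AB$ yields $\bbE[|XY|] \leq AB$, which is precisely the claim, and integrability of $XY$ follows. The only conceptual obstacle is justifying Young's inequality from convexity of the exponential — a one-line consequence of Jensen — and the rest is bookkeeping around the normalization, so I expect no genuine difficulties.
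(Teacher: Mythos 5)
Your proof is correct and is the standard textbook argument (normalize by the $L^p$ and $L^q$ norms, apply Young's inequality pointwise, integrate), which is exactly the route the paper implicitly defers to: the paper does not prove this lemma itself but cites classic references, and your key ingredient, Young's inequality, is even stated separately in the same section of the paper. The handling of the degenerate cases and the restriction to $p,q\in(1,\infty)$ are appropriate, since the hypothesis $\frac{1}{p}+\frac{1}{q}=1$ with finite $p,q\geq 1$ already forces $p,q>1$.
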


Although not inherently a probabilistic inequality, Young's inequality can be used to decouple products of random variables.

\begin{lemma}[Young's inequality]
    Let $p,q > 0$ with $\frac{1}{p} + \frac{1}{q} = 1$. Then, for every $a,b \geq 0$
    \begin{align}
        ab \leq \frac{a^p}{p} + \frac{b^q}{q}.
    \end{align}
\end{lemma}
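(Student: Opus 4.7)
\medskip

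\noindent\textbf{Proof proposal.} The plan is to reduce the inequality to the convexity of the exponential function (equivalently, the concavity of the logarithm), using the hypothesis $\tfrac{1}{p}+\tfrac{1}{q}=1$ to supply a valid convex combination. First I would dispose of the degenerate case: if $a=0$ or $b=0$, then the left-hand side vanishes while the right-hand side is non-negative (here we use $p,q>0$ so $a^p, b^q \ge 0$), and the bound holds trivially. From now on assume $a,b>0$.

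Next I would observe that the hypothesis forces $p,q>1$: from $\tfrac1p+\tfrac1q=1$ and $p,q>0$, neither reciprocal can be $\ge 1$, so both $p,q$ strictly exceed $1$. In particular $\tfrac{1}{p}$ and $\tfrac{1}{q}$ are positive weights summing to $1$, which is the only property we will use of them.

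The core step is to rewrite the product in exponential form and apply convexity:
\begin{align}
ab \;=\; \exp\!\left(\log(ab)\right) \;=\; \exp\!\left(\tfrac{1}{p}\log(a^p) + \tfrac{1}{q}\log(b^q)\right).
\end{align}
Since $\exp$ is convex on $\bbR$ and $(\tfrac1p,\tfrac1q)$ form a probability vector, Jensen's inequality (Lemma~\ref{lem:jensen}) applied to the two-point measure placing mass $\tfrac1p$ on $\log(a^p)$ and mass $\tfrac1q$ on $\log(b^q)$ gives
\begin{align}
\exp\!\left(\tfrac{1}{p}\log(a^p) + \tfrac{1}{q}\log(b^q)\right) \;\le\; \tfrac{1}{p}\exp(\log(a^p)) + \tfrac{1}{q}\exp(\log(b^q)) \;=\; \tfrac{a^p}{p} + \tfrac{b^q}{q},
\end{align}
which is the desired bound. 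Equivalently, one could take logarithms of both sides (valid because $ab>0$ and the right-hand side is then positive) and invoke concavity of $\log$ directly; the two arguments are formally the same.

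There is no real obstacle here: the only subtlety is making sure the hypothesis $\tfrac1p+\tfrac1q=1$ is used precisely where it is needed (to supply a convex combination), and that the degenerate case $ab=0$ is handled before taking logarithms or dividing. Equality holds iff $a^p=b^q$, by strict convexity of $\exp$, though this refinement is not required by the statement.
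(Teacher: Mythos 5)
Your proof is correct. Note that the paper does not actually prove this lemma: it is listed among the ``classic inequalities'' whose proofs are deferred to analysis textbooks (the only proof given nearby is for the reverse version, Lemma~\ref{lem:reverse}, which is deduced \emph{from} Young's inequality). Your argument --- handling $ab=0$ separately, noting $p,q>1$, writing $ab=\exp\bigl(\tfrac1p\log(a^p)+\tfrac1q\log(b^q)\bigr)$ and applying convexity of $\exp$ with the weights $\tfrac1p,\tfrac1q$ --- is exactly the standard textbook proof, and the appeal to Jensen's inequality for a two-point measure is a legitimate (if slightly heavier than necessary) way to phrase the convexity step.
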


\begin{lemma}[Reverse Young's inequality] \label{lem:reverse}
    Let $p,q > 0$ such that $\frac{1}{p} - \frac{1}{q} = 1$. Then, for every $a \geq 0$ and $b > 0$, it holds that
    \begin{align}
        ab \geq \frac{a^p}{p} - \frac{b^{-q}}{q}.
    \end{align}
\end{lemma}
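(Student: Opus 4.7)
The plan is to derive this reverse inequality directly from the standard (forward) Young inequality by a change of variables, rather than treating it as a new result. The key observation is that the hypothesis $\frac{1}{p} - \frac{1}{q} = 1$ with $p, q > 0$ forces $\frac{1}{p} = 1 + \frac{1}{q} > 1$, so automatically $p \in (0,1)$, while $q = \frac{p}{1-p} > 0$. This means $\alpha \coloneqq 1/p$ and $\beta \coloneqq 1/(1-p)$ are both strictly greater than $1$ and satisfy $\frac{1}{\alpha} + \frac{1}{\beta} = p + (1-p) = 1$, i.e.\ they are Hölder conjugates, so the ordinary Young inequality (which is already available in the paper) applies.

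First, I would dispose of the degenerate case $a = 0$, where the inequality reduces to $0 \geq -b^{-q}/q$, which holds because $b > 0$ and $q > 0$. For $a > 0$, the trick is to write
\begin{align}
    \frac{a^p}{p} = \frac{(ab)^p \cdot b^{-p}}{p},
\end{align}
which is just the identity $a = (ab) \cdot b^{-1}$ raised to the $p$-th power. Now I would apply the standard Young inequality $xy \leq \frac{x^\alpha}{\alpha} + \frac{y^\beta}{\beta}$ with $x = (ab)^p$, $y = b^{-p}$, and the conjugate pair $\alpha = 1/p$, $\beta = 1/(1-p)$ identified above. A direct computation yields $\frac{x^\alpha}{\alpha} = p(ab)$ and $\frac{y^\beta}{\beta} = (1-p)\, b^{-p/(1-p)}$.

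The final step is an algebraic simplification using the hypothesis: since $q = p/(1-p)$, we have $b^{-p/(1-p)} = b^{-q}$ and $\frac{1-p}{p} = \frac{1}{q}$, so dividing the resulting inequality through by $p$ and rearranging gives exactly $ab \geq \frac{a^p}{p} - \frac{b^{-q}}{q}$.

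I do not anticipate a real obstacle here — the whole proof is a two-line manipulation once the right factorization $a^p = (ab)^p b^{-p}$ is spotted. The only conceptual point to emphasize is that the hypothesis $\frac{1}{p} - \frac{1}{q} = 1$ (as opposed to $\frac{1}{p} + \frac{1}{q} = 1$) is precisely what is needed so that $1/p$ and $1/(1-p)$ form a Hölder pair in the forward direction; equivalently, one can view this as the standard reverse Young inequality $ab \geq \frac{a^p}{p} + \frac{b^{q'}}{q'}$ for $0 < p < 1$ and conjugate exponent $q' < 0$, after setting $q' = -q$.
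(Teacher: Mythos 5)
Your proof is correct and is essentially the paper's own argument: the paper also applies the forward Young inequality to the pair $(ab)^p$ and $b^{-p}$ with conjugate exponents $\tfrac{1}{p}$ and $\tfrac{q}{p}$, and since $q = \tfrac{p}{1-p}$ your exponent $\beta = \tfrac{1}{1-p}$ is exactly $\tfrac{q}{p}$, so the substitutions coincide. Your handling of the $a=0$ case and the closing remark about negative conjugate exponents are fine but not needed beyond the same two-line computation.
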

\begin{proof} Apply Young's inequality with $p^\p = \frac{1}{p}$ and $q^\p = \frac{q}{p}$ to $a^\p = (ab)^p$ and $b^\p = b^{-p}$.
\end{proof}

\paragraph{Subexponential concentration}

Next, we state a variation on the well-known Bernstein inequality, which holds for  random variables in the subexponential class. The condition \eqref{eq:bernstein_cond_twot} on a random variable $X$, in this case, is formulated as a moment-growth bound, called a Bernstein moment condition. We show in Lemmas \ref{lem:alpha_bernstein_cond} and \ref{lem:alpha_exp_norm_to_bernstein} (in a more general setting) that \eqref{eq:bernstein_cond_twot} is equivalent to the random variable being subexponential. Further, note that the condition itself is given in terms of non-centered random variables, while the statement of the theorem is about their centered counterparts. For similar results, see \cite[Sec.~2.2.2]{van1996weak}.
\begin{theorem}[Bernstein inequality - one-tailed] \label{thm:bernstein_onetail}
Let $X_1, \dots, X_n$ be independent random variables satisfying the following moment-growth condition for some $S, R > 0$,
    \begin{align} \label{eq:bernstein_cond_onet}
        \expe{X_i^k} \leq \frac{k! S^2 R^{k-2}}{2} \quad \text{for all} \spc k \geq 2.
    \end{align}
Then, it holds for $\tilde X_i \coloneqq X_i - \expe{X_i}$ that
\begin{align} \label{eq:bernstein_ineq_onet}
    \prob{\sumnolim_{i=1}^n \tilde X_i \geq t} \leq \exp\pars{\frac{-t^2}{2(nS^2 + Rt)}}.
\end{align}
\end{theorem}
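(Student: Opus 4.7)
The plan is to prove this by the standard Chernoff/moment generating function approach, taking care that the Bernstein moment condition \eqref{eq:bernstein_cond_onet} is stated on non-centered moments of $X_i$ while the conclusion concerns the centered variables $\tilde X_i$.

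First, I would apply Markov's inequality to $\exp\pars{\lambda \sum_{i=1}^n \tilde X_i}$ for an arbitrary $\lambda > 0$, obtaining
\begin{align}
    \prob{\sumnolim_{i=1}^n \tilde X_i \geq t} \leq e^{-\lambda t} \prod_{i=1}^n \expe{e^{\lambda \tilde X_i}}
\end{align}
by independence. The key step is to control each factor $\expe{e^{\lambda \tilde X_i}} = e^{-\lambda \expe{X_i}} \expe{e^{\lambda X_i}}$. Expanding the exponential in a power series, plugging in the moment bound \eqref{eq:bernstein_cond_onet} (with the $k=0,1$ terms written separately), and summing the resulting geometric series (which converges whenever $\lambda R < 1$) gives
\begin{align}
    \expe{e^{\lambda X_i}} \leq 1 + \lambda \expe{X_i} + \frac{\lambda^2 S^2}{2} \sumnolim_{k \geq 2} (\lambda R)^{k-2} = 1 + \lambda \expe{X_i} + \frac{\lambda^2 S^2}{2(1 - \lambda R)}.
\end{align}
Applying $1 + x \leq e^x$ and then multiplying by $e^{-\lambda \expe{X_i}}$ cancels the linear term and yields the clean bound $\expe{e^{\lambda \tilde X_i}} \leq \exp\pars{\tfrac{\lambda^2 S^2}{2(1-\lambda R)}}$.

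Combining these over $i \in [n]$ gives
\begin{align}
    \prob{\sumnolim_{i=1}^n \tilde X_i \geq t} \leq \exp\pars{-\lambda t + \frac{n \lambda^2 S^2}{2(1 - \lambda R)}}
\end{align}
for every $\lambda \in (0, 1/R)$. The final step is to optimize the exponent over $\lambda$. Rather than differentiating, I would simply substitute the standard Bernstein choice $\lambda = \frac{t}{nS^2 + Rt}$, which lies in $(0, 1/R)$. A direct calculation gives $1 - \lambda R = \frac{nS^2}{nS^2 + Rt}$, so $\frac{n \lambda^2 S^2}{2(1 - \lambda R)} = \frac{t^2}{2(nS^2 + Rt)}$, while $\lambda t = \frac{t^2}{nS^2 + Rt}$, and subtracting these yields the claimed bound \eqref{eq:bernstein_ineq_onet}.

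The only subtlety—and the one place where the non-centered formulation of the hypothesis matters—is the passage from $\expe{X_i^k}$ to the MGF of $\tilde X_i$. I expect this to be the main (mild) obstacle: one is tempted to first bound the centered moments $\expe{\tilde X_i^k}$ via $|\tilde X_i|^k \leq 2^{k-1}(|X_i|^k + |\expe{X_i}|^k)$, but this loses a factor $2^{k-1}$ and degrades the constants. Working instead with $\expe{e^{\lambda \tilde X_i}} = e^{-\lambda \expe{X_i}}\expe{e^{\lambda X_i}}$ sidesteps this entirely, since the linear term $\lambda \expe{X_i}$ in the expansion of $\expe{e^{\lambda X_i}}$ is cancelled exactly by the prefactor after using $1+x\le e^x$; no control on higher centered moments is ever needed.
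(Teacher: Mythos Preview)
Your proposal is correct and follows essentially the same approach as the paper's proof: Chernoff bound, factoring via independence, expanding $\expe{e^{\lambda X_i}}$ and using the moment condition to sum a geometric series, applying $1+x\le e^x$ to cancel the linear term against $e^{-\lambda\expe{X_i}}$, and finally substituting $\lambda = t/(nS^2+Rt)$. Your explicit verification of the final arithmetic and your remark on why one should not first pass to centered moments are both accurate.
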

\begin{proof}
    We have for $\lambda > 0$
    \begin{align}
        \prob{\sumnolim_{i=1}^n \tilde X_i \geq t}
        & \stackrel{(a)}{\leq}
        \exp(-\lambda t) \expe{\exp\pars{\lambda \sumnolim_{i=1}^n \tilde X_i}}
        \\&\stackrel{(b)}{=}
        \exp\pars{-\lambda t - \lambda \sumnolim_{i=1}^n \expe{X_i}} \prod_{i=1}^n \expe{\exp(\lambda X_i)},
        \label{eq:chernoff}
    \end{align}
    where (a) holds for any $\lambda  >0$ by the  Chernoff bound applied to $\sum_{i=1}^n \tilde X_i$, and in (b) we used the independence of $X_i$-s. Bounding the moment-generating function of $X_i$, one gets
    \begin{align}
        \expe{\exp(\lambda X_i)}
        &\stackrel{\text{(d)}}{=}
        1 + \lambda \expe{X_i} + \sum_{k=2}^\infty \frac{\lambda^k}{k!} \expe{X_i^k}
        \stackrel{\text{(e)}}{\leq}
        1 + \lambda \expe{X_i} + \frac{S^2 \lambda^2}{2} \sum_{k=0}^\infty \lambda^k R^k
        \\
        &\stackrel{\text{(f)}}{=} 1 + \lambda \expe{X_i} + \frac{S^2 \lambda^2}{2(1- \lambda R)}
         \stackrel{\text{(g)}}{\leq}
        \exp\pars{\lambda \expe{X_i} + \frac{S^2\lambda^2}{2(1 - \lambda R)}}, \label{eq:mgf_bound}
    \end{align}
    where (d) is due to the Taylor expansion of the exponential function, (e) implied by \eqref{eq:bernstein_cond_onet}, (f) holds by the sum of geometric series for any $\lambda < 1/R$, (g) follows from the inequality $1+x \leq \exp(x)$ for all $x \in \bbR$. Using this to bound \eqref{eq:chernoff},
    \begin{align}
        \prob{\sumnolim_{i=1}^n \tilde X_i \geq t}
        &\leq
        \exp\pars{-\lambda t - \lambda \sumnolim_{i=1}^n \expe{X_i}} \prod_{i=1}^n \exp\pars{\lambda \expe{X_i} + \frac{S^2\lambda^2}{2(1 - \lambda R)}}
        \\
        &=
        \exp\pars{-\lambda t + \frac{n S^2\lambda^2}{2(1 - \lambda R)}}. \label{eq:bernstein_ineq_lambda}
    \end{align}
    Finally, choosing $\lambda = t / (nS^2 + Rt)$ gives the statement.
\end{proof}

\begin{corollary}[Bernstein inequality - two-tailed] \label{thm:bernstein_twotail}
Let $X_1, \dots, X_n$ be independent random variables satisfying the the following moment-growth condition for some $S, R > 0$,
    \begin{align} \label{eq:bernstein_cond_twot}
        \expe{\abs{X_i}^k} \leq \frac{k! S^2 R^{k-2}}{2} \quad \text{for all} \spc k \geq 2.
    \end{align}
Then, it holds for $\tilde X_i \coloneqq X_i - \expe{X_i}$ that
\begin{align} \label{eq:bernstein_ineq_twot}
    \prob{\abs{\sumnolim_{i=1}^n \tilde X_i} \geq t} \leq 2  \exp\pars{\frac{-t^2}{2(nS^2 + Rt)}}.
\end{align}
\end{corollary}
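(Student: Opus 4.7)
The plan is to reduce this two-tailed bound to the one-tailed Bernstein inequality (Theorem \ref{thm:bernstein_onetail}) by a union bound over the events $\curls{\sum_{i=1}^n \tilde X_i \geq t}$ and $\curls{\sum_{i=1}^n \tilde X_i \leq -t}$, and apply the one-tailed bound to each. Concretely, I would first observe
\begin{align}
    \prob{\abs{\sumnolim_{i=1}^n \tilde X_i} \geq t} \leq \prob{\sumnolim_{i=1}^n \tilde X_i \geq t} + \prob{\sumnolim_{i=1}^n (-\tilde X_i) \geq t},
\end{align}
and then bound each summand by $\exp\pars{-t^2/(2(nS^2 + Rt))}$ using the one-tailed statement.

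The only point requiring verification is that the hypotheses of Theorem \ref{thm:bernstein_onetail} apply to both the variables $X_1, \dots, X_n$ and their negations $-X_1, \dots, -X_n$. For the first application, the one-tailed moment condition \eqref{eq:bernstein_cond_onet} is an immediate consequence of the two-tailed condition \eqref{eq:bernstein_cond_twot}, since $\expe{X_i^k} \leq \expe{\abs{X_i}^k} \leq \nicefrac{k! S^2 R^{k-2}}{2}$ for all $k \geq 2$. For the second application to $-X_i$, one uses that $\abs{-X_i}^k = \abs{X_i}^k$, so the two-tailed moment condition is preserved under negation, and the one-tailed condition \eqref{eq:bernstein_cond_onet} thus follows in the same way. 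Moreover, the recentering $-\tilde X_i = (-X_i) - \expe{-X_i}$ matches the form required by Theorem \ref{thm:bernstein_onetail}, so applying \eqref{eq:bernstein_ineq_onet} to the family $\{-X_i\}_{i=1}^n$ yields the desired bound on the lower tail.

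Since no technical difficulty remains beyond these two bookkeeping checks, there is no real obstacle. Summing the two identical exponential bounds gives the prefactor of $2$ in \eqref{eq:bernstein_ineq_twot}, completing the proof.
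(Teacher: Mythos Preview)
Your proof is correct and follows exactly the same approach as the paper: apply the one-tailed Theorem \ref{thm:bernstein_onetail} to both $\sum_i \tilde X_i$ and $-\sum_i \tilde X_i$ and combine via a union bound. Your explicit verification that the moment condition \eqref{eq:bernstein_cond_onet} is inherited by $-X_i$ from \eqref{eq:bernstein_cond_twot} is a detail the paper leaves implicit.
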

\begin{proof}
Applying Theorem \ref{thm:bernstein_onetail} to $\sumnolim_{i=1}^n \tilde X_i$ and $-\sumnolim_{i=1}^n \tilde X_i$, and combining the two bounds gives the two-tailed result.
\end{proof}

\paragraph{$\alpha$-exponential concentration}

In this section, we introduce a specific class of Orlicz norms for heavy-tailed random variables, which generalizes subexponential distributions.
\begin{definition}[$\alpha$-exponential Orlicz norm] \label{def:alpha_subexp_norm}
    Let $\alpha > 0$ and define the function
    \begin{align}
    \Psi_\alpha: \bbR \to \bbR, \quad \Psi_\alpha(x) \coloneqq \exp\pars{x^\alpha} - 1 \quad \text{for all} \spc x \in \bbR.
    \end{align}
    The $\alpha$-exponential Orlicz (quasi-)norm of a random variable $X$ is defined as
    \begin{align}
        \norm{X}_{\Psi_\alpha} \coloneqq \inf\curls{t > 0 : \expe{\Psi_\alpha\pars{\frac{\abs{X}}{t}}} \leq 1},
    \end{align}
    adhering to the standard convention that $\inf \emptyset = \infty$.
\end{definition}
If a random variable $X$ satisfies $\norm{X}_{\Psi_\alpha} < \infty$, it is either called an $\alpha$-(sub)exponential random variable (if $\alpha < 1$) \cite{sambale2022notes,gotze2021concentration,chamakh2020orlicz}, or sub-Weibull of order-$\alpha$ \cite{kuchibhotla2022moving,zhang2021concentration}.

An alternative characterization of the $\alpha$-exponential norm is the following tail-bound.

\begin{remark}[Tail bound] \label{remark:orlicz_tail} Let $\alpha > 0$ and $X$ be a random variable such that $\norm{X}_{\Psi_\alpha} < \infty$. Then,
    \begin{align}
        \prob{\abs{X} \geq \epsilon} \leq 2\exp\pars{- \frac{\epsilon^\alpha}{\norm{X}_{\Psi_\alpha}^\alpha}}.
    \end{align}
\end{remark}
\begin{proof}
    Due to Definition \ref{def:alpha_subexp_norm}, $\expe{\exp\pars{\frac{\abs{X}^\alpha}{\norm{X}^\alpha_{\Psi_\alpha}}}} \leq 2$, hence by Markov's inequality
    \begin{align}
        \bbP\bracks{\abs{X} \geq \epsilon}
        =
        \prob{\frac{\abs{X}^\alpha}{\norm{X}_{\Psi_\alpha}^\alpha}
        \geq
        \frac{\epsilon^\alpha}{\norm{X}_{\Psi_\alpha}^\alpha}}
        \leq
        e^{-\frac{\epsilon^\alpha}{\norm{X}_{\Psi_\alpha}^\alpha}} \expe{\exp\pars{\frac{\abs{X}^\alpha}{\norm{X}_{\Psi_\alpha}^\alpha)}}}
        \leq
        2 e^{-\frac{\epsilon^\alpha}{\norm{X}_{\Psi_\alpha}^\alpha}}.
    \end{align}
\end{proof}

Note that although $\norm{\cdot}_{\Psi_\alpha}$ is often referred to as a norm, it does not satisfy the triangle inequality for $\alpha < 1$, although we can still relate the norm of the sum to the sum of the norms.
\begin{lemma}[Generalized triangle inequality for Orlicz norm] \label{lem:alpha_exp_triangle}
    It holds for any random variables $X, Y$ and $\alpha > 0$ that
    \begin{align}
        \norm{X + Y}_{\Psi_\alpha} \leq C_\alpha \pars{\norm{X}_{\Psi_\alpha} + \norm{Y}_{\Psi_\alpha}},
    \end{align}
    where $C_\alpha = 2^{1/\alpha}$ if $\alpha < 1$ and $1$ otherwise.
\end{lemma}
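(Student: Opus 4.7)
The plan is to split into two cases according to whether $\alpha \geq 1$ or $\alpha < 1$, since the convexity of $\Psi_\alpha(x)=\exp(x^\alpha)-1$ on $[0,\infty)$ holds precisely in the former regime, and this dictates which technique applies. In both cases I abbreviate $a\coloneqq \|X\|_{\Psi_\alpha}$ and $b\coloneqq \|Y\|_{\Psi_\alpha}$, assume $a,b>0$ (otherwise the result is trivial by monotone convergence), and aim to exhibit a value $t$ such that $\bbE[\Psi_\alpha(|X+Y|/t)]\le 1$, which by Definition~\ref{def:alpha_subexp_norm} yields $\|X+Y\|_{\Psi_\alpha}\le t$.

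For $\alpha\ge 1$ the function $\Psi_\alpha$ is convex and non-decreasing on $[0,\infty)$, so I would run the standard Luxemburg-norm argument: by the triangle inequality and convexity,
\begin{align}
\Psi_\alpha\!\left(\frac{|X+Y|}{a+b}\right)\le \Psi_\alpha\!\left(\frac{a}{a+b}\cdot\frac{|X|}{a}+\frac{b}{a+b}\cdot\frac{|Y|}{b}\right)\le \frac{a}{a+b}\Psi_\alpha\!\left(\frac{|X|}{a}\right)+\frac{b}{a+b}\Psi_\alpha\!\left(\frac{|Y|}{b}\right).
\end{align}
Taking expectations and invoking the definitions of $a$ and $b$ gives the bound $1$, so $t=a+b$ works and $C_\alpha=1$.

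For $\alpha<1$ convexity fails, so the above splitting is unavailable; this is the main obstacle. I would instead rely on two tools: the subadditivity $(x+y)^\alpha\le x^\alpha+y^\alpha$ valid for $\alpha\in(0,1]$ and $x,y\ge 0$, and the Cauchy--Schwarz inequality. Set $t\coloneqq 2^{1/\alpha}(a+b)$ so that $2/t^\alpha=(a+b)^{-\alpha}\le \min(a^{-\alpha},b^{-\alpha})$. Then $|X+Y|^\alpha\le |X|^\alpha+|Y|^\alpha$ lets me factor
\begin{align}
\bbE\!\left[\exp\!\left(\frac{|X+Y|^\alpha}{t^\alpha}\right)\right]\le \bbE\!\left[\exp\!\left(\frac{|X|^\alpha}{t^\alpha}\right)\exp\!\left(\frac{|Y|^\alpha}{t^\alpha}\right)\right]\le \sqrt{\bbE\!\left[e^{2|X|^\alpha/t^\alpha}\right]\bbE\!\left[e^{2|Y|^\alpha/t^\alpha}\right]},
\end{align}
and by the choice of $t$ each factor is dominated by $\bbE[e^{|X|^\alpha/a^\alpha}]\le 2$ and $\bbE[e^{|Y|^\alpha/b^\alpha}]\le 2$, respectively. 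Consequently $\bbE[\Psi_\alpha(|X+Y|/t)]\le \sqrt{2\cdot 2}-1=1$, giving $\|X+Y\|_{\Psi_\alpha}\le 2^{1/\alpha}(a+b)$ as required.

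The only subtle points in writing out the full argument are the measurability/integrability bookkeeping when $a=0$ or $b=0$ (handled by taking $a,b\downarrow 0$) and checking that the Cauchy--Schwarz step does not require independence of $X$ and $Y$ (it does not). Everything else is an elementary assembly of the two inequalities above.
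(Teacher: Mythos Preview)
Your proof is correct. The paper itself does not supply an argument for this lemma; it simply cites Lemma~A.3 of G\"otze--Sambale--Sinulis, so your self-contained treatment actually goes further than the paper. The convexity argument for $\alpha\ge 1$ is the standard Luxemburg-norm proof, and for $\alpha<1$ your combination of the subadditivity $(x+y)^\alpha\le x^\alpha+y^\alpha$ with Cauchy--Schwarz is one of the clean routes to the constant $2^{1/\alpha}$; the only minor point worth making explicit in a polished write-up is that the infimum in Definition~\ref{def:alpha_subexp_norm} is attained (which you implicitly use when writing $\bbE[e^{|X|^\alpha/a^\alpha}]\le 2$), and this follows from Fatou's lemma.
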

\begin{proof}
    See Lemma~A.3. in \cite{gotze2021concentration}.
\end{proof}

A useful property of $\alpha$-exponential norms is that they satisfy a Hölder-type inequality.
\begin{lemma}[Hölder inequality for Orlicz norm] \label{lem:alpha_exp_holder}
    It holds for any random variables $X_1, \dots, X_k$ and $\alpha_1, \dots, \alpha_k > 0$ that
    \begin{align}
        \norm{\prod\nolimits_{i=1}^k X_i}_{\Psi_\alpha} \leq \prod\nolimits_{i=1}^k \norm{X_i}_{\Psi_{\alpha_i}},
    \end{align}
    where $\alpha \coloneqq \pars{\sum_{i=1}^k \alpha_i^{-1}}^{-1}$.
\end{lemma}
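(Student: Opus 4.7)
The plan is to reduce to a normalized setting by homogeneity, apply Young's inequality to linearize the $\alpha$-power of a product, exponentiate to decouple the resulting sum, and then apply the classical Hölder inequality (Lemma \ref{lem:holder}) to handle the expectation.

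First, I would set $t_i \coloneqq \norm{X_i}_{\Psi_{\alpha_i}}$ and assume $t_i \in (0, \infty)$ for every $i$ (the statement is vacuous if some $t_i = \infty$, and the case $t_i = 0$ means $X_i = 0$ a.s., which reduces to a product with one fewer factor). By the standard argument that $t \mapsto \expe{\exp((|X_i|/t)^{\alpha_i})}$ is continuous and monotone decreasing, the infimum in Definition \ref{def:alpha_subexp_norm} is attained, so $\expe{\exp((|X_i|/t_i)^{\alpha_i})} \leq 2$. By the positive homogeneity $\norm{cX}_{\Psi_\beta} = |c|\norm{X}_{\Psi_\beta}$, it suffices to replace $X_i$ by $Y_i \coloneqq X_i/t_i$ and prove $\norm{\prod_i Y_i}_{\Psi_\alpha} \leq 1$, i.e.~$\expe{\exp(|\prod_i Y_i|^\alpha)} \leq 2$, under the assumption $\expe{\exp(|Y_i|^{\alpha_i})} \leq 2$ for each $i$.

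The key algebraic step is the choice of exponents $p_i \coloneqq \alpha_i/\alpha$, which satisfy $\sum_{i=1}^k 1/p_i = \alpha \sum_{i=1}^k 1/\alpha_i = 1$ by definition of $\alpha$. Applying Young's inequality to the non-negative quantities $|Y_i|^\alpha$ gives
\begin{align}
    \Bigl|\prod\nolimits_{i=1}^k Y_i\Bigr|^\alpha = \prod\nolimits_{i=1}^k |Y_i|^\alpha \leq \sum\nolimits_{i=1}^k \frac{(|Y_i|^\alpha)^{p_i}}{p_i} = \alpha \sum\nolimits_{i=1}^k \frac{|Y_i|^{\alpha_i}}{\alpha_i}.
\end{align}
Exponentiating both sides and using $\exp(\sum) = \prod \exp$ decouples the terms:
\begin{align}
    \exp\Bigl(\bigl|\prod\nolimits_{i=1}^k Y_i\bigr|^\alpha\Bigr) \leq \prod\nolimits_{i=1}^k \exp\Bigl(\tfrac{\alpha}{\alpha_i} |Y_i|^{\alpha_i}\Bigr).
\end{align}

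Taking expectations and applying Hölder's inequality (Lemma \ref{lem:holder}) with the \emph{same} exponents $q_i = \alpha_i/\alpha$, which again satisfy $\sum 1/q_i = 1$, yields
\begin{align}
    \expe{\exp\Bigl(\bigl|\prod\nolimits_{i=1}^k Y_i\bigr|^\alpha\Bigr)} \leq \prod\nolimits_{i=1}^k \bbE^{1/q_i}\!\bracks{\exp\bigl(q_i \tfrac{\alpha}{\alpha_i} |Y_i|^{\alpha_i}\bigr)} = \prod\nolimits_{i=1}^k \bbE^{\alpha/\alpha_i}\!\bracks{\exp\bigl(|Y_i|^{\alpha_i}\bigr)} \leq \prod\nolimits_{i=1}^k 2^{\alpha/\alpha_i} = 2,
\end{align}
where the final equality uses $\sum_i \alpha/\alpha_i = 1$. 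By Definition \ref{def:alpha_subexp_norm} this says $\norm{\prod_i Y_i}_{\Psi_\alpha} \leq 1$, and unwinding the normalization $Y_i = X_i/t_i$ via homogeneity gives $\norm{\prod_i X_i}_{\Psi_\alpha} \leq \prod_i t_i = \prod_i \norm{X_i}_{\Psi_{\alpha_i}}$, completing the proof.

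There is no serious obstacle here; the substantive content is the recognition that the arithmetic identity $\alpha^{-1} = \sum \alpha_i^{-1}$ is exactly what is needed to make both the Young exponents $p_i = \alpha_i/\alpha$ and the Hölder exponents $q_i = \alpha_i/\alpha$ form a conjugate system simultaneously, so that the same splitting works on both the pointwise and the integral level. Note also that no independence of the $X_i$ is required, which is precisely the reason Hölder (rather than Fubini/product measure) is the right tool in the final step.
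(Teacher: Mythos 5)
Your proof is correct, and it is essentially the argument behind the result: the paper itself does not prove this lemma but defers to Lemma~A.1 of the cited reference \cite{gotze2021concentration}, whose proof is the same normalize--Young--exponentiate--H\"older scheme with the exponents $p_i = q_i = \alpha_i/\alpha$ made conjugate by the identity $\alpha^{-1} = \sum_i \alpha_i^{-1}$. The only ingredients you use beyond the paper's stated two-factor Lemmas (Young, Lemma~\ref{lem:holder}) are their standard $k$-factor versions (note $q_i = \alpha_i/\alpha \geq 1$, so H\"older indeed applies), which follow by routine induction, so there is no gap.
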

\begin{proof}
    See Lemma~A.1. in \cite{gotze2021concentration}.
\end{proof}

Next, we show in the following two lemmas that a random variable $X$ is $\alpha$-exponential if and only if $\abs{X}^\alpha$ satisfies a Bernstein moment-growth condition.

\begin{lemma}[Bernstein condition implies Orlicz norm bound] \label{lem:alpha_bernstein_cond}
    Let $\alpha > 0$ and $X$ be a random variable that satisfies for $S, R > 0$ that
    \begin{align} \label{eq:alpha_bernstein_cond}
        \expe{\abs{X}^{k \alpha}} \leq \frac{k! S^2 R^{k-2}}{2} \quad \text{for all} \spc k \geq 2.
    \end{align}
    Then, it holds that
    \begin{align}
        \norm{X}_{\Psi_\alpha} \leq \pars{S \vee R }^{1/\alpha}.
    \end{align}
\end{lemma}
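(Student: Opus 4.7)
The plan is to verify the Orlicz-norm bound directly, by evaluating $\mathbb{E}[\Psi_\alpha(|X|/t)]$ at the candidate value $t=(S\vee R)^{1/\alpha}$ via the Taylor expansion of the exponential, and then invoking the definition of the infimum in Definition~\ref{def:alpha_subexp_norm}.

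First I would expand
\begin{equation*}
\mathbb{E}\bigl[\Psi_\alpha(|X|/t)\bigr]
=\mathbb{E}\Bigl[\exp\bigl(|X|^{\alpha}/t^{\alpha}\bigr)-1\Bigr]
=\sum_{k\geq 1}\frac{\mathbb{E}[|X|^{k\alpha}]}{k!\,(S\vee R)^{k}},
\end{equation*}
and isolate the $k=1$ term from the rest. Since the Bernstein hypothesis \eqref{eq:alpha_bernstein_cond} only provides moment control for $k\geq 2$, I would estimate $\mathbb{E}[|X|^{\alpha}]$ by Jensen's inequality (Lemma~\ref{lem:jensen}) applied to the convex map $x\mapsto x^{2}$: this gives $\mathbb{E}[|X|^{\alpha}]\leq\sqrt{\mathbb{E}[|X|^{2\alpha}]}\leq S$ using the hypothesis at $k=2$, whence the $k=1$ contribution is at most $S/(S\vee R)\leq 1$. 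For $k\geq 2$ I would plug in \eqref{eq:alpha_bernstein_cond} directly to get
\begin{equation*}
\frac{\mathbb{E}[|X|^{k\alpha}]}{k!\,(S\vee R)^{k}}
\leq
\frac{1}{2}\Bigl(\frac{S}{S\vee R}\Bigr)^{\!2}\Bigl(\frac{R}{S\vee R}\Bigr)^{\!k-2},
\end{equation*}
which is a geometric series with ratio $R/(S\vee R)\in[0,1]$.

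Summing the tail reduces the task to an algebraic inequality in $S/(S\vee R)$ and $R/(S\vee R)$. The delicate point, which I expect to be the main obstacle, is the boundary case $R\geq S$, where the geometric ratio equals $1$ and the naive sum diverges. The standard resolution is to pay a small absolute constant in the scaling: one works with $t=c(S\vee R)^{1/\alpha}$ for a fixed $c>1$, which turns the geometric ratio into $R/(c(S\vee R))\leq 1/c<1$ and produces a strictly contracting series that can be summed and bounded by $1$. This is exactly the place where the proof is forced to incur (if anywhere) a hidden absolute constant over the cleaner statement.

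Finally, combining the $k=1$ estimate with the geometric tail bound yields $\mathbb{E}[\Psi_\alpha(|X|/t)]\leq 1$ for the chosen $t$, so that $t$ belongs to the set over which the infimum in the definition of $\|X\|_{\Psi_\alpha}$ is taken. Taking the infimum concludes $\|X\|_{\Psi_\alpha}\leq(S\vee R)^{1/\alpha}$, as desired. The same strategy will be useful when proving the converse direction in Lemma~\ref{lem:alpha_exp_norm_to_bernstein}, where one instead extracts moment bounds from the tail estimate in Remark~\ref{remark:orlicz_tail}.
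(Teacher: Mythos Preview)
Your approach is essentially the paper's: Taylor-expand the exponential, control the $k=1$ term by Jensen (giving $\mathbb{E}[|X|^{\alpha}]\leq S$) and the $k\geq 2$ tail by the Bernstein hypothesis together with a geometric series, then choose $t$ so that the sum is at most $1$. The only cosmetic difference is that the paper sums the series for a general $t$ to a closed form $f(t)=1+S/t^{\alpha}+\tfrac{S^{2}}{2t^{\alpha}(t^{\alpha}-R)}$ and then solves the quadratic inequality $f(t)\leq 2$ in $t^{\alpha}$, bounding the relevant root by $2(S\vee R)$.

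Your suspicion about needing an extra absolute constant is exactly right and is shared by the paper: its own proof ends with the choice $t\geq(2(S\vee R))^{1/\alpha}$, so it in fact establishes only $\|X\|_{\Psi_\alpha}\leq(2(S\vee R))^{1/\alpha}$, which is precisely what your argument with $c=2$ yields. The constant-free bound in the stated lemma appears to be a minor slip; it is harmless downstream, since the lemma is only ever invoked up to absolute constants.
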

\begin{proof}
    Firstly, due to Jensen's inequality (Lemma \ref{lem:jensen}) and \eqref{eq:alpha_bernstein_cond}, we have
    \begin{align}
        \expe{\abs{X}^\alpha} \leq \bbE^{1/2}\bracks{\abs{X}^{2\alpha}} \leq S. \label{eq:1st_alpha_moment}
    \end{align}
    We proceed similarly to the proof of Theorem \ref{thm:bernstein_onetail}. For $t > 0$, we have
    \begin{align}
        \expe{\exp{\frac{\abs{X}^\alpha}{t^\alpha}}}
        &\stackrel{\text{(a)}}{=}
        1 + \frac{\expe{\abs{X}^\alpha}}{t^\alpha} + \sum_{m=2}^\infty \frac{\expe{\abs{X}^{m \alpha}}}{t^{m \alpha} m!}
        \stackrel{\text{(b)}}{\leq}
        1 + \frac{S}{t^\alpha} + \frac{S^2}{2t^{2\alpha}} \sum_{m=0}^\infty \pars{\frac{R}{t^{\alpha}}}^m
        \\
        &\stackrel{\text{(c)}}{=}
        \underbrace{1 + \frac{S}{t^\alpha} + \frac{S^2}{2t^{\cancel{2}\alpha}} \frac{\cancel{t^\alpha}}{t^\alpha - R}}_{f(t)},
    \end{align}
    where (a) is due to the Taylor expansion of the exponential function, (b) is due to \eqref{eq:alpha_bernstein_cond} and \eqref{eq:1st_alpha_moment}, (c) is the sum of a geometric series for $R < t^\alpha$. Defining $f(t) \coloneqq 1 + \frac{S}{t^\alpha}\pars{1 + \frac{S}{2(t^\alpha - R)}}$ and solving for $f(t) \leq 2$ leads to the quadratic inequality
    \begin{align}
        0 \leq t^{2\alpha} - (S + R)t^\alpha + SR - \frac{S^2}{2},
    \end{align}
    which has roots $t^\alpha_{1,2} = \frac{1}{2}\pars{S + R \pm \sqrt{\pars{S - R}^2 + 2S^2}}$. We discard the left branch, which violates the condition $R < t^\alpha$ since
    \begin{align}
        t_2^\alpha = \frac{1}{2}\pars{S + R - \sqrt{(S - R)^2 + 2S^2}} \leq \frac{1}{2}\pars{S + R - \abs{S - R}} = S \wedge R \leq R.
    \end{align}
    Now, as the inequality $\sqrt{x^2 + y^2} \leq \abs{x} + \abs{y}$ holds for all $x, y \in \bbR$, we get
    \begin{align}
        t_1^\alpha &\leq \frac{1}{2}\pars{S + R + \sqrt{(S-R)^2 + 2S^2}} \leq \frac{1}{2}\pars{S + R + \abs{S-R} + \sqrt{2}S}
        \\
        &= S \vee R + \frac{\sqrt{2}}{2}S \leq 2 (S \vee R)
    \end{align}
    and hence choosing $t \geq \pars{2 (S \vee R)}^{1/\alpha} \geq t_1$ implies $f(t) \leq 2$, and we are done.
\end{proof}

The other direction is proven in the following lemma.

\begin{lemma}[Finite Orlicz norm implies Bernstein condition]  \label{lem:alpha_exp_norm_to_bernstein}
    Let $\alpha > 0$ and $X$ be a random variable such that $\norm{X}_{\Psi_\alpha} < \infty$. Then,
    \begin{align}
        \expe{\abs{X}^{k \alpha}} \leq \frac{k! S^2 R^{k-2}}{2} \quad \text{for all } k \geq 2,
    \end{align}
    where  $S \coloneqq \sqrt{2} \norm{X}_{\Psi_\alpha}^\alpha$ and $R \coloneqq \norm{X}_{\Psi_\alpha}^\alpha$.
\end{lemma}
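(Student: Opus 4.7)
My plan is to unpack the definition of the Orlicz (quasi-)norm, Taylor expand the exponential, and read off the moment bound term-by-term.

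First, I would set $t_0 \coloneqq \norm{X}_{\Psi_\alpha}$ and observe that if $t_0 = 0$ then $X = 0$ almost surely and the claim is trivial, so I assume $t_0 > 0$. By the very definition of the infimum, for every $t > t_0$ we have $\expe{\Psi_\alpha(\abs{X}/t)} \leq 1$, i.e. $\expe{\exp(\abs{X}^\alpha/t^\alpha)} \leq 2$. Letting $t \downarrow t_0$ and invoking monotone convergence (since $\exp(\abs{X}^\alpha/t^\alpha)$ increases as $t$ decreases) yields the clean inequality
\begin{align}
\expe{\exp\pars{\frac{\abs{X}^\alpha}{t_0^\alpha}}} \leq 2.
\end{align}

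Next, I would set $Y \coloneqq \abs{X}^\alpha/t_0^\alpha \geq 0$ and Taylor expand the exponential, exchanging sum and expectation by Tonelli (all terms are non-negative):
\begin{align}
1 + \sum_{k=1}^{\infty} \frac{\expe{Y^k}}{k!} = \expe{\exp(Y)} \leq 2.
\end{align}
Since every summand is non-negative, each individual term is bounded by the total tail, so $\expe{Y^k}/k! \leq 1$, i.e. $\expe{Y^k} \leq k!$ for every $k \geq 1$. Multiplying through by $t_0^{k\alpha} = \norm{X}_{\Psi_\alpha}^{k\alpha}$ returns
\begin{align}
\expe{\abs{X}^{k\alpha}} \leq k!\, \norm{X}_{\Psi_\alpha}^{k\alpha} \quad \text{for all } k \geq 1.
\end{align}

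Finally, for $k \geq 2$ I rewrite this in the Bernstein form: with $S = \sqrt{2}\,\norm{X}_{\Psi_\alpha}^\alpha$ and $R = \norm{X}_{\Psi_\alpha}^\alpha$,
\begin{align}
\frac{k!\, S^2 R^{k-2}}{2} = \frac{k! \cdot 2 \norm{X}_{\Psi_\alpha}^{2\alpha} \cdot \norm{X}_{\Psi_\alpha}^{(k-2)\alpha}}{2} = k!\, \norm{X}_{\Psi_\alpha}^{k\alpha},
\end{align}
which matches the bound just derived and closes the proof.

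There is no real obstacle here; the only mild subtlety is that the infimum defining $\norm{X}_{\Psi_\alpha}$ may not be attained, which the monotone-convergence step in the first paragraph handles. The factor $2$ in $S = \sqrt{2}\,\norm{X}_{\Psi_\alpha}^\alpha$ is a bit of slack that arises because the term-by-term extraction from $\expe{e^Y} \leq 2$ discards the headroom left by the $k=0$ contribution, but this extra slack is harmless for the Bernstein formulation.
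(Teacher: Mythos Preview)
Your proof is correct and follows essentially the same approach as the paper: both use the defining inequality $\expe{\exp(\abs{X}^\alpha/\norm{X}_{\Psi_\alpha}^\alpha)} \leq 2$, extract the single term $\expe{\abs{X}^{k\alpha}}/(k!\,\norm{X}_{\Psi_\alpha}^{k\alpha}) \leq 1$ from the exponential series, and then rewrite $k!\,\norm{X}_{\Psi_\alpha}^{k\alpha}$ in the Bernstein form with $S=\sqrt{2}\,\norm{X}_{\Psi_\alpha}^\alpha$, $R=\norm{X}_{\Psi_\alpha}^\alpha$. Your monotone-convergence step handling the case where the infimum may not be attained is a nice extra bit of care that the paper leaves implicit.
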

\begin{proof}
    We have that
    \begin{align}
        \expe{\abs{X}^{k \alpha}} 
        & \stackrel{\text{(a)}}{=}
        k! \norm{X}_{\Psi_\alpha}^{k\alpha} \expe{\frac{1}{k!} \pars{\frac{\abs{X}}{\norm{X}_{\Psi_\alpha}}} ^{k \alpha}}
        \stackrel{\text{(b)}}{\leq}
        k! \norm{X}_{\Psi_\alpha}^{k \alpha} \expe{\exp\pars{\frac{\abs{X}^\alpha}{\norm{X}^\alpha_{\Psi_\alpha}}} - 1}
        \\
        &\stackrel{\text{(c)}}{\leq}
        k! \norm{X}_{\Psi_\alpha}^{k \alpha}
        \stackrel{\text{(d)}}{=}
        \frac{k! \pars{\sqrt{2}\norm{X}_{\Psi_\alpha}^{\alpha}}^2 \pars{\norm{X}_{\Psi_\alpha}^\alpha}^{k-2}}{2},
    \end{align}
    where (a) is simply multiplying and dividing by the same values, (b) is the inequality $1 + x^k / k! \leq e^x$ for $x > 0$, (c) follows from Definition \ref{def:alpha_subexp_norm}, (d) is reorganizing terms to the required form.
\end{proof}

We adapt the following concentration inequality from \cite{kuchibhotla2022moving} for $\alpha$-exponential summation.

\begin{theorem}[Concentration inequality for $\alpha$-subexponential summation] \label{thm:alpha_subexp_concentration}
Let $\alpha \in (0, 1)$ and $X_1, \dots, X_n$ be independent, centered random variables with $\norm{X_i}_{\Psi_\alpha} \leq M_\alpha$ for all $i \in [n]$. Then, it holds for $t> 0$ that
\begin{align} \label{eq:alpha_ineq1}
    \prob{\abs{\sumnolim_{i=1}^n X_i} \geq  C_\alpha \pars{2 \sqrt{nt} + \sqrt{2} 4^{1/\alpha} t^{1/\alpha}}} \leq 2e^{-t},
\end{align}
where $C_\alpha \coloneqq \sqrt{8} e^4 (2\pi)^{1/4} e^{1/24} (2e/\alpha)^{1/\alpha} M_\alpha$. Alternatively, it holds for $\epsilon > 0$ that
\begin{align} \label{eq:alpha_ineq2}
    \prob{\abs{\sumnolim_{i=1}^n X_i} \geq \epsilon} \leq 2\exp\pars{-\frac{1}{4}\min\curls{\pars{\frac{\epsilon}{2\sqrt{n}C_\alpha}}^2, \pars{\frac{\epsilon}{\sqrt{8}C_\alpha }}^{\alpha}}}.
\end{align}
\end{theorem}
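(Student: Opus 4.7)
The strategy is the moment method. By the equivalence between $\alpha$-exponential Orlicz norms and Bernstein-type moment growth (Lemma~\ref{lem:alpha_exp_norm_to_bernstein}), the assumption $\norm{X_i}_{\Psi_\alpha} \leq M_\alpha$ yields $\expe{\abs{X_i}^{k\alpha}} \leq k!\, M_\alpha^{k\alpha}$ for every integer $k \geq 2$. Combining this with Stirling's bound $k! \leq \sqrt{2\pi k}\,(k/e)^k\, e^{1/(12k)}$ and interpolating produces an $L^p$-bound of the form $\norm{X_i}_p \leq c\,(p/\alpha)^{1/\alpha} M_\alpha$ for all $p \geq 1$; the precise factors $(2\pi)^{1/4} e^{1/24}$ and $(2e/\alpha)^{1/\alpha}$ appearing in $C_\alpha$ arise directly from tracking these Stirling constants through the interpolation.

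Next, I would control $\norm{\sumnolim_{i=1}^n X_i}_p$ using a Rosenthal/Latal{\l}a-type inequality for sums of independent, centered random variables, schematically
\begin{align*}
    \norm{\sumnolim_{i=1}^n X_i}_p \lesssim \sqrt{p}\,\Bigl(\sumnolim_{i=1}^n \expe{X_i^2}\Bigr)^{1/2} + p \cdot \max_i \norm{X_i}_p \lesssim \sqrt{n p}\, M_\alpha + p^{1/\alpha} M_\alpha,
\end{align*}
where the first summand is of Gaussian/CLT type and the second captures the heavy-tail contribution that classical Bernstein alone cannot produce when $\alpha < 1$. Plugging this bound into Markov's inequality $\prob{\abs{\sumnolim_i X_i} \geq u} \leq u^{-p}\,\expe{\abs{\sumnolim_i X_i}^p}$ and choosing $p \asymp t$ balances both contributions, yielding $\prob{\abs{\sumnolim_i X_i} \geq u(t)} \leq 2e^{-t}$ with $u(t) \asymp C_\alpha(\sqrt{nt} + t^{1/\alpha})$; this is precisely \eqref{eq:alpha_ineq1}.

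The inequality \eqref{eq:alpha_ineq2} then follows by inverting the map $t \mapsto u(t)$ in a two-regime manner: $u(t) \geq \epsilon$ is implied by $t \geq \tfrac{1}{4}\min\bigl\{(\epsilon/(2\sqrt{n}C_\alpha))^2,\,(\epsilon/(\sqrt{8}C_\alpha))^\alpha\bigr\}$, depending on which of the two summands in $u(t)$ dominates, which gives the sub-Gaussian regime for small $\epsilon$ and the $\alpha$-subexponential regime for large $\epsilon$. The main obstacle is the Rosenthal/Latal{\l}a step itself: the standard Bernstein argument (Theorem~\ref{thm:bernstein_onetail}) does not apply directly because the MGF of $X_i$ is generically infinite when $\alpha < 1$, and a sharp $L^p$ moment inequality capturing both the Gaussian and heavy-tail scales is the technical heart of \cite{kuchibhotla2022moving}, which we invoke here. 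An alternative and more self-contained route would be a truncation at level $\tau$: apply Corollary~\ref{thm:bernstein_twotail} to the centered truncated variables $X_i \mathbf{1}_{\{\abs{X_i}\leq\tau\}}$ (whose moments satisfy a classical Bernstein condition), bound the probability that any $\abs{X_i} > \tau$ by $2n\exp(-(\tau/M_\alpha)^\alpha)$ via Remark~\ref{remark:orlicz_tail}, and optimise $\tau$; the same two-regime bound emerges, with somewhat different constants.
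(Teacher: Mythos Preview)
Your proposal is correct and takes essentially the same approach as the paper: both invoke \cite[Theorem~3.1]{kuchibhotla2022moving} for \eqref{eq:alpha_ineq1} and then invert the two-term threshold $g_1(t)+g_2(t)$ via $g_1(t)+g_2(t)\le 2(g_1(t)\vee g_2(t))$ to obtain \eqref{eq:alpha_ineq2}. Your additional exposition of the moment-method internals (Orlicz $\to$ Stirling $\to$ Rosenthal/Lata{\l}a $\to$ Markov with $p\asymp t$) and the alternative truncation route are helpful context, but the paper's proof is simply the bare citation plus the same inversion.
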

\begin{proof}
    The inequality \eqref{eq:alpha_ineq1} follows directly from \cite[Theorem~3.1]{kuchibhotla2022moving}.

    To show \eqref{eq:alpha_ineq2}, let $g_1(t) \coloneqq 2C_\alpha \sqrt{nt}$ and $g_2(t) \coloneqq \sqrt{2}C_\alpha4^{1/\alpha} t^{1/\alpha}$. Now, for $t > 0$
    \begin{align}
        \prob{\abs{\sumnolim_{i=1}^n X_i} \geq 2\pars{g_1(t) \vee g_2(t)}} \leq \prob{\abs{\sumnolim_{i=1}^n X_i} \geq g_1(t) + g_2(t)} \leq 2e^{-t},
    \end{align}
    which is equivalently written as
    \begin{align}
        \prob{\abs{\sumnolim_{i=1}^n X} \geq \epsilon} \leq 2\exp\pars{- \pars{g^{-1}_1(\epsilon/2) \wedge g^{-1}_2(\epsilon/2)}}.
    \end{align}
\end{proof}

\paragraph{Hypercontractivity}
Here we provide an alternative approach for the concentration of heavy-tailed random variables, specifically for polynomials of Gaussian random variables. The following lemma states a moment bound for such Gaussian chaoses, considered in e.g.~\cite{janson1997gaussian,boucheron2013concentration}.

\begin{lemma}[Moment bound for Gaussian polynomial] \label{lem:hyper_moments}
    Consider a degree-$p$ polynomial $f(X) = f(X_1, \dots, X_n)$ of independent centered Gaussian random variables, $X_1, \dots X_n \stackrel{\iid}{\sim} \cN(0, 1)$. Then, for all $k \geq 2$
    \begin{align}
        \bbE^{1/k}\bracks{\abs{f(X)}^k} \leq (k-1)^{p/2} \bbE^{1/2}\bracks{\abs{f(X)}^2}.
    \end{align}
\end{lemma}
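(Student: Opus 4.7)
The plan is to deduce this from Nelson's hypercontractivity theorem for the Ornstein--Uhlenbeck semigroup together with the Wiener chaos decomposition of polynomials in Gaussian variables; the estimate is essentially a quantitative form of the fact that higher chaoses live in a narrow band of $L^q$ spaces where the norms are equivalent.

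First I would decompose $f$ into homogeneous chaos components. Let $H_m$ denote the $m$-th homogeneous Wiener chaos (the $L^2$-closure of linear combinations of degree-$m$ Hermite polynomials in $X_1,\dots,X_n$). Since $f$ is a polynomial of degree at most $p$, one can write
\begin{align}
    f(X) = \sum_{m=0}^{p} f_m(X), \qquad f_m \in H_m,
\end{align}
and by the orthogonality of the chaoses in $L^2$, $\bbE[f(X)^2] = \sum_{m=0}^p \bbE[f_m(X)^2]$.

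Next I would bring in the Ornstein--Uhlenbeck semigroup $(T_t)_{t\geq 0}$, which by Mehler's formula satisfies $T_t f_m = e^{-mt} f_m$ on each chaos. Nelson's hypercontractivity theorem asserts that
\begin{align}
    \norm{T_t g}_k \leq \norm{g}_2 \qquad \text{whenever } e^{-2t} \leq \frac{1}{k-1},
\end{align}
for any $g \in L^2$ and any $k \geq 2$ (see e.g.\ Janson, \emph{Gaussian Hilbert Spaces}, Thm.~5.1). Set $t^\star$ so that $e^{-2t^\star} = 1/(k-1)$, i.e.\ $e^{2t^\star} = k-1$. Define the ``inverse-evolved'' polynomial
\begin{align}
    g(X) \coloneqq \sum_{m=0}^{p} e^{m t^\star} f_m(X),
\end{align}
so that $T_{t^\star} g = f$. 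Applying hypercontractivity to $g$ yields
\begin{align}
    \bbE^{1/k}\bracks{\abs{f(X)}^k} = \norm{T_{t^\star} g}_k \leq \norm{g}_2 = \pars{\sum_{m=0}^{p} e^{2 m t^\star} \bbE\bracks{f_m(X)^2}}^{1/2}.
\end{align}
Bounding each factor $e^{2mt^\star} \leq e^{2pt^\star} = (k-1)^p$ and using the $L^2$-orthogonality of the chaoses gives
\begin{align}
    \bbE^{1/k}\bracks{\abs{f(X)}^k} \leq (k-1)^{p/2} \pars{\sum_{m=0}^p \bbE\bracks{f_m(X)^2}}^{1/2} = (k-1)^{p/2}\, \bbE^{1/2}\bracks{\abs{f(X)}^2},
\end{align}
which is the claimed bound.

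The only non-trivial ingredient is Nelson's hypercontractivity inequality, which I would invoke as a black box; everything else is bookkeeping with the chaos decomposition. The main thing to get right is the monotonicity step $e^{2mt^\star} \leq e^{2pt^\star}$, which crucially uses that $m \leq p$ and that $t^\star \geq 0$ (valid for $k\geq 2$). No assumption beyond $k\geq 2$ is needed, and homogeneity of $f$ is not required --- the chaos decomposition handles inhomogeneous polynomials automatically.
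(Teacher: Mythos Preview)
Your proof is correct and is precisely the standard argument from the references the paper cites (Janson, \emph{Gaussian Hilbert Spaces}; Boucheron et al.). Note, however, that the paper does not actually give its own proof of this lemma: it states the result and attributes it to those references, so there is no in-paper proof to compare against. Your write-up supplies exactly the argument one finds there---chaos decomposition plus Nelson hypercontractivity with the critical time $e^{2t^\star}=k-1$---and the bookkeeping is clean.
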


\begin{theorem}[Concentration inequality for Gaussian polynomial] \label{thm:hyper_concentration} Consider a degree-$p$ polynomial $f(X) = f(X_1, \dots, X_n)$ of independent centered Gaussian random variables, $X_1, \dots, X_n \stackrel{\iid}{\sim} \cN(0, 1)$. Then, for $p \geq 2$ and $\epsilon > 0$
\begin{align}
    \prob{\abs{f(X) - \expe{f(X)}} \geq \epsilon} \leq 2\exp\pars{-\frac{\epsilon^{2/p}}{2\sqrt{2}e\bbV^{1/p}\bracks{f(X)}}},
\end{align}
where $\bbV\bracks{\cdot}$ denotes the variance of a random variable.
\end{theorem}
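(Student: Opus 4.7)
The plan is to reduce the theorem to the Orlicz-norm tail bound of Remark~\ref{remark:orlicz_tail}, applied to the centered polynomial $g(X) \coloneqq f(X) - \expe{f(X)}$. Since $g$ is also a degree-$p$ polynomial in the i.i.d.~standard Gaussians $X_1, \dots, X_n$, with $\expe{g(X)^2} = \bbV\bracks{f(X)}$, Lemma~\ref{lem:hyper_moments} supplies the integer-moment control $\bbE^{1/k}\bracks{\abs{g(X)}^k} \leq (k-1)^{p/2} \bbV^{1/2}\bracks{f(X)}$ for every $k \geq 2$. Writing $V \coloneqq \bbV\bracks{f(X)}$, the goal is to upgrade this to the Orlicz-norm bound $\norm{g(X)}_{\Psi_{2/p}}^{2/p} \leq 2\sqrt{2}\, e\, V^{1/p}$, after which Remark~\ref{remark:orlicz_tail} delivers exactly the advertised tail bound, including the prefactor~$2$.

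To upgrade the integer bounds, I will verify the Bernstein moment condition of Lemma~\ref{lem:alpha_bernstein_cond} with exponent $\alpha = 2/p$, namely $\expe{\abs{g(X)}^{2k/p}} \leq k!\, S^{2}\, R^{k-2}/2$ for suitable $S, R > 0$ and all integer $k \geq 2$. Since $2k/p$ is not integer in general, I will use Jensen's inequality to interpolate to the nearest integer $\lceil 2k/p \rceil$ before invoking Lemma~\ref{lem:hyper_moments}: in the regime $k \geq p$ this yields $\expe{\abs{g}^{2k/p}} \leq (2k/p)^{k} V^{k/p}$, while for $k < p$ the cruder monotone-$L^{p}$ estimate $\expe{\abs{g}^{2k/p}} \leq \bbE^{k/p}\bracks{g^{2}} = V^{k/p}$ suffices. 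Stirling's inequality $k^{k} \leq e^{k} k!$ then combines these into the uniform bound $\expe{\abs{g(X)}^{2k/p}} \leq \max\curls{1, (2eV^{1/p}/p)^{k} k!}$. Taking $R = 2e V^{1/p}/p$ and $S = \max\curls{V^{1/p}, 2\sqrt{2}\, e V^{1/p}/p}$, a direct term-by-term check shows the Bernstein condition holds for every $k \geq 2$.

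Applying Lemma~\ref{lem:alpha_bernstein_cond} then yields $\norm{g(X)}_{\Psi_{2/p}} \leq (S \vee R)^{p/2}$, and a simple case split on whether $p < 2e$ or $p \geq 2e$ shows the uniform estimate $S \vee R \leq 2\sqrt{2}\, e\, V^{1/p}$ for all $p \geq 2$, using $1/p \leq 1/2$ to absorb the $p$-dependence in the small-$p$ regime. Raising to the $2/p$-th power and plugging into Remark~\ref{remark:orlicz_tail} completes the proof. The principal obstacle is the bookkeeping to bridge the integer hypercontractive moments to the fractional $(2k/p)$-moments, and to choose $S, R$ that simultaneously dominate the small-$k$ (Jensen) and large-$k$ (Stirling) regimes while respecting the universal constant $2\sqrt{2}\, e$; the factor of~$2$ in the stated bound then drops out automatically from the Orlicz-norm tail estimate.
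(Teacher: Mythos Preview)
Your overall route matches the paper's: reduce to the centered polynomial, use the hypercontractive moment bound (Lemma~\ref{lem:hyper_moments}) to verify a Bernstein-type condition with exponent $\alpha=2/p$, invoke Lemma~\ref{lem:alpha_bernstein_cond} to bound $\norm{g(X)}_{\Psi_{2/p}}$, and finish with Remark~\ref{remark:orlicz_tail}. The difference is in how the fractional moment $\expe{\abs{g}^{2k/p}}$ is reduced to an integer moment, and your execution there contains a genuine error.

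You interpolate \emph{upward} to $\lceil 2k/p\rceil$ and split into $k<p$ versus $k\ge p$, then claim the choice $R=2eV^{1/p}/p$, $S=\max\{V^{1/p},\,2\sqrt{2}eV^{1/p}/p\}$ verifies the Bernstein condition for all $k\ge 2$. It does not. Normalize $V=1$ and take any $p>6e$ (so $S=1$, $R=2e/p$). At $k=3$ (which lies in your $k<p$ regime, giving the bound $\expe{\abs{g}^{6/p}}\le 1$) the Bernstein requirement reads $1\le \tfrac{3!\,S^2 R}{2}=\tfrac{6e}{p}<1$, which is false. More generally, with $R=2e/p$ the condition $1\le \tfrac{k!}{2}R^{k-2}$ fails for small $k$ and large $p$, so your ``direct term-by-term check'' cannot go through with these constants.

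The paper sidesteps this entirely with a much simpler interpolation: since $p\ge 2$ makes $t\mapsto t^{2/p}$ concave, Jensen gives
\[
\expe{\abs{g}^{2k/p}}=\expe{(\abs{g}^{k})^{2/p}}\le \bbE^{2/p}\bigl[\abs{g}^{k}\bigr]\le (k-1)^{k}\le k^{k}\le e^{k}k!,
\]
using Lemma~\ref{lem:hyper_moments} at the \emph{integer} $k$ and Stirling. This yields the uniform constants $S=\sqrt{2}e$, $R=e$ (after normalizing $V=1$) with no case split in $p$ or $k$, and the stated bound follows immediately. Replacing your upward interpolation by this downward one fixes the proof and removes all the $p$-dependent bookkeeping.
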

\begin{proof}
    Without loss of generality, we may assume that $\expe{f(X)} = 0$ and $\bbV\bracks{f(X)} = 1$.
    Since Lemma \ref{lem:hyper_moments} holds, we have for $p, k \geq 2$
    \begin{align}
        \expe{\abs{f(X)}^{2k/p}} &\stackrel{\text{(a)}}{\leq} \bbE^{2/p}\bracks{\abs{f(X)}^k} 
        \stackrel{\text{(b)}}{\leq} (k-1)^{k} \leq k^{k} \stackrel{\text{(c)}}{\leq} k! e^k
        \leq \frac{k! (\sqrt{2}e)^2 e^{k-2}}{2},
    \end{align}
    where (a) holds due to Jensen inequality since $(\cdot)^{2/p}$ is concave, (b) is Lemma \ref{lem:hyper_moments}, and (c) is due to Stirling's approximation. Hence, $\abs{f(X)}^{2/p}$ satisfies a Bernstein condition with $S = \sqrt{2}e$ and $R = e$. Therefore, by Lemma \ref{lem:alpha_bernstein_cond}, we have $\norm{f(X)}_{\Psi_{2/p}} \leq (2\sqrt{2}e)^{p/2}$.

    Then, by Remark \ref{remark:orlicz_tail}, it holds that
    \begin{align}
        \prob{\abs{f(X)} \geq \epsilon} \leq 2 \exp\pars{- \frac{\epsilon^{2/p}}{\norm{f(X)}_{\Psi_{2/p}}^{2/p}}} \leq 2 \exp\pars{- \frac{\epsilon^{2/p}}{2\sqrt{2}e}}.
    \end{align}
\end{proof}

\section{Random Fourier Features} \label{apx:RFF}

\cite{rahimi2007random} provides a uniform bound for the \RFF{} error over a compact and convex domain $\cX \subset \bbR^d$ with diameter $\abs{\cX}$ for some absolute constant $C > 0$,
\begin{align} \label{eq:rff_conv_prob}
    \prob{\sup_{\bx, \by \in \cX}\abs{\rffkernel(\bx, \by) - \kernel(\bx, \by)} \geq \epsilon} \leq C \pars{\frac{\sigma_{{\Lambda}} \abs{\cX}}{\epsilon}}^2 \exp\pars{-\frac{-{\dimRFF}\epsilon^2}{4(d+2)}},
\end{align}
where $\sigma_\Lambda^2 = \bbE_{\bw \sim {\Lambda}}\bracks{\norm{\bw}^2}$, and \cite{sutherland2015error} shows that $C \leq 66$. Equation \eqref{eq:rff_conv_prob} implies \cite[Sec.~2]{sriperumbudur2015optimal} 
\begin{align}
    \sup_{\bx, \by \in \cX} \abs{\rffkernel(\bx, \by) - \kernel(\bx, \by)} = O_p\pars{\abs{\cX}\sqrt{\dimRFF^{-1}\log \dimRFF}},
\end{align}
where the $X_n = O_p(a_n)$ notation for $a_n > 0$ refers to $\lim_{C \to 0} \limsup_{n \to \infty} \prob{\frac{\abs{X_n}}{a_n} > C} = 0$.

The converse result\footnote{\label{footnote:exp-improved-guarantee}These error guarantees can also be improved exponentially both for the approximation of kernel values \cite{sriperumbudur2015optimal} and kernel derivatives \cite{szabo2019kernel,chamakh2020orlicz} in terms of the size of the domain where it holds, i.e.~$\abs{\cX}$.} for \RFF{} derivatives was shown in \cite[Thm.~5]{sriperumbudur2015optimal}. The idea of the proof is to cover the input domain by an $\epsilon$-net, and control the approximation error on the centers, while simultaneously controlling the Lipschitz constant of the error to get the bound to hold uniformly. We provide an adapted version with two main differences: \begin{enumerate*}[label=(\arabic*)] \item using the Bernstein inequality from Corollary \ref{thm:bernstein_twotail}, where the Bernstein condition is given in terms of non-centered random variables, and \item using the covering numbers of \cite{cucker2002mathematical}.\end{enumerate*} We will use this theorem in proving Theorem \ref{thm:main} for controlling the approximation error of the derivatives of \RFF s.

Given $\bp \in \bbN^d$, we denote $\abs{\bp} \coloneqq p_1 + \ldots + p_d$, for a function $f: \bbR^d \to \bbR$ the $\bp$-th partial derivative by $\partial^\bp f(\bz) \coloneqq \frac{\partial^{\abs{\bp}} f(\bz)}{\partial^{p_1} z_1 \ldots \partial^{p_d} z_d}$, for a vector $\bw \in \bbR^d$ the $\bp$-th power by $\bw^\bp \coloneqq w_1^{p_1}\cdots w_d^{p_d}$.

\begin{theorem}[Concentration inequality for \RFF{} kernel derivatives] \label{thm:rff_derivative_approx}
    Let $\bp, \bq \in \bbN^d$ and $\kernel: \bbR^d \times \bbR^d \to \bbR$ be a continuous, bounded, translation-invariant kernel such that $\bz \mapsto \nabla \bracks{\partial^{\bp, \bq} \kernel(\bz)}$ is continuous. Let $\cX \subset \bbR^d$ be a compact and convex domain with diameter $\abs{\cX}$, and denote by $D_{\bp, \bq, \cX} \coloneqq \sup_{\bz \in \cX_{\Delta}} \norm{\nabla \bracks{\partial^{\bp, \bq} \kernel(\bz)}}_{2}$, where $\cX_{\Delta} \coloneqq \curls{\bx - \by \given \bx, \by \in \cX}$.
    Let $\Lambda$ be the spectral measure of $\kernel$ satisfying that $E_{\bp, \bq} \coloneqq \bbE_{\bw \sim \Lambda}\bracks{\abs{\bw^{\bp + \bq}} \norm{\bw}_2} < \infty$, and the Bernstein moment condition for some $S, R > 0$,
    \begin{align} \label{eq:moment_growth}
        \bbE_{\bw \sim \Lambda}\bracks{\abs{\bw^{\bp + \bq}}^k} \leq \frac{k! S^2 R^{k-2}}{2} \quad \text{for all} \spc k \geq 2.
    \end{align}
    Then, for $C_{\bp, \bq, \cX} \coloneqq \abs{\cX} (D_{\bp, \bq, \cX} + E_{\bp, \bq})$ and $\epsilon > 0$, 
    \begin{align} \label{eq:rff_derivative_ineq}
        \bbP&\bracks{
            \sup_{\bx, \by \in \cX}\abs{\partial^{\bp, \bq}\rffkernel(\bx, \by) - \partial^{\bp, \bq}\kernel(\bx, \by)}
            \geq
            \epsilon}
        \leq
        16\pars{\frac{C_{\bp,\bq,\cX}}{\epsilon}}^{\frac{d}{d+1}}\exp\pars{\frac{-\dimRFF \epsilon^2}{4(d+1)(2S^2 + R\epsilon)}}.
    \end{align}
\end{theorem}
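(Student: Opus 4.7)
The plan is to follow the classical Rahimi--Recht $\epsilon$-net-plus-Lipschitz-control argument, adapted for the kernel derivatives $\partial^{\bp,\bq}$ as in Sriperumbudur--Szabó, but using Corollary \ref{thm:bernstein_twotail} on the non-centered moment condition \eqref{eq:moment_growth} in place of Hoeffding. By translation invariance, define $g(\bz) \coloneqq \partial^{\bp,\bq}\rffkernel(\bz) - \partial^{\bp,\bq}\kernel(\bz)$ on the compact convex set $\cX_\Delta = \{\bx-\by : \bx,\by\in\cX\}$, whose diameter is at most $2|\cX|$. Writing $\rffkernel(\bz) = \tfrac{1}{\dimRFF}\sum_{i=1}^{\dimRFF}\cos(\bw_i^\top \bz)$ with $\bw_i \iidsim \Lambda$, differentiating under the sum expresses $g(\bz)$ as the centred average of $\dimRFF$ i.i.d.\ variables $X_i(\bz)$ each bounded in modulus by $|\bw_i^{\bp+\bq}|$; the moment growth \eqref{eq:moment_growth} then passes directly to $X_i(\bz)$.

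First, I would establish pointwise concentration. For each fixed $\bz \in \cX_\Delta$, Corollary \ref{thm:bernstein_twotail} applied with $t = \dimRFF\epsilon/2$ yields
\begin{align}
    \prob{|g(\bz)| \geq \epsilon/2} \le 2\exp\!\pars{-\dimRFF\epsilon^2 / (4(2S^2 + R\epsilon))}.
\end{align}
Next, I would control the Lipschitz constant $L_g$ of $g$ on $\cX_\Delta$. By the mean value theorem, $L_g \le \sup_{\bz\in\cX_\Delta}\|\nabla\partial^{\bp,\bq}\kernel(\bz)\|_2 + \sup_{\bz\in\cX_\Delta}\|\nabla\partial^{\bp,\bq}\rffkernel(\bz)\|_2 \le D_{\bp,\bq,\cX} + \frac{1}{\dimRFF}\sum_{i=1}^{\dimRFF}|\bw_i^{\bp+\bq}|\,\|\bw_i\|_2$, so $\bbE[L_g] \le D_{\bp,\bq,\cX} + E_{\bp,\bq} = C_{\bp,\bq,\cX}/|\cX|$. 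Markov's inequality therefore gives $\prob{L_g\cdot r \ge \epsilon/2} \le 2rC_{\bp,\bq,\cX}/(|\cX|\epsilon)$ for any $r>0$.

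Third, I would invoke a covering number bound from Cucker--Smale: $\cX_\Delta$ admits an $r$-net of cardinality at most $(4|\cX|/r)^d$. By a union bound over net centres combined with the Lipschitz step, outside the bad events the supremum of $|g|$ on $\cX_\Delta$ is at most $\epsilon$, so
\begin{align}
    \prob{\sup_{\bx,\by\in\cX}|g(\bx-\by)|\ge\epsilon} \le 2\pars{4|\cX|/r}^d \exp\!\pars{-\dimRFF\epsilon^2/(4(2S^2+R\epsilon))} + 2rC_{\bp,\bq,\cX}/(|\cX|\epsilon).
\end{align}

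The main step is then to optimise over $r$. Writing the bound as $A r^{-d} + B r$ with $A = 2(4|\cX|)^d \exp(\cdot)$ and $B = 2C_{\bp,\bq,\cX}/(|\cX|\epsilon)$, calculus yields the minimiser $r^* = (dA/B)^{1/(d+1)}$, with minimum value of order $A^{1/(d+1)} B^{d/(d+1)}$. Substituting back, the factor $|\cX|^{d/(d+1)}$ combines with $(C_{\bp,\bq,\cX}/|\cX|\epsilon)^{d/(d+1)}$ to give the prefactor $(C_{\bp,\bq,\cX}/\epsilon)^{d/(d+1)}$, while the Bernstein exponent is raised to the $1/(d+1)$ power, producing the factor $\exp(-\dimRFF\epsilon^2/(4(d+1)(2S^2+R\epsilon)))$. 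Absorbing the $d$-dependent numerical constants into the overall constant $16$ completes the proof. The main obstacle will be handling the constants precisely so that they collapse to the clean form~$16$; everything else is standard net/Lipschitz bookkeeping and an application of the pre-established Bernstein inequality.
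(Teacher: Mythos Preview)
Your proposal is correct and follows essentially the same route as the paper's own proof: an $\epsilon$-net of $\cX_\Delta$ via the Cucker--Smale covering bound $(4|\cX|/r)^d$, Bernstein (Corollary~\ref{thm:bernstein_twotail}) at the net centres, Markov on $\bbE[L_g]\le D_{\bp,\bq,\cX}+E_{\bp,\bq}$ for the Lipschitz term, and then optimisation of $\tau_1 r^{-d}+\tau_2 r$ at $r^\star=(d\tau_1/\tau_2)^{1/(d+1)}$. The paper makes the constant bookkeeping explicit by writing the optimised value as $\tau_1^{1/(d+1)}\tau_2^{d/(d+1)}\bigl(d^{1/(d+1)}+d^{-d/(d+1)}\bigr)$ and then bounding $d^{1/(d+1)}+d^{-d/(d+1)}\le 2$ and $2^{(3d+1)/(d+1)}\le 8$ to obtain the clean factor~$16$; this is exactly the ``main obstacle'' you flagged.
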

\begin{proof} We adapt the proof of \cite{rahimi2007random,sriperumbudur2015optimal}. Note that as $\cX$ is compact, so is $\cX_\Delta$, and it can be covered by an $\epsilon$-net of at most $T\coloneqq (4\abs{\cX}/r)^d$ balls of radius $r > 0$ \cite[Prop.~5]{cucker2002mathematical} with centers $\bz_1, \dots, \bz_T \in \cX_\Delta$. Since for all $\bz \in \cX_\Delta$ there exists $i \in \{1, \dots, T\}$ such that $\norm{\bz-\bz_i}_2 \leq r$, it holds for $f(\bz) \coloneqq \partial^{\bp,\bq}\rffkernel(\bz) - \partial^{\bp, \bq}\kernel(\bz)$ and $L_f \coloneqq \sup_{\bs \in \cX_\Delta} \norm{\nabla f(\bs)}_2$ that
    \begin{align} \label{eq:delta_error}
        \abs{f(\bz) - f(\bz_i)} \stackrel{\text{(a)}}{\leq} \sup_{\bs \in \cX_\Delta} \norm{\nabla f(\bs)}_2 \norm{\bz - \bz_i}_2 \stackrel{\text{(b)}}{\leq} \sup_{\bs \in \cX_\Delta} \norm{\nabla f(\bs)}_2 r = L_f r,
    \end{align}
    where (a) is due to the mean-value theorem followed by Cauchy-Schwarz inequality, and (b) is since $\norm{\bz - \bz_i} \leq r$. Now, by triangle inequality, it holds for any $\bz \in \cX_\Delta$ that 
    \begin{align} \label{eq:grad_error}
    \norm{\nabla f(\bz)}_2 \leq \norm{\nabla \bracks{\partial^{\bp, \bq}\rffkernel(\bz)}}_2 +\norm{\nabla \bracks{\partial^{\bp, \bq}\kernel(\bz)}}_2 \leq \norm{\nabla \bracks{\partial^{\bp, \bq}\rffkernel(\bz)}}_2 + D_{\bp, \bq, \cX}.
    \end{align}
    Differentiating $\rffkernel(\bz)$ as defined in \eqref{eq:rffkernel_def}, we get by the chain rule and triangle inequality that
    \begin{align} \label{eq:grad_rff}
        \norm{\nabla \bracks{\partial^{\bp, \bq}\rffkernel(\bz)}}_2 = \frac{1}{\dimRFF}\norm{\sum_{j=1}^{\dimRFF} \bw_j \bw_j^{\bp + \bq} \cos^{(\abs{\bp + \bq})}(\bw_j^\top \bz)}_2 \leq \frac{1}{\dimRFF} \sum_{j=1}^{\dimRFF} \abs{\bw_j^{\bp + \bq}} \norm{\bw_j}_2,
    \end{align}
    where $\cos^{(n)}$ denotes the $n$-th derivative of $\cos$ for $n \in \bbN$.
    The main idea of the proof is then
    \begin{align}
        \bigcap_{i=1}^T \{\abs{f(\bz_i)} < \epsilon/2\} \bigcap \{L_f < \epsilon/2r\} \subseteq \{\abs{f(\bz)} < \epsilon \given \forall \bz \in \cX_\Delta\},
    \end{align}
    since $\abs{f(\bz)} \leq \abs{f(\bz) - f(\bz_i)} + \abs{f(\bz_i)}$ and \eqref{eq:delta_error} holds. Therefore, taking the complement and bounding the union, we get our governing inequality for the uniform error over $\cX_\Delta$:
    \begin{align}
        \prob{\sup_{\bz \in \cX_\Delta} \abs{f(\bz)} \geq \epsilon} \leq \sum_{i=1}^T\prob{\abs{f(\bz_i)} \geq \epsilon/2} + \prob{L_f \geq \epsilon/2r}. \label{eq:error_decomp}
    \end{align}
    Now, we need to bound all probabilities on the RHS. First, we deal with $L_f$, 
    \begin{align}
        \prob{L_f \geq \epsilon/2r} 
        &\stackrel{\text{(c)}}{\leq}
        \expe{L_f} \frac{2r}{\epsilon} \stackrel{\text{(d)}}{\leq} \pars{D_{\bp,\bq,\cX} + \frac{1}{\dimRFF}\sumnolim_{j=1}^{\dimRFF}\bbE_{\bw_j \sim \Lambda}\bracks{\abs{\bw_j^{\bp + \bq}} \norm{\bw_j}_2}}\frac{2r}{\epsilon}
        \\
        &\leq
        \pars{D_{\bp,\bq, \cX} + E_{\bp, \bq}} \frac{2r}{\epsilon},
        \label{eq:lipschitz_bound}
    \end{align}
    where (c) is Markov's inequality, (d) is \eqref{eq:grad_error} and \eqref{eq:grad_rff}.
    To deal with the centers in \eqref{eq:error_decomp}, note $\partial^{\bp, \bq} \rffkernel(\bz)$ can be written as a sample average of $\dimRFF$ $\iid$ terms as per \eqref{eq:rffkernel_def} since
    \begin{align}
        \partial^{\bp, \bq} \rffkernel(\bz) = \partial^{\bp, \bq} \pars{\frac{1}{\dimRFF} \sum_{i=1}^{\dimRFF} \cos(\bw_j^\top \bz)} =  \frac{1}{\dimRFF} \sum_{i=1}^{\dimRFF} \partial^{\bp, \bq} \cos(\bw_j^\top \bz)
    \end{align}
    so that the Bernstein inequality (Cor.~\ref{thm:bernstein_twotail}) is applicable. For $j = 1, \dots, \dimRFF$, we have
    \begin{align}
    \bbE_{\bw_j \sim \Lambda}{\abs{\partial^{\bp, \bq} \cos\pars{\bw_j^\top \bz}}^m}
    =
    \bbE_{\bw_j \sim \Lambda}{\abs{\bw_j^{\bp + \bq} \cos^{(\abs{\bp + \bq})}(\bw_j^\top \bz)}^m}
    \leq
    \bbE_{\bw_j \sim \Lambda}{\abs{\bw_j^{\bp + \bq}}^m}
    \leq
    \frac{k!S^2R^{k-2}}{2},
    \end{align}
    and that $f(\bz) = \partial^{\bp, \bq} \rffkernel(\bz) - \partial^{\bp, \bq} \kernel(\bz) = \partial^{\bp, \bq} \rffkernel(\bz) - \expe{\partial^{\bp, \bq} \rffkernel(\bz)}$ by the dominated convergence theorem.
    Hence, we may call the Bernstein inequality (Cor.~\ref{thm:bernstein_twotail}) to control $f(\bz_i)$ so that
    \begin{align} \label{eq:center_bound}
        \prob{\abs{f(\bz_i)} \geq \epsilon / 2}
        =
        \prob{\abs{\partial^{\bp, \bq} \rffkernel(\bz) - \partial^{\bp, \bq} \kernel(\bz)} \geq \epsilon / 2}
        \leq
        2 \exp\pars{\frac{-\dimRFF\epsilon^2}{4(2S^2 + R\epsilon)}}.
    \end{align}
    Combining the bounds for $\abs{f(\bz_i)}$ \eqref{eq:center_bound}, and for $L_f$ \eqref{eq:lipschitz_bound}, into \eqref{eq:error_decomp} yields
    \begin{align}
        \prob{\sup_{\bz \in \cM_\Delta} \abs{f(\bz)} \geq \epsilon} \leq 2 \pars{\frac{4 \abs{\cX}}{r}}^d\exp\pars{\frac{-\dimRFF \epsilon^2}{4(2S^2 + R\epsilon)}} + \pars{D_{\bp, \bq, \cX} + E_{\bp, \bq}} \frac{2r}{\epsilon},
    \end{align}
    which has the form $g(r) = \tau_1 r^{-d} + \tau_2 r$, that is minimized by choosing $r^\star = (d\tau_1 / \tau_2)^{\frac{1}{d+1}}$. This choice sets it to the form $g(r^\star) = \tau_1^{\frac{1}{d+1}}\tau_2^{\frac{d}{d+1}} \pars{d^{\frac{1}{d+1}} + d^{\frac{-d}{d+1}}}$, so that by substituting back in
    \begin{align}
        \prob{\sup_{\bz \in \cM_\Delta} \abs{f(\bz)} \geq \epsilon}
        &\leq
        F_d 2^\frac{3d+1}{d+1} \pars{\frac{\abs{\cX}(D_{\bp, \bq, \cX} + E_{\bp, \bq})}{\epsilon}}^\frac{d}{d+1} \exp\pars{\frac{-\dimRFF \epsilon^2}{4(d+1)(2S^2 + R\epsilon)}}. \label{eq:rff_proof_final_bound}
    \end{align}
    Finally, we note that for $d \geq1$, $F_d \coloneqq d^\frac{1}{d+1} + d^\frac{-d}{d+1} \leq 2$ and $2^\frac{3d+1}{d+1} \leq 8$.
\end{proof}

\section{Bounds on Signature Kernels} \label{apx:sig_bounds}
We first set the ground for proving our main theorems by introducing the notion of $L$-Lipschitz kernels to help control distance distortions in the feature space. This subclass of kernels will be useful for us in relating the $1$-variation of sequences in feature space to that in the input space.
After this, we will prove various smaller lemmas and supplementary results for signature kernels, which will lead up to Lemma \ref{lem:RFSF_approx}, which is our main tool for proving Theorem \ref{thm:main}, and it relates the concentration of our \RFSF{} kernel $\rffsigkernel$ to the second derivatives of the \RFF{} kernel $\rffkernel$. Then, the proof of Theorem \ref{thm:main} quantifying the concentration of the \RFSF{} kernel, $\rffsigkernel$, will follow from putting together Lemma \ref{lem:RFSF_approx} with Theorem \ref{thm:rff_derivative_approx}, and we will also make use of the Bernstein inequality from Theorem \ref{thm:bernstein_onetail}. The proof of Theorem \ref{thm:main2} for the RFSF-DP kernel, $\rffsigkernelDP$, will follow by combining the results of this section with $\alpha$-exponential concentration, in particular, Theorem \ref{thm:alpha_subexp_concentration}. Finally, Theorem \ref{thm:main3} for the RFSF-TRP kernel, $\rffsigkernelTRP$, will be proven using lemmas from this section, and the hypercontractivity concentration result from Theorem \ref{thm:hyper_concentration}.

\paragraph{Distance bounds in the RKHS}
\begin{definition}[Lipschitz kernel]\label{def:lipschitz kernel} Let $(\cX, d)$ be a metric space. We call a kernel $\kernel: \cX \times \cX \to \bbR$ with RKHS $\Hil$ an $L$-Lipschitz kernel over $\cX$ for some $L > 0$ if it holds for all $\bx, \by \in \cX$ that
    \begin{align}
        \norm{\kernel_\bx - \kernel_\by}_{\Hil} = \sqrt{\kernel(\bx, \bx) + \kernel(\by, \by) - 2 \kernel(\bx, \by)}  \leq L d(\bx, \by),
    \end{align}
    where $\kernel_\bx\coloneqq \kernel(\bx,\cdot), \kernel_\by\coloneqq \kernel(\by,\cdot) \in \Hil$.
\end{definition}

\begin{example}[Finite 2\textsuperscript{nd} spectral moment implies Lipschitz]\label{example:2ndmoment_lip}
    Let $\kernel: \bbR^d \times \bbR^d \to \bbR$ be a continuous, bounded and translation-invariant kernel with RKHS $\Hil$ and spectral measure $\Lambda$, such that $\sigma_\Lambda^2 \coloneqq \bbE_{\bw \sim \Lambda}\bracks{\norm{\bw}^2_2} < \infty$. Then, it holds that $\kernel$ is $\norm{\bbE_{\bw \sim \Lambda}\bracks{\bw \bw^\top}}_2^{1/2}$-Lipschitz, so for any $\b x,\b y\in\R^d$ one has
    \begin{align}
        \norm{\kernel_\bx - \kernel_\by}_\Hil \leq \norm{\bbE_{\bw \sim \Lambda}\bracks{\bw\bw^\top}}_2^{1/2}\norm{\bx - \by}_2.
    \end{align}
\end{example}
\begin{proof}
    We have for $\bx, \by \in \bbR^d$ that
    \begin{align}
        \norm{\kernel_\bx - \kernel_\by}_\Hil^2 &\stackrel{\text{(a)}}{=} \kernel(\bx, \bx) + \kernel(\by, \by) - 2 \kernel(\bx, \by) \stackrel{\text{(b)}}{=} \int_{\bbR^d} 2 - 2\exp\pars{i \bw^\top(\bx - \by)} \d \Lambda(\bw)
        \\
        &\stackrel{\text{(c)}}{=} \int_{\bbR^d} 2 - 2\cos\pars{\bw^\top(\bx - \by)} \d \Lambda(\bw)
        \stackrel{\text{(d)}}{\leq} \int_{\bbR^d} \pars{\bw^\top(\bx - \by)}^2 \d \Lambda(\bw)
        \\
        &\stackrel{\text{(e)}}{=} (\bx - \by)^\top \bbE_{\bw \sim \Lambda}\bracks{\bw \bw^\top} (\bx - \by) \stackrel{\text{(f)}}{\leq} \norm{\bbE_{\bw \sim \Lambda}\bracks{\bw \bw^\top}}_2 \norm{\bx - \by}_2^2,
    \end{align}
    where (a) holds due to the reproducing property, (b) is Bochner's theorem, (c) is because the imaginary part of the integral evaluates to $0$ as the kernel is real-valued, (d) is due to the inequality $1 - t^2/2 \leq \cos(t)$ for all $t \in \bbR$, (e) is because $\pars{\bw^\top(\bx-\by)}^2 = (\bx-\by)^\top \bw \bw^\top (\bx - \by)$, and (f) is Cauchy-Schwarz inequality combined with the definition of the spectral norm.
\end{proof}

\begin{example}[Random Lipschitz bound for \RFF{}] \label{example:rff_lip}
    Let $\rffkernel:\bbR^d \times \bbR^d \to \bbR$ be an \RFF{} kernel defined as in \eqref{eq:rff_def} corresponding to some spectral measure $\Lambda$, and let $\HilRFF$ denote its feature space corresponding to the \RFF{} map $\rff: \bbR^d \to \HilRFF$ defined as in \eqref{eq:rff_def}, so that given $\bw_1, \dots, \bw_{\dimRFF} \stackrel{\iid}{\sim} \Lambda$, we have for $\bx, \by \in \bbR^d$ that
    \begin{align}
        \rffkernel(\bx, \by) = \frac{1}{\dimRFF} \sum_{j=1}^{\dimRFF} \cos\pars{\bw_j^\top(\bx - \by)}.
    \end{align}
    Let $\bW = (\bw_1, \dots, \bw_{\dimRFF}) \in \bbR^{d \times \dimRFF}$ be the random matrix with column vectors $\bw_1, \dots, \bw_{\dimRFF} \stackrel{\iid}{\sim} \Lambda$. Then, $\rffkernel$ is $\pars{\frac{\norm{\bW}_2}{\sqrt{\dimRFF}}}$-Lipschitz, so that for any $\bx, \by \in \bbR^d$, we have the inequality
    \begin{align}
        \norm{\rff(\bx) - \rff(\by)}_{\HilRFF} \leq \frac{\norm{\bW}_2}{\sqrt{\dimRFF}} \norm{\bx - \by}_2.
    \end{align} 
\end{example}
\begin{proof}
    The proof follows analogously to that of Example \ref{example:2ndmoment_lip}. Let $\bx, \by \in \bbR^d$, then
    \begin{align}
        \norm{\rff(\bx) - \rff(\by)}_{\HilRFF}^2 &\stackrel{\text{(a)}}{=} 2 - \frac{2}{\dimRFF} \sum_{i=1}^{\dimRFF} \cos(\bw_i^\top (\bx-\by)) \stackrel{\text{(b)}}{\leq} \frac{1}{\dimRFF} \sum_{i=1}^{\dimRFF} \pars{\bw_i^\top (\bx - \by)}^2 \\
        &\stackrel{\text{(c)}}{=} \frac{1}{\dimRFF} (\bx - \by)^\top \bW \bW^\top (\bx - \by) \stackrel{\text{(d)}}{\leq} \frac{1}{\dimRFF} \norm{\bW}^2_2 \norm{\bx - \by}^2_2,
    \end{align}
    where (a) is due to the cosine identity $\cos(a-b) = \cos(a)\cos(b) + \sin(a)\sin(b)$ for all $a,b \in \bbR$, (b) is due to the inequality $1 - t^2/2 \leq \cos(t)$ for all $t \in \bbR$, (c) is because $\pars{\bw_i^\top(\bx-\by)}^2 = (\bx-\by)^\top \bw_i \bw_i^\top (\bx - \by)$ and $\bW \bW^\top = \sum_{i=1}^{\dimRFF} \bw_i \bw_i^\top$, (d) is due to the Cauchy-Schwarz inequality combined with the definition of the spectral norm.
\end{proof}
\paragraph{Bounds for the Signature Kernel}
A well-known property of signature features that they decay factorially fast with respect to the tensor level $m \in \bbN$.
\begin{lemma}[Norm bound for signature features] \label{lem:1var_sig_norm}
    Let $L > 0$ and $\kernel: \cX \times \cX \to \bbR$ be an $L$-Lipschitz kernel with RKHS $\Hil$. Then, we have for the level-$m$ signature features $\signature[m](\bx)$ of the sequence $\bx  \in \seq$ that
    \begin{align}
        \norm{\signature[m](\bx)}_{\Hil^{\otimes m}}
        \leq
        \frac{\pars{L \norm{\bx}_{\onevar}}^m}{m!}.
    \end{align}
\end{lemma}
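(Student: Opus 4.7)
The plan is to unfold the definition of $\signature[m](\bx)$ from \eqref{eq:sig_feat}, apply the triangle inequality in $\Hil^{\otimes m}$, then reduce to a scalar combinatorial inequality governed by the $L$-Lipschitz property of $\kernel$.

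First I would write
\begin{align}
\norm{\signature[m](\bx)}_{\Hil^{\otimes m}}
=
\Bigl\lVert\sum_{\bi \in \Delta_m(\ell_\bx - 1)} \delta \kernelfeatures(\bx_{i_1}) \otimes \cdots \otimes \delta \kernelfeatures(\bx_{i_m})\Bigr\rVert_{\Hil^{\otimes m}}
\leq
\sum_{\bi \in \Delta_m(\ell_\bx - 1)} \prod_{j=1}^m \norm{\delta \kernelfeatures(\bx_{i_j})}_{\Hil},
\end{align}
where the last step uses the triangle inequality in $\Hil^{\otimes m}$ together with the factorization of the tensor product norm on elementary tensors from \eqref{eq:inner_tensor}, i.e.~$\norm{h_1 \otimes \cdots \otimes h_m}_{\Hil^{\otimes m}} = \prod_{j=1}^m \norm{h_j}_\Hil$.

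Next, I would invoke the $L$-Lipschitz assumption from Definition \ref{def:lipschitz kernel} to obtain, for each $i \in [\ell_\bx - 1]$,
\begin{align}
\norm{\delta \kernelfeatures(\bx_i)}_\Hil = \norm{\kernel_{\bx_{i+1}} - \kernel_{\bx_i}}_\Hil \leq L \norm{\delta \bx_i}_2,
\end{align}
so that the bound becomes
\begin{align}
\norm{\signature[m](\bx)}_{\Hil^{\otimes m}}
\leq
L^m \sum_{\bi \in \Delta_m(\ell_\bx - 1)} \prod_{j=1}^m \norm{\delta \bx_{i_j}}_2.
\end{align}

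The final (and really only substantive) step is the combinatorial inequality
\begin{align}
\sum_{\bi \in \Delta_m(n)} \prod_{j=1}^m a_{i_j} \leq \frac{1}{m!}\pars{\sum_{i=1}^n a_i}^m
\end{align}
for nonnegative $a_1,\ldots,a_n$, applied with $n=\ell_\bx - 1$ and $a_i = \norm{\delta \bx_i}_2$. This follows because the multinomial expansion of $(\sum_i a_i)^m$ has $m!$ copies of every product $a_{i_1}\cdots a_{i_m}$ with strictly ordered indices (plus nonnegative remainder terms where some indices coincide). Since $\sum_{i=1}^{\ell_\bx - 1} \norm{\delta \bx_i}_2 = \norm{\bx}_\onevar$ by definition, chaining the inequalities produces the claimed bound $\norm{\signature[m](\bx)}_{\Hil^{\otimes m}} \leq (L\norm{\bx}_\onevar)^m / m!$. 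I do not anticipate any real obstacle here; the only point worth checking carefully is that the triangle inequality and tensor-norm factorization are correctly applied before invoking the combinatorial identity, and that the Lipschitz constant $L$ is pulled out of each of the $m$ factors to give the $L^m$.
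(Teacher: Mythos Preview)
Your proposal is correct and mirrors the paper's proof almost line for line: triangle inequality plus tensor-norm factorization, then the $L$-Lipschitz bound on each increment, then the multinomial expansion inequality $\sum_{\bi \in \Delta_m(n)} \prod_j a_{i_j} \leq \frac{1}{m!}(\sum_i a_i)^m$, and finally the definition of $\norm{\bx}_\onevar$.
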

\begin{proof}
    We have
    \begin{align}
        \norm{\signature[m](\bx)}_{\Hil^{\otimes m}} 
        &=
        \norm{\sum_{\bi \in \Delta_m(\ell_\bx-1)} \delta \kernel_{\bx_{i_1}} \otimes \cdots \otimes \delta\kernel_{\bx_{i_m}}}_{\cH^{\otimes m}}
        \\
        &\stackrel{\text{(a)}}{\leq}
        \sum_{\bi \in \Delta_m(\ell_\bx-1)} \norm{\delta\kernel_{\bx_{i_1}}}_\cH \cdots \norm{\delta \kernel_{\bx_{i_m}}}_\cH
        \\
        &\stackrel{\text{(b)}}{\leq} L^m \sum_{\bi \in \Delta_m(\abs{\bx-1})} \norm{\delta\bx_{i_1}}_2 \cdots \norm{\delta \bx_{i_m}}_2
        \\
        &\stackrel{\text{(c)}}{\leq}
        \frac{\pars{L \sum_{i=1}^{\ell_\bx-1} \norm{\delta \bx_i}_2}^m}{m!}
        \stackrel{\text{(d)}}{=} \frac{\pars{L \norm{\bx}_\onevar}^m}{m!}, \label{line:signorm2}
    \end{align}
    where (a) follows from triangle inequality followed by factorizing the tensor norm, (b) is the $L$-Lipschitzness property, (c) from completing the multinomial expansion and normalizing by the number of permutations (note that $\Delta_m(\ell_\bx-1)$ contains a single permutation of all such multi-indices that have nonrepeating entries), and (d) is the definition of sequence $1$-variation.
\end{proof}

The following bound for the signature kernel is a direct consequence of the previous lemma. 
\begin{corollary}[Upper bound for signature kernel] \label{lem:ksig_bound}
Let $\kernel: \cX \times \cX \to \bbR$ be an $L$-Lipschitz kernel, and $\sigkernel[m]: \seq \times \seq \to \bbR$ the level-$m$ signature kernel built from $\kernel$. Then, we have the following bound for $\bx, \by \in \seq$
    \begin{align}
        \abs{\sigkernel[m](\bx, \by)}
        \leq \frac{\pars{L^2 \norm{\bx}_\onevar \norm{\by}_\onevar}^m}{(m!)^2}.
    \end{align}
\end{corollary}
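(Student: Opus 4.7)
The plan is to deduce this bound as an immediate consequence of the previously established factorial decay estimate for signature features (Lemma \ref{lem:1var_sig_norm}) combined with the Cauchy--Schwarz inequality in the tensor Hilbert space $\Hil^{\otimes m}$. Since the level-$m$ signature kernel was defined as the inner product $\sigkernel[m](\bx, \by) = \inner{\signature[m](\bx)}{\signature[m](\by)}_{\Hil^{\otimes m}}$, no analysis beyond these two ingredients is required.

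Concretely, I would proceed in two short steps. First, by the Cauchy--Schwarz inequality applied in $\Hil^{\otimes m}$, one obtains
\begin{align}
\abs{\sigkernel[m](\bx, \by)} = \abs{\inner{\signature[m](\bx)}{\signature[m](\by)}_{\Hil^{\otimes m}}} \leq \norm{\signature[m](\bx)}_{\Hil^{\otimes m}} \norm{\signature[m](\by)}_{\Hil^{\otimes m}}.
\end{align}
Second, I would apply Lemma \ref{lem:1var_sig_norm} separately to each factor, which is precisely the norm bound $\norm{\signature[m](\bx)}_{\Hil^{\otimes m}} \leq (L \norm{\bx}_\onevar)^m / m!$ and its analogue for $\by$. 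Multiplying these yields the claimed bound $(L^2 \norm{\bx}_\onevar \norm{\by}_\onevar)^m / (m!)^2$.

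There is effectively no obstacle here: the work has already been done in Lemma \ref{lem:1var_sig_norm}, where the main technical content (triangle inequality on the sum over $\Delta_m$, factorization of the tensor norm, the Lipschitz property of $\kernel$, and the multinomial reorganization that produces the $m!$ denominator) was carried out. The corollary is essentially a polarization-style statement showing that the same factorial decay transfers from the signature feature norm to the signature kernel value, which makes it a direct one-line consequence after invoking Cauchy--Schwarz.
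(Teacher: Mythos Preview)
Your proposal is correct and follows exactly the same approach as the paper's proof: Cauchy--Schwarz in $\Hil^{\otimes m}$ followed by two applications of Lemma~\ref{lem:1var_sig_norm}. The paper merely writes out the inner product with the explicit tensor sums over $\Delta_m$ before applying these two steps, but the argument is identical.
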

\begin{proof}
    Note that without a kernel trick, $\sigkernel[m]$ is written for $\bx, \by \in \seq$ as the inner product
    \begin{align}
        \abs{\sigkernel[m](\bx, \by)}
        &=
        \abs{\inner{\sum_{\bi \in \Delta_m(\ell_\bx-1)} \delta \kernel_{\bx_{i_1}} \otimes \cdots \otimes \delta \kernel_{\bx_{i_m}}}{\sum_{\bj \in \Delta_m(\ell_\by-1)} \delta \kernel_{\by_{j_1}} \otimes \cdots \otimes \delta \kernel_{\by_{j_m}}}_{\Hil^{\otimes m}}}
        \\
        &\stackrel{\text{(a)}}{\leq} \norm{\sum_{\bi \in \Delta_m(\ell_\bx-1)} \delta \kernel_{\bx_{i_1}} \otimes \cdots \otimes \delta \kernel_{\bx_{i_m}}}_{\Hil^{\otimes m}}\norm{\sum_{\bj \in \Delta_m(\ell_\by-1)} \delta \kernel_{\by_{j_1}} \otimes \cdots \otimes \delta \kernel_{\by_{j_m}}}_{\Hil^{\otimes m}}
        \\
        &\stackrel{\text{(b)}}{\leq} \frac{\pars{L^2 \norm{\bx}_\onevar \norm{\by}_\onevar}^m}{(m!)^2},
    \end{align}
    where (a) follows from the Cauchy-Schwarz inequality, and (b) is implied by Lemma \ref{lem:1var_sig_norm}.
\end{proof}

A similar upper bound to Lemma \ref{lem:1var_sig_norm} also holds for the \RFSF{} kernel $\rffsigkernel[m]$, that now depends on the norms of the random matrices $\bW^{(1)}, \dots, \bW^{(m)}$, hence is itself random.

\begin{lemma}[Random norm bound for \RFSF{}] \label{lem:1var_rffsig_norm}
    Let $\rffsig[m]: \seq \to \HilRFFT$ be the level-$m$ \RFSF{} map defined as in \eqref{eq:rffsigdef} built from some spectral measure $\Lambda$. Then, we have for $\bx \in \seq$ that
    \begin{align}
        \norm{\rffsig[m](\bx)}_{{\HilRFF}^{\otimes m}} \leq \frac{\norm{\bW^{(1)}}_2 \cdots \norm{\bW^{(m)}}_2}{m!} \pars{\frac{\norm{\bx}_\onevar}{\sqrt{\dimRFF}}}^m,
    \end{align}
    where $\bW^{(1)}, \dots, \bW^{(m)} \stackrel{\iid}{\sim} \Lambda^{\dimRFF}$ are random matrices sampled from $\Lambda^{\dimRFF}$.
\end{lemma}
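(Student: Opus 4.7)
The plan is to closely mirror the proof of Lemma \ref{lem:1var_sig_norm}, replacing the deterministic $L$-Lipschitz bound on the kernel-based feature map with the random Lipschitz bound for the \RFF{} map stated in Example \ref{example:rff_lip}. This is natural because $\rffsig[m]$ has exactly the same combinatorial/algebraic form as $\signature[m]$, with the only change being that the static feature map $\kernelfeatures$ is replaced by the random map $\rff_k$ at each tensor slot $k \in [m]$, and each $\rff_k$ carries its own (random) Lipschitz constant $\norm{\bW^{(k)}}_2 / \sqrt{\dimRFF}$.

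Concretely, first I would expand $\rffsig[m](\bx)$ according to its definition \eqref{eq:rffsigdef} and apply the triangle inequality in $\HilRFF^{\otimes m}$ to pull the norm inside the sum over $\bi \in \Delta_m(\ell_\bx-1)$. By the multiplicativity of the tensor inner product \eqref{eq:inner_tensor}, the norm of a simple tensor factorizes as a product of norms of the components, so
\begin{align}
\norm{\rffsig[m](\bx)}_{\HilRFF^{\otimes m}} \leq \sum_{\bi \in \Delta_m(\ell_\bx-1)} \norm{\delta \rff_1(\bx_{i_1})}_{\HilRFF} \cdots \norm{\delta \rff_m(\bx_{i_m})}_{\HilRFF}.
\end{align}

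Next, I would apply Example \ref{example:rff_lip} for each $k \in [m]$ to bound $\norm{\delta \rff_k(\bx_{i_k})}_{\HilRFF} = \norm{\rff_k(\bx_{i_k+1}) - \rff_k(\bx_{i_k})}_{\HilRFF} \leq \pars{\norm{\bW^{(k)}}_2/\sqrt{\dimRFF}} \norm{\delta \bx_{i_k}}_2$, and factor the product of random Lipschitz constants out of the sum. What remains is the deterministic combinatorial sum $\sum_{\bi \in \Delta_m(\ell_\bx-1)} \norm{\delta \bx_{i_1}}_2 \cdots \norm{\delta \bx_{i_m}}_2$, which is exactly the ordered-simplex sum handled in step (c) of Lemma \ref{lem:1var_sig_norm}: adding all $m!$ permutations and dividing yields the multinomial bound $\frac{1}{m!}\pars{\sum_{i=1}^{\ell_\bx -1}\norm{\delta \bx_i}_2}^m = \frac{\norm{\bx}_\onevar^m}{m!}$. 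Assembling the factors gives the stated inequality.

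There is no real obstacle here; every step is routine and each estimate is tight in the same sense as in Lemma \ref{lem:1var_sig_norm}. The only subtlety worth flagging is conceptual rather than technical: because the $\bW^{(k)}$ are independent across tensor levels by construction in Definition \ref{def:rffsig_def}, the bound comes out as a product of their spectral norms, and the factor $\dimRFF^{-m/2}$ emerges from the $\nicefrac{1}{\sqrt{\dimRFF}}$ normalization present in each copy $\rff_k$ defined in \eqref{eq:rff_def}. This lemma will subsequently be useful for controlling tail behaviour of $\rffsigkernel[m]$ in Theorems \ref{thm:main} and \ref{thm:main2}, since products of spectral norms of sub-Gaussian random matrices are exactly the kind of quantity one can control through the moment condition \eqref{eq:w_cond_main}.
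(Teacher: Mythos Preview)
Your proposal is correct and follows essentially the same approach as the paper's own proof: triangle inequality plus tensor-norm factorization, then the random Lipschitz bound from Example~\ref{example:rff_lip}, and finally the multinomial/simplex estimate exactly as in steps~(c)--(d) of Lemma~\ref{lem:1var_sig_norm}. There is nothing to add.
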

\begin{proof}
    The proof follows analogously to Lemma \ref{lem:1var_sig_norm}. We have for $\bx \in \seq$ that
    \begin{align}
        \norm{\rffsig[m](\bx)}_{{\HilRFF}^{\otimes m}}
        &=
        \norm{\sum_{\bi \in \Delta_m(\ell_\bx - 1)} \delta \rff_1(\bx_{i_1}) \otimes \cdots \otimes \delta \rff_m(\bx_{i_m})}_{\HilRFF^{\otimes m}}
        \\
        &\stackrel{\text{(a)}}{\leq}
        \sum_{\bi \in \Delta_m(\ell_\bx - 1)} \norm{\delta \rff_1(\bx_{i_1})}_{\HilRFF} \cdots \norm{\delta \rff_m(\bx_{i_m})}_{\HilRFF}
        \\
        &\stackrel{\text{(b)}}{\leq}
        \frac{\norm{\bW^{(1)}}_2 \cdots \norm{\bW^{(m)}}_2}{\dimRFF^{m/2}} \sum_{\bi \in \Delta_m(\ell_\bx - 1)} \norm{\delta \bx_{i_1}}_2 \cdots \norm{\delta \bx_{i_m}}_2
        \\
        &\stackrel{\text{(c)}}{\leq}
        \frac{\norm{\bW^{(1)}}_2 \cdots \norm{\bW^{(m)}}_2}{\dimRFF^{m/2}} \frac{\norm{\bx}_{\onevar}^m}{m!},
    \end{align}
    where (a) is the triangle inequality and factorization of tensor norm, (b) is using the Lipschitzness of \RFF s from Example \ref{example:rff_lip}, (c) is the same as steps (c)-(d) in Lemma \ref{lem:1var_sig_norm}.
\end{proof}
Then, the following is again an application of the Cauchy-Schwarz inequality.
\begin{corollary}[Random upper bound for \RFSF{} kernel]
\label{lem:krffsig_bound}
    Let $\rffsigkernel[m]: \seq \times \seq \to \bbR$ be the level-$m$ \RFSF{} kernel defined as in \eqref{eq:rffsigkernel_def} built from some spectral measure $\Lambda$.  Then, for all $\bx, \by \in \seq$
    \begin{align}
        \abs{\rffsigkernel[m](\bx, \by)}
        &\leq
        \frac{\norm{\bW^{(1)}}_2^2 \cdots \norm{\bW^{(m)}}_2^2}{(m!)^2}  \pars{\frac{\norm{\bx}_\onevar \norm{\by}_\onevar}{\dimRFF}}^m
    \end{align}
    where $\bW^{(1)}, \dots, \bW^{(m)} \stackrel{\iid}{\sim} \Lambda^{\dimRFF}$ are random matrices sampled from $\Lambda^{\dimRFF}$.
\end{corollary}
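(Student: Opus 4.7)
The plan is to mirror exactly the argument used for Corollary \ref{lem:ksig_bound}, namely a straightforward Cauchy--Schwarz step followed by the random norm bound from Lemma \ref{lem:1var_rffsig_norm}. The statement is purely a consequence of those two ingredients, so I expect no real obstacles.

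First I would unpack the definition of the level-$m$ \RFSF{} kernel from Definition~\ref{def:rffsig_def}, writing
\[
    \rffsigkernel[m](\bx,\by) = \inner{\rffsig[m](\bx)}{\rffsig[m](\by)}_{\HilRFF^{\otimes m}}.
\]
Then I would apply the Cauchy--Schwarz inequality in the Hilbert space $\HilRFF^{\otimes m}$ to obtain
\[
    \abs{\rffsigkernel[m](\bx,\by)} \leq \norm{\rffsig[m](\bx)}_{\HilRFF^{\otimes m}} \cdot \norm{\rffsig[m](\by)}_{\HilRFF^{\otimes m}}.
\]

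Next, I would invoke Lemma \ref{lem:1var_rffsig_norm} to each of the two factors, giving, almost surely,
\[
    \norm{\rffsig[m](\bx)}_{\HilRFF^{\otimes m}} \leq \frac{\norm{\bW^{(1)}}_2 \cdots \norm{\bW^{(m)}}_2}{m!} \pars{\frac{\norm{\bx}_\onevar}{\sqrt{\dimRFF}}}^m,
\]
and analogously for $\by$. Note that because Lemma \ref{lem:1var_rffsig_norm} is a pathwise (deterministic in the random weights) bound and both features are built from the \emph{same} i.i.d.\ draws $\bW^{(1)},\dots,\bW^{(m)} \sim \Lambda^{\dimRFF}$, the same spectral-norm factors appear in both bounds; there is no need to introduce independent copies.

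Multiplying the two bounds then yields
\[
    \abs{\rffsigkernel[m](\bx,\by)} \leq \frac{\norm{\bW^{(1)}}_2^2 \cdots \norm{\bW^{(m)}}_2^2}{(m!)^2}\pars{\frac{\norm{\bx}_\onevar \norm{\by}_\onevar}{\dimRFF}}^m,
\]
which is precisely the claim. The only subtlety worth flagging in the write-up is that the tensor product of $m$ \emph{distinct} \RFF{} maps $\rff_1,\dots,\rff_m$ still lives in a single Hilbert space $\HilRFF^{\otimes m}$ with the inner product from \eqref{eq:inner_tensor}, so that Cauchy--Schwarz applies directly; apart from this bookkeeping, the proof is a one-line consequence of Lemma \ref{lem:1var_rffsig_norm}.
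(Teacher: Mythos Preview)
Your proposal is correct and matches the paper's own argument exactly: the paper states that the corollary ``is again an application of the Cauchy--Schwarz inequality'' together with Lemma~\ref{lem:1var_rffsig_norm}, which is precisely what you do. There is nothing to add.
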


The following lemma does not concern signatures, but will be useful to us later in the proof of Lemma \ref{lem:RFSF_approx} by providing a mean-value theorem for the cross-differencing operator $\delta^2_{i,j}$.
\begin{lemma}[2\textsuperscript{nd} order mean-value theorem] \label{lem:second_mvt}
Let $f: \bbR^d \times \bbR^d \to \bbR$ be a twice differentiable function, and $\cX \subset \bbR^d$ be a convex and compact set. Then, we have for any $\bu, \bv, \bx, \by \in \cX$ that
\begin{align}
    f(\bx, \by) - f(\bx, \bv) - f(\bu, \by) + f(\bu, \bv) \leq \sup_{\bs, \bt \in \cX} \norm{\partial^2_{\bs, \bt} f(\bs, \bt)}_2 \norm{\bx - \bu}_2 \norm{\by - \bv}_2,
\end{align}
where $\partial^2_{\bs, \bt} f(\bs, \bt) \coloneqq \pars{\frac{\partial^2 f(\bs, \bt)}{\partial s_i \partial t_j}}_{i,j=1}^d$ refers to the submatrix of the Hessian of cross-derivatives.
\begin{proof}
    Keeping $\bv, \by \in \cX$ as fixed, we may define $g: \bbR^d \to \bbR$ as $g(\cdot) \coloneqq f(\cdot, \by) - f(\cdot, \bv)$, so that the expression above can be written as $g(\bx) - g(\bu)$, and by the convexity of $\cX$, we may apply the mean-value theorem to find that $\exists s \in (0, 1)$ such that
    \begin{align}
        g(\bx) - g(\bu) = \inner{\nabla g(s \bx + (1-s) \bu)}{ \bx - \bu} \leq \sup_{\bs \in \cX}\norm{\nabla g(\bs)}_2 \norm{\bx - \bu}_2, \label{eq:mvt1}
    \end{align}
    where $\nabla g(\bs) \coloneqq \pars{\frac{\partial g(\bs)}{\partial s_i}}_{i=1}^d$ denotes the gradient of $g$, while the second inequality follows from the Cauchy-Schwarz inequality and the compactness of $\cX$ (so that the $\sup$ exists). Also note that $\nabla g(\bs) = \partial_\bs f(\bs, \by) - \partial_\bs f(\bs, \bv)$, so defining $h: \bbR^d \to \bbR^d$ as $h(\cdot) \coloneqq \partial_\bs f (\bs, \cdot)$ and applying the vector-valued mean-value inequality \cite[Thm.~9.19]{rudin1976principles} to $h$ gives that $\exists t \in (0, 1)$ such that
    \begin{align}
        \norm{h(\by) - h(\bv)}_2 \leq \norm{\Jac_h(t\by + (1-t)\bv)}_2 \norm{\by - \bv}_2 \leq \sup_{\bt \in \cX} \norm{\Jac_h(\bt)}_2 \norm{\by - \bv}_2, \label{eq:mvt2}
    \end{align}
    where $\Jac_h(\bt) \coloneqq \pars{\frac{\partial h_i (\bt)}{\partial t_j}}_{i,j=1}^d$ refers to the Jacobian of $h$. Putting the inequalities \eqref{eq:mvt1} and \eqref{eq:mvt2} together and substituting back the function $f$ gives the desired result.
\end{proof}
\end{lemma}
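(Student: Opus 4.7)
The plan is to view the cross-difference $f(\bx,\by) - f(\bx,\bv) - f(\bu,\by) + f(\bu,\bv)$ as a discrete analogue of the mixed second partial derivative $\partial^2_{\bs,\bt} f$, and to bound it by the continuous quantity via a two-fold application of the mean-value theorem. The convexity of $\cX$ is essential here: it guarantees that the line segments joining pairs of points in $\cX$ along which we integrate remain inside $\cX$, so that the suprema of derivatives we produce are over $\cX$ itself.

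First I would freeze the second argument pair $(\by,\bv)$ and define the scalar auxiliary function $g: \cX \to \bbR$ by $g(\bs) \coloneqq f(\bs,\by) - f(\bs,\bv)$. Then the quantity to bound is exactly $g(\bx) - g(\bu)$. Applying the scalar mean-value theorem along the segment from $\bu$ to $\bx$ (inside $\cX$ by convexity) and using Cauchy–Schwarz yields $|g(\bx) - g(\bu)| \leq \sup_{\bs \in \cX} \norm{\nabla g(\bs)}_2 \cdot \norm{\bx - \bu}_2$.

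Next I would unfold $\nabla g(\bs) = \partial_{\bs} f(\bs,\by) - \partial_{\bs} f(\bs,\bv)$, which is again a difference of a vector-valued function evaluated at $\by$ and $\bv$. Fixing $\bs$ and defining $h: \cX \to \bbR^d$ by $h(\bt) \coloneqq \partial_\bs f(\bs,\bt)$, I would invoke the vector-valued mean-value inequality (e.g.\ Rudin, Theorem~9.19) along the segment from $\bv$ to $\by$ to obtain $\norm{h(\by) - h(\bv)}_2 \leq \sup_{\bt \in \cX} \norm{\Jac_h(\bt)}_2 \cdot \norm{\by - \bv}_2$. Observing that $\Jac_h(\bt) = \partial^2_{\bs,\bt} f(\bs,\bt)$ is precisely the matrix of mixed cross-derivatives, I would chain this with the bound from the previous step to conclude.

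I do not expect any genuine obstacle; the only point needing care is the choice of the vector-valued mean-value inequality in the second step (as opposed to the scalar one), since $\nabla g$ is vector-valued and one needs an operator-norm bound on the Jacobian rather than a directional derivative bound. The supremum is taken over all of $\cX$ (rather than specific intermediate points) purely to have a clean, position-independent statement, at a trivial cost.
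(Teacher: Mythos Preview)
Your proposal is correct and essentially identical to the paper's proof: you introduce the same auxiliary function $g(\bs)=f(\bs,\by)-f(\bs,\bv)$, apply the scalar mean-value theorem with Cauchy--Schwarz in the first variable, then define $h(\bt)=\partial_\bs f(\bs,\bt)$ and invoke the vector-valued mean-value inequality (Rudin, Thm.~9.19) in the second variable, identifying $\Jac_h$ with $\partial^2_{\bs,\bt} f$. Even the remark about needing the vector-valued form in the second step mirrors the paper's treatment.
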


This is our final lemma in our exposition of supplementary results about the (random) signature kernels, and it will be our main tool for proving Theorem \ref{thm:main}.
\begin{lemma}[Uniform upper bound for deviation of \RFSF{} kernel] \label{lem:RFSF_approx}
    Let $\cX \subset \bbR^d$ be a convex and compact set, $\kernel: \bbR^d \times \bbR^d \to \bbR$ a continuous, bounded, translation-invariant $L$-Lipschitz kernel and $\rffkernel: \bbR^d \times \bbR^d \to \bbR$ the corresponding \RFF{} kernel.
    Then, the level-$m$ ($m\in \bbN$) signature and \RFSF{} kernels are uniformly close for $V>0$ by
    \begin{align}
        &\sup_{\substack{\bx, \by \in \seq \\ \norm{\bx}_\onevar, \norm{\by}_\onevar \leq V}} \abs{\rffsigkernel[m](\bx, \by) - \sigkernel[m](\bx, \by)}
        \\
        &\leq V^{2m} \sum_{k=1}^m \frac{L^{2(m-k)}}{\dimRFF^{k-1}((k-1)!)^2} \norm{\bW^{(1)}}_2^2 \cdots \norm{\bW^{(k-1)}}_2^2 \sup_{\bs, \bt \in \cX} \norm{\partial^2_{\bs,\bt} \rffkernel_{k}(\bs, \bt) - \partial^2_{\bs,\bt} \kernel(\bs, \bt)}_2, 
    \end{align}
    where $\rffkernel_{1}, \ldots, \rffkernel_{m}$ are independent \RFF{} kernels with weights $\bW^{(1)}, \dots, \bW^{(m)} \stackrel{\iid}{\sim} \Lambda^{\dimRFF}$, and $\partial^2_{\bs, \bt} f(\bs, \bt) \coloneqq \pars{\frac{\partial^2 f(\bs, \bt)}{\partial s_i \partial t_j}}_{i,j=1}^d$ for a twice-differentiable function $f:\bbR^d \times \bbR^d \to \bbR$.
\end{lemma}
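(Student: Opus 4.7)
The plan is to write both kernels in their explicit $\delta^2$-sum form from \eqref{eq:sigkernel_def} and \eqref{eq:rffsigkernel_def}, telescope the pointwise product difference, and then bound each summand using Cauchy-Schwarz together with the Lipschitz estimates and the 2nd-order mean value inequality of Lemma~\ref{lem:second_mvt}. Since the bound I obtain depends on $\bx, \by$ only through $\norm{\bx}_\onevar$ and $\norm{\by}_\onevar$, taking the supremum over sequences of $1$-variation at most $V$ is immediate at the end.

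For $\bx, \by \in \seq$ and multi-indices $\bi \in \Delta_m(\ell_\bx - 1), \bj \in \Delta_m(\ell_\by - 1)$, I abbreviate $\tilde A_p \coloneqq \delta^2_{i_p, j_p}\rffkernel_p(\bx_{i_p}, \by_{j_p})$ and $B_p \coloneqq \delta^2_{i_p, j_p}\kernel(\bx_{i_p}, \by_{j_p})$. The standard product-telescoping identity
\[
\prod_{p=1}^m \tilde A_p - \prod_{p=1}^m B_p = \sum_{k=1}^m \pars{\prod_{p=1}^{k-1} \tilde A_p}(\tilde A_k - B_k)\pars{\prod_{p=k+1}^m B_p}
\]
expresses $\rffsigkernel[m](\bx, \by) - \sigkernel[m](\bx, \by)$ as a sum over $k \in [m]$ of terms $T_k$, each carrying $k-1$ RFSF-type factors to the left of the distinguished ``error'' position and $m-k$ signature-type factors to the right.

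Next I would bound the three types of factors pointwise. Using that $\tilde A_p = \inner{\delta \rff_p(\bx_{i_p})}{\delta \rff_p(\by_{j_p})}_{\HilRFF}$ and $B_p = \inner{\delta \kernel_{\bx_{i_p}}}{\delta \kernel_{\by_{j_p}}}_{\Hil}$, two applications of Cauchy-Schwarz together with the RFF Lipschitz bound of Example~\ref{example:rff_lip} and the $L$-Lipschitz hypothesis on $\kernel$ give
\[
\abs{\tilde A_p} \leq \frac{\norm{\bW^{(p)}}_2^2}{\dimRFF}\,\norm{\delta \bx_{i_p}}_2\,\norm{\delta \by_{j_p}}_2,\qquad \abs{B_p} \leq L^2\,\norm{\delta \bx_{i_p}}_2\,\norm{\delta \by_{j_p}}_2.
\]
For the middle factor, the convexity of $\cX$ and Lemma~\ref{lem:second_mvt} applied to $\rffkernel_k - \kernel$ yield
\[
\abs{\tilde A_k - B_k} \leq \sup_{\bs, \bt \in \cX}\norm{\partial^2_{\bs, \bt}\rffkernel_k(\bs, \bt) - \partial^2_{\bs, \bt}\kernel(\bs, \bt)}_2\,\norm{\delta \bx_{i_k}}_2\,\norm{\delta \by_{j_k}}_2.
\]
Multiplying these three bounds and pulling the $p$-indexed but index-independent scalars $\norm{\bW^{(p)}}_2^2/\dimRFF$, $L^2$, and the supremum outside of $\sum_{\bi, \bj}$ leaves the symmetric-in-index sum $\sum_{\bi \in \Delta_m(\ell_\bx - 1)}\prod_{p=1}^m \norm{\delta \bx_{i_p}}_2$ and its $\by$-counterpart, each of which is at most $\norm{\bx}_\onevar^m/m!$ (resp.\ $\norm{\by}_\onevar^m/m!$) by the same multinomial argument as in the proof of Lemma~\ref{lem:1var_sig_norm}. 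Imposing $\norm{\bx}_\onevar, \norm{\by}_\onevar \leq V$ and using the crude inequality $(m!)^2 \geq ((k-1)!)^2$ then gives the stated bound for $T_k$; summing over $k \in [m]$ concludes.

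The main obstacle I foresee is the combinatorial bookkeeping of the telescoping step, in particular to verify that after the pointwise Lipschitz bounds are applied, the remaining summand over $\bi$ is genuinely symmetric in its indices (so that the $m!$-reduction of Lemma~\ref{lem:1var_sig_norm} is legitimate) even though $\norm{\bW^{(p)}}_2$ and the $L^{2(m-k)}$ factor appeared at $p$-dependent positions; the key point is that these scalars factor out of the index sum before symmetry is invoked. Everything else reduces to routine manipulations of differencing operators and Cauchy-Schwarz.
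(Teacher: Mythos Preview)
Your proposal is correct and the argument goes through essentially as you describe; the symmetry concern you flag is not an issue, since after the pointwise Cauchy--Schwarz and Lipschitz bounds the constants $\norm{\bW^{(p)}}_2^2/\dimRFF$, $L^2$, and the Hessian supremum are independent of the multi-index values and hence factor out before the $\sum_{\bi}\prod_p\norm{\delta\bx_{i_p}}$ is bounded by $\norm{\bx}_\onevar^m/m!$.

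However, your route differs from the paper's. The paper does \emph{not} telescope the full $m$-fold product directly; instead it uses the one-step recursions \eqref{eq:sigkernel_rec}--\eqref{eq:rffsigkernel_rec} to split $\epsilon_m \coloneqq \sup\abs{\rffsigkernel[m]-\sigkernel[m]}$ into a ``new'' term plus $L^2V^2\,\epsilon_{m-1}$, and then unrolls the recursion. In that approach the factor $1/((k-1)!)^2$ arises naturally from applying Corollary~\ref{lem:krffsig_bound} to the $(k-1)$-level \RFSF{} kernel at the $k$th step of the unrolling. By contrast, your direct telescoping keeps the full multinomial sum intact and yields a genuinely sharper constant $1/(m!)^2$ in every summand, which you then throw away via $(m!)^2\ge((k-1)!)^2$ to match the stated bound. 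So your argument is slightly more elementary and actually proves a stronger inequality before the final weakening; the paper's recursive version, on the other hand, makes the inductive structure of the \RFSF{} kernel more visible and avoids having to think about the product telescope at all.
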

\begin{proof}
First of all, by Lemma \ref{lem:ksig_bound} and Lemma \ref{lem:krffsig_bound}, the supremum exists.
In the following, given a sequence $\bx \in \seq$, we denote its $1:l$ slice for some $l \in [\ell_\bx]$ $\bx_{1:l} \coloneqq (\bx_1, \dots, \bx_l)$.
Then, it holds for any $m \geq 1$ recursively for the signature kernel that
\begin{align} \label{eq:sigkernel_rec}
    \sigkernel[m](\bx, \by) = \sum_{k=1}^{\ell_\bx-1} \sum_{l=1}^{\ell_\by-1} \sigkernel[m-1](\bx_{1:k}, \by_{1:l}) \delta^2_{k, l} \kernel(\bx_k, \by_l),
\end{align}
and analogously for the \RFSF{} kernel that
\begin{align} \label{eq:rffsigkernel_rec}
    \rffsigkernel[m](\bx, \by) = \sum_{k=1}^{\ell_\bx-1} \sum_{l=1}^{\ell_\by-1} \rffsigkernel[m-1](\bx_{1:k}, \by_{1:l}) \delta^2_{k,l} \rffkernel_{m}(\bx_k, \by_l).
\end{align}
Combining these recursions together, we have for the uniform error that
\begin{align}
    \epsilon_m
    =& \sup_{\substack{\bx, \by \in \seq\\ \norm{\bx}_\onevar, \norm{\by}_\onevar \leq V}} \abs{\rffsigkernel[m](\bx, \by) - \sigkernel[m](\bx, \by)}
    \\
    =& \sup_{\substack{\bx, \by \in \seq\\ \norm{\bx}_\onevar, \norm{\by}_\onevar \leq V}} \abs{\sum_{k=1}^{\ell_\bx-1} \sum_{l=1}^{\ell_\by-1} \rffsigkernel[m-1](\bx_{1:k}, \by_{1:l}) \delta^2_{k,l}\rffkernel_{m}(\bx_k, \by_l) - \sigkernel[m-1](\bx_{1:k}, \by_{1:l}) \delta^2_{k,l}\kernel(\bx_k, \by_l)}
    \\
    \stackrel{\text{(a)}}{\leq}& \sup_{\substack{\bx, \by \in \seq\\ \norm{\bx}_\onevar, \norm{\by}_\onevar \leq V}} \abs{\sum_{k=1}^{\ell_\bx-1} \sum_{l=1}^{\ell_\by-1} \rffsigkernel[m-1](\bx_{1:k}, \by_{1:l}) (\delta^2_{k,l}\rffkernel_{m}(\bx_k, \by_l) - \delta^2_{k,l}\kernel(\bx_k, \by_l))}
    \\
    &+ \sup_{\substack{\bx, \by \in \seq\\ \norm{\bx}_\onevar, \norm{\by}_\onevar \leq V}} \abs{\sum_{k=1}^{\ell_\bx-1} \sum_{l=1}^{\ell_\by-1} (\rffsigkernel[m-1](\bx_{1:k}, \by_{1:l}) - \rffsigkernel[m-1](\bx_{1:k}, \by_{1:l})) \delta^2_{k,l}\kernel(\bx_k, \by_l)}
    \\
    \stackrel{\text{(b)}}{\leq}& \sup_{\substack{\bx, \by \in \seq\\ \norm{\bx}_\onevar, \norm{\by}_\onevar \leq V}} \sum_{k=1}^{\ell_\bx-1} \sum_{l=1}^{\ell_\by-1} \abs{\rffsigkernel[m-1](\bx_{1:k}, \by_{1:l})}\abs{\delta^2_{k,l}\rffkernel_{m}(\bx_k, \by_l) - \delta^2_{k,l}\kernel(\bx_k, \by_l)}
    \\
    &+ \sup_{\substack{\bx, \by \in \seq\\ \norm{\bx}_\onevar, \norm{\by}_\onevar \leq V}} \sum_{k=1}^{\ell_\bx-1} \sum_{l=1}^{\ell_\by-1} \abs{\rffsigkernel[m-1](\bx_{1:k}, \by_{1:l}) - \sigkernel[m-1](\bx_{1:k}, \by_{1:l})}\abs{\delta^2_{k,l}\kernel(\bx_k, \by_l)}
    \\
    \stackrel{\text{(c)}}{\leq}& \underbrace{\sup_{\substack{\bx, \by \in \seq\\ \norm{\bx}_\onevar, \norm{\by}_\onevar \leq V}} \abs{\rffsigkernel[m-1](\bx, \by)}}_{\text{(i)}}  \underbrace{\sup_{\substack{\bx, \by \in \seq\\ \norm{\bx}_\onevar, \norm{\by}_\onevar \leq V}} \sum_{k=1}^{\ell_\bx-1} \sum_{l=1}^{\ell_\by-1} \abs{\delta^2_{k,l}\rffkernel_{m}(\bx_k, \by_l) - \delta^2_{k,l}\kernel(\bx_k, \by_l)}}_{\text{(ii)}}
    \\
    &+ \underbrace{\sup_{\substack{\bx, \by \in \seq\\ \norm{\bx}_\onevar, \norm{\by}_\onevar \leq V}} \abs{\rffsigkernel[m-1](\bx, \by) - \sigkernel[m-1](\bx, \by)}}_{\text{(iii)}} \underbrace{\sup_{\substack{\bx, \by \in \seq\\ \norm{\bx}_\onevar, \norm{\by}_\onevar \leq V}} \sum_{k=1}^{\ell_\bx-1} \sum_{l=1}^{\ell_\by-1} \abs{\delta^2_{k,l}\kernel(\bx_k, \by_l)}}_{\text{(iv)}}, \label{eq:term_splitting}
\end{align}
where (a) follows from adding and subtracting the cross-terms and applying triangle inequality, (b) follows from applying triangle inequality over the summations, (c) follows from noting that if $\norm{\bx}_\onevar, \norm{\by}_\onevar \leq V$ then so is $\norm{\bx_{1:k}}_\onevar, \norm{\by_{1:l}}_\onevar \leq V$ for $k \in [\ell_\bx]$ and $l \in [\ell_\by]$, and thus justifiably pulling out the supremums.

Now, we deal with terms (i)--(iv) individually. For (i), we have Corollary \ref{lem:krffsig_bound}, so
\begin{align}
    \sup_{\substack{\bx, \by \in \seq\\ \norm{\bx}_\onevar, \norm{\by}_\onevar \leq V}} \abs{\rffsigkernel[m-1](\bx, \by)} \leq \frac{\norm{\bW^{(1)}}_2^2 \cdots \norm{\bW^{(m-1)}}_2^2}{((m-1)!)^2} \pars{\frac{V^2}{\dimRFF}}^{m-1}. \label{eq:term1}
\end{align}

To deal with (ii), we can apply Lemma \ref{lem:second_mvt} with $f = \rffkernel_{m} - \kernel$ to get
\begin{align}
    &\sup_{\substack{\bx, \by \in \seq\\ \norm{\bx}_\onevar, \norm{\by}_\onevar \leq V}} \sum_{k=1}^{\ell_\bx-1} \sum_{l=1}^{\ell_\by-1} \abs{\delta^2_{k,l} \rffkernel_{m}(\bx_k, \by_l) - \delta^2_{k,l} \kernel(\bx_k, \by_l)}
    \\
    &\leq \sup_{\bs, \bt \in \cX} \norm{\partial^2_{\bs,\bt} \rffkernel_{m}(\bs, \bt) - \partial^2_{\bs, \bt} \kernel(\bs, \bt)}_2 \sup_{\substack{\bx, \by \in \seq\\ \norm{\bx}_\onevar, \norm{\by}_\onevar \leq V}} \sum_{k=1}^{\ell_\bx-1} \sum_{l=1}^{\ell_\by-1} \norm{\delta \bx_k}_2 \norm{\delta \by_l}_2
    \\
    &\leq V^2 \sup_{\bs, \bt \in \cX} \norm{\partial^2_{\bs, \bt} \rffkernel_{m}(\bs, \bt) - \partial^2_{\bs,\bt} \kernel(\bs, \bt)}_2. \label{eq:term2}
\end{align}
For (iii), we note that it is simply $\epsilon_{m-1}$.
Finally, we can write (iv) as an inner product and apply Cauchy-Schwarz and $L$-Lipschitzness of $\kernel$ so that
\begin{align}
    &\sup_{\substack{\bx, \by \in \seq\\ \norm{\bx}_\onevar, \norm{\by}_\onevar \leq V}} \sum_{k=1}^{\ell_\bx-1} \sum_{l=1}^{\ell_\by-1} \abs{\delta^2_{k,l} \kernel(\bx_k, \by_l)}
    = \sup_{\substack{\bx, \by \in \seq\\ \norm{\bx}_\onevar, \norm{\by}_\onevar \leq V}} \sum_{k=1}^{\ell_\bx-1} \sum_{l=1}^{\ell_\by-1} \abs{\inner{\delta \kernel_{\bx_k}} {\delta \kernel_{\by_l}}}
    \\
    &\leq \sup_{\substack{\bx, \by \in \seq\\ \norm{\bx}_\onevar, \norm{\by}_\onevar \leq V}} \sum_{k=1}^{\ell_\bx-1} \sum_{l=1}^{\ell_\by-1} \norm{\delta \kernel_{\bx_k}}_{\Hil}\norm{\delta \kernel_{\by_l}}_{\Hil} \leq L^2 V^2. \label{eq:term3}
\end{align}
Putting equations \eqref{eq:term1}, \eqref{eq:term2}, \eqref{eq:term3} together in \eqref{eq:term_splitting}, we get that
\begin{align}
    \epsilon_m
    =& \sup_{\substack{\bx, \by \in \seq\\ \norm{\bx}_\onevar, \norm{\by}_\onevar \leq V}} \abs{\rffsigkernel[m](\bx, \by) - \sigkernel[m](\bx, \by)}
    \\
    \leq& \frac{V^{2m}}{\dimRFF^{m-1} ((m-1)!)^2} \sup_{\bs, \bt \in \cX} \norm{\partial^2_{\bs,\bt} \rffkernel_{m}(\bs, \bt) - \partial^2_{\bs,\bt} \kernel(\bs, \bt)}_2 \prod_{p=1}^{m-1} \norm{\bW^{(p)}}_2^2 + L^2 V^2 \epsilon_{m-1}, \label{eq:terms_rec}
\end{align}
which gives us a recursion for estimating $\epsilon_m$. The initial step, $m=1$, can be estimated by
\begin{align}
    \epsilon_1
    &=
    \sup_{\substack{\bx, \by \in \seq\\ \norm{\bx}_\onevar, \norm{\by}_\onevar \leq V}} \abs{\rffsigkernel[1](\bx, \by) - \sigkernel[1](\bx, \by)}
    \\
    &=
    \sup_{\substack{\bx, \by \in \seq\\ \norm{\bx}_\onevar, \norm{\by}_\onevar \leq V}} \abs{\sum_{k=1}^{\ell_\bx-1} \sum_{l=1}^{\ell_\by-1} \delta^2_{k, l} \rffkernel_{1}(\bx_k, \by_l) - \delta^2_{k,l} \kernel(\bx_k, \by_l)}
    \\
    &=
    \sup_{\substack{\bx, \by \in \seq\\ \norm{\bx}_\onevar, \norm{\by}_\onevar \leq V}} \sum_{k=1}^{\ell_\bx-1} \sum_{l=0}^{\ell_\by-1} \abs{\delta^2_{k,l} \rffkernel_{1}(\bx_k, \by_l) - \delta^2_{k,l} \kernel(\bx_k, \by_l)}
    \\
    &\leq V^2 \sup_{\bs, \bt \in \cX} \norm{\partial^2_{\bs,\bt} \rffkernel_{1}(\bs, \bt) - \partial^2_{\bs, \bt} \kernel(\bs, \bt)}_2, \label{eq:terms_init}
\end{align}
which is actually analogous to \eqref{eq:terms_rec} since $\epsilon_0 = 0$. Now, we may unroll the recursion \eqref{eq:terms_rec} with the initial condition \eqref{eq:terms_init}, and we get
\begin{align}
    \epsilon_m \leq V^{2m} \sum_{k=1}^m \frac{L^{2(m-k)}}{\dimRFF^{k-1} ((k-1)!)^2} \sup_{\bs, \bt \in \cX} \norm{\partial^2_{\bs,\bt} \rffkernel_{k}(\bs, \bt) - \partial^2_{\bs,\bt} \kernel(\bs, \bt)}_2 \prod_{p=1}^{k-1} \norm{\bW^{(p)}}_2^2.
\end{align}
\end{proof}
\paragraph{Proofs of main concentration results} Here, we provide proofs of the main concentration results, i.e.~Theorems \ref{thm:main}, \ref{thm:main2}, \ref{thm:main3}, respectively under Theorems \ref{thm:rfsf_approx}, \ref{thm:rfsf_dp_approx}, \ref{thm:rfsf_trp_approx}.
\begin{theorem}[Concentration inequality for \RFSF{} kernel] \label{thm:rfsf_approx}
    Let $\cX \subset \bbR^d$ be a compact and convex set with diameter $\abs{\cX}$, and $\cX_\Delta \coloneqq \{\bx - \by : \bx, \by \in \cX \}$. Let $\kernel: \bbR^d \times \bbR^d \to \bbR$ be a continuous, bounded, translation-invariant kernel with spectral measure $\Lambda$, which satisfies for some $S, R > 0$ that
    \begin{align} \label{eq:rfsf_approx_cond}
        \bbE_{\bw \sim \Lambda}\bracks{\abs{w_i}^{2k}} \leq \frac{k! S^2 R^{k-2}}{2} \quad \text{for all} \spc i \in [d] \spc \text{and} \spc k \geq 2.
    \end{align}

    Then, the following quantities are finite: $\sigma_\Lambda^2 \coloneqq \bbE_{\bw \sim \Lambda}\bracks{\norm{\bw}_2^2}$, $L \coloneqq \norm{\bbE_{\bw \sim \Lambda}\bracks{\bw \bw^\top}}_2^{1/2}$, $E_{i,j} \coloneqq \bbE_{{\bw \sim \Lambda}}\bracks{\abs{w_i w_j} \norm{\bw}_2}$ and $D_{i,j} \coloneqq \sup_{\bz \in \cX_\Delta} \norm{\nabla \bracks{\frac{\partial^2\kernel(\bz)}{\partial z_i \partial z_j}}}_2$ for $i,j \in [d]$. Further, for any maximal sequence $1$-variation $V>0$, and signature level $m \in \mathbb{Z}_+$, it holds for the level-$m$ \RFSF{} kernel $\rffsigkernel[m]: \seq \times \seq \to \bbR$ defined as in \eqref{eq:rffsigkernel_def} and the signature kernel $\sigkernel[m]: \seq \times \seq \to \bbR$ defined as in \eqref{eq:sigkernel_def} for $\epsilon > 0$ that
    \begin{align} \label{eq:rfsf_approx_bound}
        \bbP & \bracks{\sup_{\substack{\bx, \by \in \seq \\ \norm{\bx}_\onevar, \norm{\by}_\onevar \leq V}} \abs{\sigkernel[m](\bx, \by) - \rffsigkernel[m](\bx, \by)} \geq \epsilon} \le
        \\
        &\leq
        m
        \begin{cases}
        \pars{C_{d, \cX} \pars{\frac{\beta_{d, m, V}}{\epsilon}}^\frac{d}{d+1} + d}
        \exp\pars{- \frac{\dimRFF}{2(d+1)(S^2 + R)} \pars{\frac{\epsilon}{\beta_{d, m, V}}}^{2}} \quad
        &\text{for} \spc \epsilon < \beta_{d, m, V}  \\
        \pars{C_{d, \cX} \pars{\frac{\beta_{d, m, V}}{\epsilon}}^{\frac{d}{(d+1)m}} + d}
        \exp\pars{- \frac{\dimRFF}{2(d+1)(S^2 + R)} \pars{\frac{\epsilon}{\beta_{d, m, V}}}^\frac{1}{m}} \quad
        &\text{for} \spc \epsilon \geq \beta_{d, m, V},
        \end{cases}
    \end{align}
    where $C_{d, \cX} \coloneqq 2^\frac{1}{d+1} 16 \abs{\cX}^\frac{d}{d+1}\sum_{i,j=1}^d \pars{D_{i,j} + E_{i, j}}^\frac{d}{d+1}$ and $\beta_{d, m, V} \coloneqq m \pars{2 V^{2} \pars{L^2 \vee 1} \pars{\sigma_\Lambda^2 \vee d}}^m$.
\end{theorem}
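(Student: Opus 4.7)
The plan is to use Lemma \ref{lem:RFSF_approx} to reduce the uniform deviation $\sup\abs{\rffsigkernel[m] - \sigkernel[m]}$ to a deterministic linear combination, indexed by $k = 1,\ldots,m$, of products of two types of random factors: (i) the spectral norms of the independent Fourier weight matrices $\bW^{(1)},\ldots,\bW^{(k-1)}$, and (ii) the uniform sup-norm error $\sup_{\bs,\bt \in \cX}\norm{\partial^2_{\bs,\bt}\rffkernel_{k}(\bs,\bt) - \partial^2_{\bs,\bt}\kernel(\bs,\bt)}_2$ of second cross-derivatives of the $k$-th independent \RFF{} kernel. From this I would restrict to a ``typical'' event $\cE$ on which the matrix-norm factors are deterministically controlled, then union-bound the derivative factors across the $m$ levels using Theorem \ref{thm:rff_derivative_approx}.

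For the matrix-norm factors I would switch to the Frobenius bound $\norm{\bW^{(p)}}_2^2 \leq \sum_{j=1}^{\dimRFF}\norm{\bw_j^{(p)}}_2^2$ and concentrate $\frac{1}{\dimRFF}\sum_j\norm{\bw_j^{(p)}}_2^2$ around $\sigma_\Lambda^2$ via Corollary \ref{thm:bernstein_twotail}; the hypothesis \eqref{eq:rfsf_approx_cond} taken coordinate-wise and expanded gives the required Bernstein moment growth for $\norm{\bw}_2^2 = \sum_i w_i^2$. This defines an event $\cE$ on which $\norm{\bW^{(p)}}_2^2/\dimRFF$ stays below a constant multiple of $\sigma_\Lambda^2 \vee d$ for every $p \in [m-1]$, with $\bbP(\cE^c)$ contributing a sub-Gaussian/subexponential tail in $\dimRFF$. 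For the derivative factors, I would bound the $d\times d$ Hessian block's spectral norm by the sum of its $d^2$ entries and apply Theorem \ref{thm:rff_derivative_approx} with $\bp = \be_i$, $\bq = \be_j$ to each partial derivative; hypothesis \eqref{eq:rfsf_approx_cond} combined with Cauchy--Schwarz supplies the Bernstein moment condition on $\bw^{\bp+\bq} = w_i w_j$ required by that theorem, and produces the constants $D_{i,j}$ and $E_{i,j}$ appearing in $C_{d,\cX}$.

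On $\cE$, Lemma \ref{lem:RFSF_approx} implies that the $k$-th summand is at most a constant times $V^{2m}(L^2 \vee 1)^{m-k}(\sigma_\Lambda^2 \vee d)^{k-1}/((k-1)!)^2$ multiplied by the $k$-th derivative error; uniformising the per-level prefactor collects into the definition of $\beta_{d,m,V} = m(2V^2(L^2\vee 1)(\sigma_\Lambda^2 \vee d))^m$, and requiring each summand to be at most $\epsilon/m$ yields an effective derivative-error threshold of order $\epsilon/\beta_{d,m,V}$ at level $k = 1$ but only of order $(\epsilon/\beta_{d,m,V})^{1/m}$ at level $k = m$ (since the product of $k-1$ matrix-norm excursions amplifies heaviness by one factor of $1/(k-1)$ per level). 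Substituting these thresholds into the exponent $-\dimRFF\epsilon'^2/(4(d+1)(2S^2 + R\epsilon'))$ of Theorem \ref{thm:rff_derivative_approx} and tracking the two regimes of its denominator yields the claimed two-regime bound: the $(\epsilon/\beta_{d,m,V})^2$ sub-Gaussian exponent for $\epsilon < \beta_{d,m,V}$ and the $(\epsilon/\beta_{d,m,V})^{1/m}$ $(1/m)$-subexponential exponent for $\epsilon \geq \beta_{d,m,V}$, with the polynomial prefactors $C_{d,\cX}(\beta_{d,m,V}/\epsilon)^{d/(d+1)}$ or $(\beta_{d,m,V}/\epsilon)^{d/((d+1)m)}$ inherited from the $\epsilon$-net factor $(C/\epsilon')^{d/(d+1)}$ in Theorem \ref{thm:rff_derivative_approx} under the respective changes of variable.

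The main obstacle is the careful bookkeeping required to match the claimed constants and exact two-regime structure: Theorem \ref{thm:rff_derivative_approx} has its own sub-Gaussian/subexponential transition at $\epsilon' \sim S^2/R$ inside its denominator, and this transition must be made to coincide with the product-induced scaling $\epsilon' \sim (\epsilon/\beta_{d,m,V})^{1/m}$ so that the final split occurs exactly at $\epsilon = \beta_{d,m,V}$. Simultaneously, the matrix-norm complement $\bbP(\cE^c)$ must be absorbed into the $(1/m)$-subexponential branch without dominating it, which forces a calibrated choice of the multiplicative constant defining $\cE$. Uniformly controlling the $d$-dimensional $\epsilon$-net prefactor across both regimes, and verifying that the additive $+d$ correction in \eqref{eq:rfsf_approx_bound} accommodates both the $d^2$-fold union bound over partial derivatives and the contribution of $\bbP(\cE^c)$, is where the detailed constant-chasing lies.
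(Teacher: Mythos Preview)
Your ingredients match the paper exactly: Lemma \ref{lem:RFSF_approx} reduces to a sum over $k$ of (product of $\norm{\bW^{(p)}}_2^2$) $\times$ (derivative error of $\rffkernel_k$); the matrix norms are controlled via the Frobenius bound and coordinate-wise Bernstein using \eqref{eq:rfsf_approx_cond}; the derivative errors are controlled via Theorem \ref{thm:rff_derivative_approx} after a $d^2$-fold union bound over entries $\partial^2/\partial s_i\partial t_j$. The additive $+d$ in the final bound comes from the $d$-fold coordinate union bound in the matrix-norm step, not from the $d^2$ derivative entries (those are absorbed into $C_{d,\cX}$).

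Where your decomposition diverges from the paper is in how the product of the matrix-norm factor and the derivative-error factor is split. You propose a typical event $\cE=\{\norm{\bW^{(p)}}_2^2/\dimRFF\le c(\sigma_\Lambda^2\vee d)\text{ for all }p\}$ and work deterministically on $\cE$. But on a \emph{fixed}-threshold $\cE$, the $k$-th summand is at most $\text{const}_k\cdot(\text{derivative error}_k)$ with $\text{const}_k$ independent of $\epsilon$, so requiring the summand to be $\le\epsilon/m$ gives a derivative-error threshold that is linear in $\epsilon$ for \emph{every} $k$, not $(\epsilon/\beta)^{1/m}$ at $k=m$ as you claim. The parenthetical ``matrix-norm excursions amplify heaviness by $1/(k-1)$ per level'' does not apply on $\cE$: there the matrix norms are deterministic. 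With fixed $c$, $\bbP(\cE^c)$ is an $\epsilon$-independent constant $\sim\exp(-\dimRFF\cdot c_0)$, which for large $\epsilon$ dominates and prevents the stated $(\epsilon/\beta)^{1/m}$ decay. You could repair this by letting $c$ scale with $(\epsilon/\beta)^{1/m}$, but then the on-$\cE$ bound and the $\bbP(\cE^c)$ bound must be rebalanced level by level, which is not the analysis you sketched.

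The paper avoids this by a different splitting. For each $k\ge 2$ it applies AM--GM to replace $\prod_{p<k}\norm{\bW^{(p)}}_2^2/\dimRFF^{k-1}$ by $A_k^{k-1}$ where $A_k$ is the \emph{average} of the $\norm{\bW^{(p)}}_2^2/\dimRFF$, then sets $B_k=(\text{derivative error})^{1/(k-1)}$ so that the summand event becomes $\{A_kB_k\ge t\}$ with $t=(\epsilon/\alpha_{m,k})^{1/(k-1)}$. This is split as $\{(A_k-\sigma_\Lambda^2)B_k\ge t/2\}\cup\{\sigma_\Lambda^2 B_k\ge t/2\}$, and the first event is further split as $\{A_k-\sigma_\Lambda^2\ge\tau/2\}\cup\{B_k\ge t/\tau\}$ for a free parameter $\tau$. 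Choosing $\tau=t^{(k-1)/k}$ is what produces the $(\epsilon/\alpha_{m,k})^{1/k}$ thresholds simultaneously for the Bernstein bound on $A_k-\sigma_\Lambda^2$ and the Theorem \ref{thm:rff_derivative_approx} bound on $B_k^{k-1}$; the third event gives the $(\epsilon/\beta)^{2}$ or $\epsilon/\beta$ term. Taking the worst $k$ then yields the dichotomy at $\epsilon=\beta_{d,m,V}$. This product-splitting-with-optimal-$\tau$ step is the missing idea in your outline.
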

\begin{proof}
    \emph{Finite quantities.}
    To start off with, due to \eqref{eq:rfsf_approx_cond} with $m=2$ and Jensen's inequality (Lemma \ref{lem:jensen}), we get
    \begin{align}
        \expe{\abs{w_i}^2} \leq \bbE^{1/2}\bracks{\abs{w_i}^4} < \infty \quad \text{for all} \spc i \in \bracks{d}.
    \end{align}
    Hence, by linearity of the expectation $\sigma_\Lambda^2 = \expe{\norm{\bw}^2} = \sum_{i=1}^d \expe{w_i^2} < \infty$. Next, due to Hölder's inequality (Lemma \ref{lem:holder}), it holds that
    \begin{align}
    \expe{w_i w_j} \leq \expe{\abs{w_i w_j}} \leq \bbE^{1/2}\bracks{\abs{w_i}^2}\bbE^{1/2}\bracks{\abs{w_j}^2} < \infty \quad \text{for all} \spc i,j \in \bracks{d},
    \end{align}
    therefore $L < \infty$. Further, applying Hölder's inequality twice followed by Jensen's inequality,
    \begin{align}
    \expe{\abs{w_i w_j w_k}} &\leq \bbE^{2/3}\bracks{\abs{w_i w_j}^{3/2}} \bbE^{1/3}\bracks{\abs{w_k}^3} \leq \bbE^{1/3}\bracks{\abs{w_i}^3} \bbE^{1/3}\bracks{\abs{w_j}^3} \bbE^{1/3}\bracks{\abs{w_k}^3} \\
    &\leq \bbE^{1/6}\bracks{\abs{w_i}^6} \bbE^{1/6}\bracks{\abs{w_j}^6} \bbE^{1/6}\bracks{\abs{w_k}^6} < \infty \quad \text{for all} \spc i, j, k \in \bracks{d}, \label{eq:ijk_moment}
    \end{align} which is finite due to \eqref{eq:rfsf_approx_cond} with $m=3$. Now, because of the $\ell^1$-$\ell^2$ norm inequality,
    \begin{align}
        \expe{\abs{w_i w_j} \norm{\bw}_2} \leq \expe{\abs{w_i w_j} \norm{\bw}_1} = \sum_{k=1}^d \expe{\abs{w_i w_j w_k}} < \infty,
    \end{align}
    hence $\bar E < \infty$. Next, as per \cite[Thm.~1.2.1.(iii)]{sasvari2013multivariate}, as \eqref{eq:ijk_moment} holds for all $i,j,k \in \bracks{d}$, $\kernel$ is $3$-times continuously differentiable, which combined with the compactness of $\cX$, hence that of $\cX_\Delta$, gives $\sup_{\bz \in \cX_\Delta} \abs{\frac{\partial^3 \kernel(\bz)}{\partial z_i \partial z_j \partial z_k}} < \infty$. Finally, from the $\ell^1$-$\ell^2$ inequality again, we get that
    \begin{align}
    \sup_{\bz \in \cX_\Delta} \norm{\nabla \bracks{\frac{\partial^2 \kernel(\bz)}{\partial z_i \partial z_j}}}_2 \leq \sup_{\bz \in \cX_\Delta} \norm{\nabla \bracks{\frac{\partial^2 \kernel(\bz)}{\partial z_i \partial z_j}}}_1
    \leq \sum_{k=1}^d \sup_{\bz \in \cX_\Delta} \abs{\frac{\partial^3 \kernel(\bz)}{\partial z_i \partial z_j \partial z_k}} < \infty,
    \end{align}
    which shows the finiteness of $\bar D$. This finishes showing that the stated quantities are finite.
    
    \emph{Splitting the bound.}
    To start proving our main inequality, first note that as per Example \ref{example:2ndmoment_lip}, $\kernel$ is $L$-Lipschitz (see Def.~\ref{def:lipschitz kernel}). Hence, Lemma \ref{lem:RFSF_approx} yields that
    \begin{align}
        \sup_{\substack{\bx, \by \in \seq \\ \norm{\bx}_\onevar, \norm{\by}_\onevar \leq V}} &\abs{\rffsigkernel[m](\bx, \by) - \rffsigkernel[m](\bx, \by)}
        \\
        &\leq V^{2m} \sum_{k=1}^m \frac{L^{2(m-k)}}{\dimRFF^{k-1}((k-1)!)^2} \sup_{\bs, \bt \in \cX} \norm{\partial^2_{\bs,\bt} \rffkernel_{k}(\bs, \bt) - \partial^2_{\bs,\bt} \kernel(\bs, \bt)}_2 \prod_{p=1}^{k-1} \norm{\bW^{(p)}}_2^2. \label{eq:rfsf_bound_rhs}
    \end{align}
    We bound the summand in the previous line in probability for each $k \in  \bracks{m}$.
    For brevity, denote $\alpha_{m, k} \coloneqq \frac{V^{2m} L^{2(m-k)}}{((k-1)!)^2}$, and first consider the case $k \geq 2$, so that we have
\begin{align}
    P_k(\epsilon) &\coloneqq \prob{\frac{\alpha_{m, k}}{\dimRFF^{k-1}} \norm{\bW^{(1)}}_2^2 \cdots \norm{\bW^{(k-1)}}_2^2 \sup_{\bs, \bt \in \cX} \norm{\partial_{\bs, \bt} \rffkernel_{k}(\bs, \bt) - \partial_{\bs, \bt} \kernel(\bs, \bt)}_2 \geq \epsilon}
    \\
    &\stackrel{\text{(a)}}{\leq}
    \prob{\alpha_{m, k} \pars{\frac{\norm{\bW^{(1)}}_2^2 + \ldots + \norm{\bW^{(k-1)}}_2^2}{\dimRFF (k-1)}}^{k-1} \sup_{\bs, \bt \in \cX} \norm{\partial_{\bs, \bt} \rffkernel_{k}(\bs, \bt) - \partial_{\bs, \bt} \kernel(\bs, \bt)}_2 \geq \epsilon}
    \\
    &\stackrel{\text{(b)}}{=}
    \prob{\underbrace{\frac{\norm{\bW^{(1)}}_2^2 + \ldots + \norm{\bW^{(k-1)}}_2^2}{\dimRFF (k-1)}}_{(A_k)} \underbrace{\sup_{\bs, \bt \in \cX} \norm{\partial_{\bs, \bt} \rffkernel_{k}(\bs, \bt) - \partial_{\bs, \bt} \kernel(\bs, \bt)}_2^{\frac{1}{k-1}}}_{(B_k)} \geq \pars{\frac{\epsilon}{\alpha_{m, k}}}^{\frac{1}{k-1}}},
\end{align}
where in (a) we used the arithmetic-geometric mean inequality, and in (b) we divided both sides by $\alpha_{m, k}$ and took the $(k-1)$th root.
Further, setting $t \coloneqq \pars{\frac{\epsilon}{\alpha_{m, k}}}^{\frac{1}{k-1}}$, we have for $\gamma > 0$
\begin{align}
    P_k(\epsilon)
    &\leq &
    \prob{A_k \cdot B_k \ge t}
    \stackrel{\text{(c)}}{\leq}
    \prob{\pars{A_k - \gamma} B_k \geq \frac{t}{2}}
     +
    \prob{B_k \geq \frac{t}{2\gamma}}
    \\
    &\stackrel{\text{(d)}}{\leq} &
    \inf_{\tau > 0} \curls{
    \prob{A_k - \gamma \geq \frac{\tau}{2}}
    +
    \prob{B_k \geq \frac{t}{\tau}}}
     +
    \prob{B_k \geq \frac{t}{2\gamma}}, \label{eq:union bound}
\end{align}
where in (c) we added and subtracted $\gamma B_k$ and applied a union bound, while in (d) we combined a union bound with the relation $\{XY \geq \epsilon\} \subseteq \{X \geq \tau\} \bigcup \{Y \geq \epsilon / \tau\}$ which holds for any $\tau > 0$.

Our aim is now to obtain good probabilistic bounds on $A_k$ and $B_k$ to use in \eqref{eq:union bound} with the specific choice of $\gamma = \expe{A_k} = \sigma_\Lambda^2$.

\emph{Bounding $A_k$.}
By the inequality between the spectral and Frobenius norms, we have
\begin{align}
    \sum_{p=1}^{k-1} \norm{\bW^{(p)}}_2^2 \leq \sum_{p=1}^{k-1} \norm{\bW^{(p)}}_F^2 = \sum_{i=1}^d \sum_{p=1}^{k-1} \sum_{j=1}^{\dimRFF} \pars{w_{i, j}^{(p)}}^2.
\end{align}
Now note that $w_{i,j}^{(p)}$ are $\iid$ copies of the $i$\textsuperscript{th} marginal of $\Lambda$ for all $j \in \bracks{\dimRFF}$ and $p \in \bracks{k-1}$, so that via \eqref{eq:rfsf_approx_cond} $\pars{w_{i,j}^{(p)}}^2$ satisfies the Bernstein moment condition
\begin{align}
    \expe{\pars{w_{i,j}^{(p)}}^{2k}} \leq \frac{k! S^2 R^{k-2}}{2} \quad \text{for all} \spc i \in [d], j \in [\dimRFF], p \in [m], k \geq 2.
\end{align}
Hence, we may apply the Bernstein inequality from Theorem \ref{thm:bernstein_onetail} so that for $i \in \bracks{d}$
\begin{align}
    \prob{\frac{1}{\dimRFF(k-1)} \sumnolim_{p=1}^{k-1}\sumnolim_{j=1}^{\dimRFF} \pars{w_{i, j}^{(p)}}^2 - \sigma_i^2 \geq \epsilon}
    \leq
    \exp\pars{\frac{-\dimRFF(k-1) \epsilon^2}{2(S^2 + R\epsilon)}}, \label{eq:w_coord_prob_bound}
\end{align}
where  $\sigma_i^2 = \bbE_{\bw \sim \Lambda}\bracks{w_{i}^2}$.
Combining these bounds for all $i \in \bracks{d}$ and denoting $\sigma_\Lambda^2 = \sum_{i=1}^d \sigma_i^2$,
\begin{align}
    \prob{A_k - \sigma_\Lambda^2\geq \epsilon}
    &=
    \prob{\frac{1}{\dimRFF(k-1)}\sum_{i=1}^d \sum_{p=1}^{k-1} \sum_{j=1}^{\dimRFF} \pars{w_{i, j}^{(p)}}^2 - \sigma_\Lambda^2 \geq \epsilon}
    \\
    &\leq
    \sum_{i=1}^d \prob{\frac{1}{\dimRFF(k-1)}\sumnolim_{p=1}^{k-1} \sumnolim_{j=1}^{\dimRFF} \pars{w_{i, j}^{(p)}}^2 - \sigma_i^2 \geq \frac{\epsilon}{d}}
    \\
    &\leq
    d \exp\pars{\frac{-\dimRFF(k-1) \pars{\frac{\epsilon}{d}}^2}{2\pars{S^2 + R\frac{\epsilon}{d}}}}. \label{eq:w_full_prob_bound}
\end{align}
Hence, we have the required probabilistic bound for the term in \eqref{eq:union bound} containing $A_k$.

\emph{Bounding $B_k$.}
    One can bound the spectral norm by the max norm so that
    \begin{align}
        B_k^{k-1} = \sup_{\bs, \bt \in \cX} \norm{\partial_{\bs, \bt} \rffkernel_{k}(\bs, \bt) - \partial_{\bs,\bt} \kernel(\bs, \bt)}_2
        &\leq
        \sup_{\bs, \bt \in \cX} \norm{\partial_{\bs, \bt} \rffkernel_{k}(\bs, \bt) - \partial_{\bs,\bt} \kernel(\bs, \bt)}_{\max}
        \\
        &= \max_{i,j=1,\dots, d} \sup_{\bs, \bt \in \cX} \abs{\frac{\partial^2 \rffkernel_{k}(\bs, \bt)}{\partial s_i \partial t_j} - \frac{\partial^2 \kernel(\bs, \bt)}{\partial s_i \partial t_j}} \label{eq:rff_derivs_max}.
    \end{align}
    Let $i, j \in \bracks{d}$ and denote $E_{i,j} \coloneqq \bbE_{\bw \sim \Lambda}\bracks{\abs{w_i w_j} \norm{\bw}_2}$ and $D_{i, j} \coloneqq \sup_{\bz \in \cX_\Delta} \norm{\nabla \bracks{\partial^{\be_i, \be_j} \kernel(\bz)}}_2$, which are finite as previously shown. Due to Hölder's inequality (Lemma \ref{lem:holder}) and \eqref{eq:rfsf_approx_cond},
    \begin{align}
        \expe{\abs{w_i w_j}^m} \leq \bbE^{1/2}\bracks{w_i^{2m}} \bbE^{1/2}\bracks{w_j^{2m}} < \infty.
    \end{align}
    Recall that $\kernel$ is $3$-times continuously differentiable, so that the conditions required by Theorem \ref{thm:rff_derivative_approx} are satisfied, that we now call to our aid in controlling the \RFF{} kernel derivatives,
    \begin{align}
        \prob{\sup_{\bs, \bt \in \cX} \abs{\frac{\partial^2 \rffkernel_{k}(\bs, \bt)}{\partial s_i \partial t_j} - \frac{\partial^2 \kernel(\bs, \bt)}{\partial s_i \partial t_j}}
        \geq \epsilon}
        \leq 16 C^\prime_{d, \cX, i, j} \epsilon^{-\frac{d}{d+1}} \exp\pars{\frac{-\dimRFF \epsilon^2}{4(d+1)(2S^2 + R\epsilon)}},
    \end{align}
    where we defined $C^\prime_{d, \cX, i, j} \coloneqq \pars{\abs{\cX} (D_{i, j} + E_{i, j})}^\frac{d}{d+1}$. 
    Hence, noting that the max satisfies the relation $\{\max_i \xi_i \geq \epsilon\} = \bigcup_i \{\xi_i \geq \epsilon\}$
    and union bounding \eqref{eq:rff_derivs_max} in probability, we get that
    \begin{align}
        \prob{B_k^{k-1} \geq \epsilon}
        &\leq
        \sum_{i,j=1}^d
        \prob{\sup_{\bs, \bt \in \cX} \abs{\frac{\partial^2 \rffkernel_{k}(\bs, \bt)}{\partial s_i \partial t_j} - \frac{\partial^2 \kernel(\bs, \bt)}{\partial s_i \partial t_j}} \geq \epsilon}
        \\
        &\leq
        16 C^\prime_{d, \cX} \epsilon^{-\frac{d}{d+1}} \exp\pars{\frac{-\dimRFF \epsilon^2}{4(d+1)(2S^2 + R\epsilon)}}, \label{eq:rff_deriv_prob_bound}
    \end{align}
    where we denote $C^\prime_{d, \cX} \coloneqq \sum_{i,j=1}^d C^\prime_{d, \cX, i, j} = \abs{\cX}^\frac{d}{d+1} \sum_{i,j=1}^d \pars{D_{i,j} + E_{i,j}}^\frac{d}{d+1}$.

\emph{Putting it together.} Now that we  have our bounds for $A_k$ and $B_k$, we put everything together, that is, plug the bounds \eqref{eq:w_full_prob_bound} and \eqref{eq:rff_deriv_prob_bound} into \eqref{eq:union bound}, so that we get
\begin{align}
    P_k(\epsilon)
    \leq&
    \inf_{\tau > 0} \curls{
    d \exp\pars{\frac{-\dimRFF(k-1) \pars{\frac{\tau}{2d}}^2}{2\pars{S^2 + R\frac{\tau}{2d}}}}
    +
    16 C^\prime_{d, \cX} \pars{\frac{\tau}{t}}^\frac{d(k-1)}{d+1} \exp\pars{\frac{-\dimRFF \pars{\frac{t}{\tau}}^{2(k-1)}}{4(d+1) \pars{2S^2 + R\pars{\frac{t}{\tau}}^{k-1}}}}
    }
    \\
    & +
    16 C^\prime_{d, \cX} \pars{\frac{2\sigma_\Lambda^2}{t}}^\frac{d(k-1)}{d+1}  \exp\pars{\frac{-\dimRFF \pars{\frac{t}{2\sigma_\Lambda^2}}^{2(k-1)}}{4(d+1) \pars{2 S^2 + R\pars{\frac{t}{2\sigma_\Lambda^2}}^{k-1}}}}
    \\
    \stackrel{\text{(e)}}{\leq} &
    d \exp\pars{\frac{-\dimRFF(k-1) \pars{\frac{t^{\frac{k-1}{k}}}{2d}}^2}{2\pars{S^2 + R\frac{t^\frac{k-1}{k}}{2d}}}}
    +
    16 C^\prime_{d, \cX} \pars{\frac{1}{t}}^\frac{d(k-1)}{(d+1)k} \exp\pars{\frac{-\dimRFF t^\frac{2(k-1)}{k}}{4(d+1) \pars{2S^2 + Rt^\frac{k-1}{k}}}}
    \\
    & +
    16 C^\prime_{d, \cX} \pars{\frac{2\sigma_\Lambda^2}{t}}^\frac{d(k-1)}{d+1}  \exp\pars{\frac{-\dimRFF \pars{\frac{t}{2\sigma_\Lambda^2}}^{2(k-1)}}{4(d+1) \pars{2 S^2 + R\pars{\frac{t}{2\sigma_\Lambda^2}}^{k-1}}}}
    \\
    \stackrel{\text{(f)}}{=} &
    d \exp\pars{\frac{-\dimRFF(k-1) \pars{\frac{\pars{\epsilon/\alpha_{m,k}}^\frac{1}{k}}{2d}}^2}{2\pars{S^2 + R\frac{\pars{\epsilon/\alpha_{m, k}}^\frac{1}{k}}{2d}}}}
    +
    16 C^\prime_{d, \cX} \pars{\frac{\alpha_{m, k}}{\epsilon}}^\frac{d}{(d+1)k} \exp\pars{\frac{-\dimRFF \pars{\frac{\epsilon}{\alpha_{m,k}}}^\frac{2}{k}}{4(d+1) \pars{2S^2 + R\pars{\frac{\epsilon}{\alpha_{m,k}}}^\frac{1}{k}}}}
    \\
    & +
    16 C^\prime_{d, \cX} \pars{\frac{\alpha_{m,k}\pars{2\sigma_\Lambda^2}^{k-1}}{\epsilon}}^\frac{d}{d+1} \exp\pars{\frac{-\dimRFF \pars{\frac{\epsilon}{\alpha_{m,k} \pars{2\sigma_\Lambda^2}^{k-1}}}^2}{4(d+1) \pars{2 S^2 + R\frac{\epsilon}{\alpha_{m,k} \pars{2\sigma_\Lambda^2}^{k-1}}}}},
\end{align}
where (a) follows from substituting \eqref{eq:rff_deriv_prob_bound} and \eqref{eq:w_full_prob_bound} into \eqref{eq:union bound} with the choice of $\gamma = \sigma_\Lambda^2$, (b) from choosing $\tau = t^\frac{k-1}{k}$, and (c) from putting back $t = (\epsilon/\alpha_{m, k})^\frac{1}{k-1}$.

Note that the previous applies for all $k \geq 2$. For $k=1$, we have by \eqref{eq:rff_deriv_prob_bound} that
\begin{align}
    P_1(\epsilon) &= \prob{\sup_{\bs, \bt \in \cX} \norm{\partial_{\bs, \bt} \rffkernel_{1}(\bs, \bt) - \partial_{\bs, \bt} \kernel(\bs, \bt)}_2 \geq \frac{\epsilon}{\alpha_{m, 1}}}
    \\
    &\leq
    16 C^\prime_{d, \cX} \pars{\frac{\alpha_{m, 1}}{\epsilon}}^\frac{d}{d+1} \exp\pars{\frac{-\dimRFF \pars{\frac{\epsilon}{\alpha_{m, 1}}}^2}{4(d+1)\pars{2S^2 + R\frac{\epsilon }{\alpha_{m, 1}}}}}.
\end{align}
\emph{Combining and simplifying.} We can now combine the bounds for $P_1, \dots, P_m$ into \eqref{eq:rfsf_bound_rhs},
\begin{align}
    \bbP & \bracks{\sup_{\substack{\bx, \by \in \seq\\\norm{\bx}_\onevar,\norm{\by}_\onevar \leq V}} \abs{\rffsigkernel[m](\bx, \by) - \sigkernel[m](\bx, \by)} \geq \epsilon} \leq \sum_{k=1}^m P_k\pars{\frac{\epsilon}{m}}
    \\
    \stackrel{\text{(g)}}{\leq} &
    2^\frac{1}{d+1} 8 C^\prime_{d, \cX}\sum_{k=1}^{m} \pars{\frac{2^k {\sigma_\Lambda^2}^{k-1} m \alpha_{m, k}}{\epsilon}}^\frac{d}{d+1}
    \exp\pars{- \frac{\dimRFF}{2(d+1)} \cdot \frac{\pars{\frac{\epsilon}{2^k \sigma_\Lambda^{2(k-1)} m\alpha_{m, k}}}^2}{S^2 + R \frac{\epsilon}{2^k \sigma_\Lambda^{2(k-1)} m\alpha_{m, k} }}}
    \\
    & +
    2^\frac{1}{d+1} 8 C^\prime_{d, \cX} \sum_{k=2}^m \pars{\frac{2^k m\alpha_{m, k}}{\epsilon}}^\frac{d}{(d+1)k}
    \exp\pars{- \frac{\dimRFF}{2(d+1)} \cdot \frac{\pars{\frac{\epsilon}{2^k m\alpha_{m,k}}}^\frac{2}{k}}{ S^2 + R\pars{\frac{\epsilon}{2^k m\alpha_{m,k}}}^\frac{1}{k}}}
    \\
    & +
    d \sum_{k=2}^M
    \exp\pars{-\frac{\dimRFF(k-1)}{2} \frac{\pars{\frac{\epsilon}{2^k d^k m\alpha_{m,k}}}^\frac{2}{k}}{S^2 + R \pars{\frac{\epsilon}{2^k d^k m\alpha_{m, k}}}^\frac{1}{k}}}
    \\
    \stackrel{\text{(h)}}{\leq} &
    2^\frac{1}{d+1} 8 C^\prime_{d, \cX} \sum_{k=1}^{m} \pars{\frac{2^k \pars{\sigma_\Lambda^2 \vee d}^k m \alpha_{m, k}}{\epsilon}}^\frac{d}{d+1}
    \exp\pars{- \frac{\dimRFF}{2(d+1)} \cdot \frac{\pars{\frac{\epsilon}{2^k \pars{\sigma_\Lambda^2 \vee d}^k m\alpha_{m, k}}}^2}{S^2 + R \frac{\epsilon}{2^k \pars{\sigma_\Lambda^2 \vee d}^k m\alpha_{m, k} }}}
    \\
    & +
    2^\frac{1}{d+1} 8 C^\prime_{d, \cX} \sum_{k=1}^m \pars{\frac{2^k \pars{\sigma_\Lambda^2 \vee d}^k m\alpha_{m, k}}{\epsilon}}^\frac{d}{(d+1)k}
    \exp\pars{- \frac{\dimRFF}{2(d+1)} \cdot \frac{\pars{\frac{\epsilon}{2^k \pars{\sigma_\Lambda^2 \vee d}^k m\alpha_{m,k}}}^\frac{2}{k}}{ S^2 + R\pars{\frac{\epsilon}{2^k \pars{\sigma_\Lambda^2 \vee d}^k m\alpha_{m,k}}}^\frac{1}{k}}}
    \\
    & +
    d \sum_{k=1}^m
    \exp\pars{-\frac{\dimRFF}{2(d+1)} \frac{\pars{\frac{\epsilon}{2^k \pars{\sigma_\Lambda^2 \vee d}^k m\alpha_{m,k}}}^\frac{2}{k}}{S^2 + R \pars{\frac{\epsilon}{2^k \pars{\sigma_\Lambda^2 \vee d}^k m\alpha_{m, k}}}^\frac{1}{k}}}
    \\
    \stackrel{\text{(f)}}{\leq} &
    2^\frac{1}{d+1} 8 C^\prime_{d, \cX} \sum_{k=1}^{m} \pars{\frac{\beta_{d, m, V}}{\epsilon}}^\frac{d}{d+1}
    \exp\pars{- \frac{\dimRFF}{2(d+1)} \frac{\pars{\frac{\epsilon}{\beta_{d, m, V}}}^2}{S^2 + R \frac{\epsilon}{\beta{d, m, V}}}}
    \\
    & +
    2^\frac{1}{d+1} 8 C^\prime_{d, \cX} \sum_{k=1}^m \pars{\frac{\beta_{d, m, V}}{\epsilon}}^\frac{d}{(d+1)k}
    \exp\pars{- \frac{\dimRFF}{2(d+1)} \frac{\pars{\frac{\epsilon}{\beta_{d, m, V}}}^\frac{2}{k}}{S^2 + R\pars{\frac{\epsilon}{\beta_{d, m, V}}}^\frac{1}{k}}}
    \\
    & +
    d \sum_{k=1}^m
    \exp\pars{-\frac{\dimRFF}{2(d+1)} \frac{\pars{\frac{\epsilon}{\beta_{d, m, V}}}^\frac{2}{k}}{S^2 + R\pars{\frac{\epsilon}{\beta_{d, m, V}}}^\frac{1}{k}}},
\end{align}
where (g) follows from rearranging the expressions from (f), while (h) from unifying the coefficients and that $1 \leq d \leq \max(\sigma_\Lambda^2, d)$ and $f(x) = x^2 / (a + bx)$ is monotonically increasing in $x$ on the positive half-line for $a,b > 0$, while (f) from $\alpha_{m, k} = V^{2m} L^{2(m-k)} / ((k-1)!)^2 \leq (VL)^{2m}$, using that $f(x)$ is increasing, and defining $\beta_{d, m, V} \coloneqq m\pars{2 V^{2} (L^2 \vee 1) (\sigma_\Lambda^2 \vee d)}^m$.

\emph{Conclusion.}
Finally, we split the bound into two cases: the first case is if the error is big, i.e. $\epsilon \geq \beta_{d, m, V} = m\pars{2 V^{2} (L^2 \vee 1) (\sigma_\Lambda^2 \vee d)}^m$, when we decrease all the exponents to $\frac{1}{m}$,
\begin{align}
    \bbP & \bracks{\sup_{\substack{\bx, \by \in \seq\\\norm{\bx}_\onevar, \norm{\by}_\onevar \leq V}} \abs{\rffsigkernel[m](\bx, \by) - \sigkernel[m](\bx, \by)} \geq \epsilon}
    \\
    \leq &
    m \cdot \pars{2^\frac{1}{d+1} 16 C^\prime_{d, \cX} \pars{\frac{\beta_{d, m, V}}{\epsilon}}^{\frac{d}{(d+1)m}} + d}
    \exp\pars{-\frac{\dimRFF}{2(d+1)} \frac{\pars{\frac{\epsilon}{\beta_{d, m, V}}}^\frac{2}{m}}
    {S^2 + R \pars{\frac{\epsilon}{\beta_{d, m, V}}}^{\frac{1}{m}}}}
\end{align}
The other when the error is small, i.e. $\epsilon < \beta_{d, m, V}$, when we increase all the exponents to $1$
\begin{align}
    \bbP & \bracks{\sup_{\substack{\bx, \by \in \seq\\\norm{\bx}_\onevar, \norm{\by}_\onevar \leq V}} \abs{\rffsigkernel[m](\bx, \by) - \sigkernel(\bx, \by)} \geq \epsilon}
    \\
    \leq &
    m \cdot \pars{2^\frac{1}{d+1} 16 C^\prime_{d, \cX} \pars{\frac{\beta_{d, m, V}}{\epsilon}}^\frac{d}{d+1} + d}
    \exp\pars{-\frac{\dimRFF}{2(d+1)} \frac{\pars{\frac{\epsilon}{\beta_{d, m, V}}}^{2}}
    {S^2 + R \pars{\frac{\epsilon}{\beta_{d, m, V}}}}}.
\end{align}
The claimed estimate follows by denoting $C_{d, \cX} \coloneqq 2^\frac{1}{d+1}16 C^\prime_{d, \cX}$ and simplifying.
\end{proof}

Next, we prove Theorem \ref{thm:main2} to show an approximation bound for the \RFSFD{} kernel.

\begin{theorem}[Concentration inequality for \RFSFD{} kernel] \label{thm:rfsf_dp_approx}
    Let $\kernel: \bbR^d \times \bbR^d \to \bbR$ be a continuous, bounded, translation-invariant kernel with spectral measure $\Lambda$, which satisfies for some $S, R > 0$ that
    \begin{align} \label{eq:rfsf_dp_approx_cond}
        \bbE_{\bw \sim \Lambda}\bracks{\abs{w_i}^{2k}} \leq \frac{k! S^2 R^{k-2}}{2} \quad \text{for all} \spc i \in [d] \spc \text{and} \spc k \geq 2.
    \end{align}

    Then, for signature level $m \in \mathbb{Z}_+$ and $\bx, \by \in \seq$, it holds for $\epsilon > 0$ that:
    \begin{align}
    \bbP\bracks{\abs{\rffsigkernelDP[m](\bx, \by) - \sigkernel[m](\bx, \by)} \geq \epsilon}
        \leq
        2\exp\pars{-\frac{1}{4}\min
        \curls{\begin{array}{c}
        \pars{\frac{\sqrt{\dimRFF} \epsilon}{2C_{d, m}\norm{\bx}_\onevar^m \norm{\by}_\onevar^m}}^2,
        \\
        \pars{\frac{\dimRFF \epsilon}{\sqrt{8}C_{d, m} \norm{\bx}_\onevar^m \norm{\by}_\onevar^m}}^{\frac{1}{m}}
        \end{array}}
        },
    \end{align}
    where the absolute constant $C_{d, m} > 0$ satisfies that
    \begin{align}
    C_{d, m} \leq
    \sqrt{8} e^4 (2\pi)^{1/4} e^{1/24} (4e^3/m)^m \pars{\pars{2d\max(S, R)}^m + \pars{L^2/\ln 2}^m}.
    \end{align}
\end{theorem}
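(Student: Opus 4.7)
}
The plan is to write $\rffsigkernelDP[m](\bx,\by) = \frac{1}{\dimRFF}\sum_{q=1}^{\dimRFF} Z_q$ where $Z_q \coloneqq \sum_{\bi,\bj} \prod_{p=1}^m \delta^2_{i_p,j_p} \hat\kernel_{p,q}(\bx_{i_p},\by_{j_p})$ are i.i.d.\ in $q$, with $\expe{Z_q}=\sigkernel[m](\bx,\by)$. The concentration result will then follow by applying Theorem \ref{thm:alpha_subexp_concentration} with $n = \dimRFF$ and $\alpha = 1/m$ to the centered sum $\sum_{q=1}^{\dimRFF}(Z_q - \expe{Z_q})$, so after rescaling by $\dimRFF$ one obtains exactly the displayed $\min\{(\cdot)^2, (\cdot)^{1/m}\}$ tail. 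Hence the whole task reduces to bounding the $\Psi_{1/m}$-Orlicz norm $\norm{Z_q - \expe{Z_q}}_{\Psi_{1/m}}$.

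For this, observe that $Z_q = \inner{\rffsig[m,q](\bx)}{\rffsig[m,q](\by)}_{\hat\Hil^{\otimes m}}$ where $\rffsig[m,q]$ is a level-$m$ \RFSF{} map built with internal sample size $\hat d =1$ using the weights $\bw^{(1)}_q,\ldots,\bw^{(m)}_q$. By Cauchy--Schwarz combined with the random norm bound of Lemma \ref{lem:1var_rffsig_norm} (specialized to $\hat d=1$, in which case $\norm{\bW^{(p)}_q}_2 = \norm{\bw^{(p)}_q}_2$), one obtains the deterministic envelope
\begin{align}
    \abs{Z_q} \;\leq\; \frac{\norm{\bx}_\onevar^m \norm{\by}_\onevar^m}{(m!)^2} \prod_{p=1}^m \norm{\bw^{(p)}_q}_2^2 .
\end{align}
Monotonicity of the Orlicz (quasi-)norm together with the Hölder inequality for Orlicz norms (Lemma \ref{lem:alpha_exp_holder}), applied with $\alpha_1 = \cdots = \alpha_m = 1$ so that $\alpha = 1/m$, then gives $\norm{Z_q}_{\Psi_{1/m}} \leq \tfrac{\norm{\bx}_\onevar^m \norm{\by}_\onevar^m}{(m!)^2} \prod_p \bignorm{\norm{\bw^{(p)}_q}_2^2}_{\Psi_1}$. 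The scalar factor $\norm{\norm{\bw}_2^2}_{\Psi_1}$ is controlled by writing $\norm{\bw}_2^2 = \sum_{i=1}^d w_i^2$, invoking the triangle inequality (Lemma \ref{lem:alpha_exp_triangle}, where $C_1 = 1$), and applying Lemma \ref{lem:alpha_bernstein_cond} at $\alpha=1$ to each $w_i^2$ using the moment assumption \eqref{eq:rfsf_dp_approx_cond}; this yields $\norm{\norm{\bw}_2^2}_{\Psi_1} \leq 2d\max(S,R)$ and hence $\norm{Z_q}_{\Psi_{1/m}} \leq (2d\max(S,R))^m \norm{\bx}_\onevar^m\norm{\by}_\onevar^m / (m!)^2$.

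To center, use that for a constant $c$ one has $\norm{c}_{\Psi_{1/m}} = \abs{c}/(\ln 2)^m$, and that Corollary \ref{lem:ksig_bound} bounds $\abs{\expe{Z_q}} = \abs{\sigkernel[m](\bx,\by)} \leq (L^2\norm{\bx}_\onevar \norm{\by}_\onevar)^m/(m!)^2$. Another application of Lemma \ref{lem:alpha_exp_triangle} (now with $\alpha=1/m<1$, so $C_\alpha = 2^m$) combines the two estimates into
\begin{align}
    \norm{Z_q - \expe{Z_q}}_{\Psi_{1/m}} \leq \frac{2^m\norm{\bx}_\onevar^m\norm{\by}_\onevar^m}{(m!)^2}\bigl((2d\max(S,R))^m + (L^2/\ln 2)^m\bigr).
\end{align}
Plugging this $M_\alpha$ into $C_\alpha = \sqrt{8}e^4(2\pi)^{1/4}e^{1/24}(2em)^m M_\alpha$ of Theorem \ref{thm:alpha_subexp_concentration} and simplifying the combinatorial prefactor via the Stirling bound $(m!)^2 \geq (m/e)^{2m}$, so that $(2em)^m \cdot 2^m/(m!)^2 \leq (4e^3/m)^m$, recovers precisely the stated constant $C_{d,m}\norm{\bx}_\onevar^m\norm{\by}_\onevar^m$. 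The main obstacle---and the only place where one has to work carefully---is this last bookkeeping step, since the centering inflation factor $2^m$ must be absorbed cleanly into the $(4e^3/m)^m$ prefactor, and the correct version of Lemma \ref{lem:alpha_bernstein_cond} (the proof produces the factor $2$, giving $\norm{\cdot}_{\Psi_\alpha}\leq (2(S\vee R))^{1/\alpha}$) must be used to match the $2d\max(S,R)$ inside the bracket rather than $d\max(S,R)$.
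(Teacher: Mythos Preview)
Your proposal is correct and follows essentially the same approach as the paper: decompose $\rffsigkernelDP[m]$ as an average of $\dimRFF$ i.i.d.\ single-sample \RFSF{} kernels, bound each summand's $\Psi_{1/m}$-norm via the random norm bound (Lemma~\ref{lem:1var_rffsig_norm}/Corollary~\ref{lem:krffsig_bound}), the Orlicz--H\"older inequality (Lemma~\ref{lem:alpha_exp_holder}), and the Bernstein-to-Orlicz lemma (Lemma~\ref{lem:alpha_bernstein_cond}), then center with the generalized triangle inequality and invoke Theorem~\ref{thm:alpha_subexp_concentration}. You also correctly flag the factor-of-$2$ discrepancy between the statement and proof of Lemma~\ref{lem:alpha_bernstein_cond}; the paper's own proof of the present theorem indeed uses the version with the extra $2$, yielding $2d\max(S,R)$ as you have it.
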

\begin{proof}
    Let $\rffsigkernelhat[m]^{(1)}, \dots, \rffsigkernelhat[m]^{(\dimRFF)}$ be independent copies of the \RFSF{} kernel, each with internal \RFF{} sample size $\hat d = 1$, such that $\rffsigkernelDP[m] = \frac{1}{\dimRFF} \sum_{k=1}^{\dimRFF} \rffsigkernelhat[m]^{(k)}$.
    Our goal is to call Theorem \ref{thm:alpha_subexp_concentration} with $\alpha = \frac{1}{m}$, and therefore, we compute an upper bound on the $\Psi_{1/m}$-norm of $\rffsigkernel[m]^{(k)}(\bx, \by) - \sigkernel[m](\bx, \by)$ for all $k \in [\dimRFF]$; for a definition of the $\alpha$-exponential norm, see Definition~\ref{def:alpha_subexp_norm}.

    By Lemma \ref{lem:1var_rffsig_norm}, it holds for any $\bx, \by \in \seq$ and $k \in [m]$ that
    \begin{align}
        \abs{\rffsigkernelhat[m]^{(k)}(\bx, \by)} \leq \frac{\pars{\norm{\bx}_\onevar \norm{\by}_\onevar}^m}{(m!)^2} \norm{\bw_k^{(1)}}_2^2 \cdots \norm{\bw_k^{(m)}}_2^2,
    \end{align}
    where $\bw_k^{(1)}, \dots, \bw_k^{(m)} \stackrel{\iid}{\sim} \Lambda$ are the random weights that parametrize $\rffsigkernelhat[m]^{(k)}$ for all $k \in [\dimRFF]$.
    Now, calling Lemma \ref{lem:alpha_bernstein_cond} with $\alpha=1$ yields that, due to \eqref{eq:rfsf_dp_approx_cond}, the following holds:
    \begin{align}
        \norm{{w^{(p)}_{k, i}}^2}_{\Psi_1} \leq 2\max(S, R) \quad \text{for all} \spc i \in [d], k \in [\dimRFF], p \in [m].
    \end{align}
    Note that for $\alpha = 1$, $\norm{\cdot}_{\Psi_\alpha}$ satisfies the triangle inequality (see Lemma \ref{lem:alpha_exp_triangle}), and hence
    \begin{align} \label{eq:bw_alpha_norm_bound}
        \norm{\norm{\bw_k^{(p)}}_2^2}_{\Psi_1} \leq \sum_{i=1}^d \norm{{w_{k,i}^{(p)}}^2}_{\Psi_1} \leq 2d \max(S, R) \quad \text{for all} \spc k \in [\dimRFF], p \in [m].
    \end{align}
    As $\norm{\cdot}_{\Psi_\alpha}$ is positive homogenous and satisfies a Hölder-type inequality (see Lemma \ref{lem:alpha_exp_holder}):
    \begin{align}
        \norm{\rffsigkernelhat[m]^{(k)}(\bx, \by)}_{\Psi_{1/m}} &\stackrel{\text{(a)}}{\leq} \frac{\pars{\norm{\bx}_\onevar \norm{\by}_\onevar}^m}{(m!)^2} \norm{\norm{\bw_k^{(1)}}_2^2}_{\Psi_1} \cdots \norm{\norm{\bw_k^{(m)}}_2^2}_{\Psi_1}
        \\
        &\stackrel{\text{(b)}}{\leq} \frac{\pars{2d\norm{\bx}_\onevar \norm{\by}_\onevar \max(S, R)}^m}{(m!)^2} \quad\text{for all}\spc k \in [\dimRFF], \label{eq:rfsf_1dim_alpha_norm}
    \end{align}
    where in (a) we used Corollary \ref{lem:krffsig_bound}, in (b) we used \eqref{eq:rfsf_1dim_alpha_norm}.
    We are almost ready to use Theorem \ref{thm:alpha_subexp_concentration}, but it requires the $\norm{\cdot}_{\Psi_{1/m}}$ bound in terms of centered random variables.
    Although $\norm{\cdot}_{\Psi_\alpha}$ does not satisfy the triangle inequality for $\alpha \in (0, 1)$, it obeys that (see \cite[Lemma~A.3.]{gotze2021concentration}) $\norm{X + Y}_{\Psi_\alpha}\leq 2^{1/\alpha}\pars{\norm{X}_{\Psi_\alpha} + \norm{Y}_{\Psi_\alpha}}$ for any random variables $X$ and $Y$. For a constant $c \in \bbR$, we have $\norm{c}_{\Psi_{1/m}} = \frac{\abs{c}}{\ln^m 2}$, and hence by Lemma \ref{lem:ksig_bound} we have that
    \begin{align}
        \norm{\sigkernel[m](\bx, \by)}_{\Psi_{1/m}} \leq \frac{\pars{L^2\norm{\bx}_\onevar \norm{\by}_\onevar / \ln2}^m}{(m!)^2},
    \end{align}
    where $L = \norm{\bbE_{\bw \sim \Lambda}\bracks{\bw\bw^\top}}_2$ is the Lipschitz constant of the kernel $\kernel: \cX \times \cX \to \bbR$. This gives
    \begin{align}
        \norm{\rffsigkernelhat[m]^{(k)}(\bx, \by) - \sigkernel[m](\bx, \by)}_{\Psi_{1/m}}
        &\leq
        2^m\pars{\norm{\rffsigkernelhat[m]^{(k)}(\bx, \by)}_{\Psi_{1/m}} + \norm{\sigkernel[m](\bx, \by)}_{\Psi_{1/m}}}
        \\
        &\leq
        \frac{\pars{2\norm{\bx}_\onevar\norm{\by}_\onevar}^m}{(m!)^2}\pars{\pars{2d\max(S, R)}^m + \pars{L^2/\ln 2}^m}.
    \end{align}
    Finally, we have the required Orlicz norm bound for invoking Theorem \ref{thm:alpha_subexp_concentration}, so that we get
    \begin{align}
        \bbP&\bracks{\abs{\rffsigkernelDP[m](\bx, \by) - \sigkernel[m](\bx, \by)} \geq \epsilon}
        \\&\leq
        2\exp\pars{-\frac{1}{4}\min\curls{
        \pars{\frac{\sqrt{\dimRFF} \epsilon}{2C_{d, m}}}^2,
        \pars{\frac{\dimRFF \epsilon}{\sqrt{8}C_{d, m}}}^{\frac{1}{m}}}},
    \end{align}
    where the constant $C_{d, m} > 0$ is defined as
    \begin{align}
    C_{d, m}
    \coloneqq
    \sqrt{8} e^4 (2\pi)^{1/4} e^{1/24} \frac{(4em \norm{\bx}_\onevar \norm{\by}_\onevar )^m}{(m!)^2} \pars{\pars{2d\max(S, R)}^m + \pars{L^2/\ln 2}^m},
    \end{align}
    and invoking Stirling's approximation $\frac{1}{m!} \leq \pars{\frac{e}{m}}^m$ gives the stated result.
\end{proof}

Now, we prove the analogous result for $\rffsigkernelTRP[m]$.
\begin{theorem}[Concentration inequality for \RFSFT{} kernel] \label{thm:rfsf_trp_approx}
    Let $\kernel: \bbR^d \times \bbR^d \to \bbR$ be a continuous, bounded, translation-invariant kernel with spectral measure $\Lambda$, which satisfies for some $S, R > 0$ that
    \begin{align} \label{eq:rfsf_trp_approx_cond}
        \bbE_{\bw \sim \Lambda}\bracks{\abs{w_i}^{2k}} \leq \frac{k! S^2 R^{k-2}}{2} \quad \text{for all} \spc i \in [d] \spc \text{and} \spc k \geq 2.
    \end{align}
    Then, for the level-$m$ \RFSFT{} kernel as defined in \eqref{eq:rffsigtrpkernel_def}, we have for $\bx, \by \in \seq$ and $\epsilon > 0$
    \begin{align}
        \bbP\bracks{\abs{\rffsigkernelTRP[m](\bx, \by) - \rffsigkernel[m](\bx, \by)} \geq \epsilon}
        \leq
        C_{d, \Lambda}
        \exp\pars{- \pars{\frac{m^2 \dimTRP^{\frac{1}{2m}} \epsilon^{\frac{1}{m}}}{2\sqrt{2}e^3 R \norm{\bx}_\onevar \norm{\by}_\onevar}}^\frac{1}{2}},
    \end{align}
    where $C_{d, \Lambda} \coloneqq 2\pars{1 + \frac{S}{2R} + \frac{S^2}{4R^2}}^d$.
\end{theorem}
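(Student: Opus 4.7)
The plan is to condition on the RFF weights $\bW = (\bW^{(1)}, \ldots, \bW^{(m)})$ and reduce the bound to a concentration inequality for a Gaussian polynomial, then decouple the two sources of randomness via an AM-GM inequality. Conditional on $\bW$, Definition~\ref{def:rffsigtrp_def} expresses
$$\rffsigkernelTRP[m](\bx, \by) = \frac{1}{\dimTRP}\sum_{q=1}^{\dimTRP} \inner{\bp^{(1)}_q \otimes \cdots \otimes \bp^{(m)}_q}{\rffsig[m](\bx)}\inner{\bp^{(1)}_q \otimes \cdots \otimes \bp^{(m)}_q}{\rffsig[m](\by)}$$
as a polynomial of degree $2m$ in the i.i.d.~standard Gaussian entries of $\bP^{(1)}, \ldots, \bP^{(m)}$; by the isometry of the rank-$1$ tensor random projection, its conditional expectation equals $\rffsigkernel[m](\bx, \by)$. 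Applying Theorem~\ref{thm:hyper_concentration} with $p = 2m$ conditionally yields
$$\bbP\bracks{\abs{\rffsigkernelTRP[m](\bx,\by) - \rffsigkernel[m](\bx,\by)} \geq \epsilon \,\middle|\, \bW} \leq 2\exp\pars{-\frac{\epsilon^{1/m}}{2\sqrt{2}\,e\,\bbV^{1/(2m)}[\rffsigkernelTRP[m](\bx,\by) \mid \bW]}}.$$

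Next, I would control the conditional variance. Since the $q$-summands are conditionally i.i.d., $\bbV[\rffsigkernelTRP[m](\bx,\by) \mid \bW] = \bbV[Z_1 \mid \bW]/\dimTRP$, and a Cauchy--Schwarz estimate combined with the moment bound of Lemma~\ref{lem:hyper_moments} applied to the degree-$m$ Gaussian polynomials $\inner{\bp^{(1)}_1 \otimes \cdots \otimes \bp^{(m)}_1}{\rffsig[m](\cdot)}$ (or, alternatively, a direct Isserlis expansion into $3^m$ pairings) gives $\bbE[Z_1^2 \mid \bW] \leq C_m \norm{\rffsig[m](\bx)}^2 \norm{\rffsig[m](\by)}^2$ with $C_m$ at most geometric in $m$. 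Substituting the random norm bound from Lemma~\ref{lem:1var_rffsig_norm} then produces
$$\bbV^{1/(2m)}[\rffsigkernelTRP[m](\bx,\by) \mid \bW] \leq \frac{c_m \norm{\bx}_\onevar \norm{\by}_\onevar}{\dimTRP^{1/(2m)} (m!)^{2/m}\,\dimRFF}\, Q(\bW), \quad Q(\bW) \coloneqq \prod_{p=1}^m \norm{\bW^{(p)}}_2^{2/m},$$
so the conditional tail takes the form $2\exp(-A/Q(\bW))$, with $A$ collecting all deterministic factors proportional to $\dimTRP^{1/(2m)} \epsilon^{1/m}$.

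To integrate out $\bW$, I would apply the elementary AM-GM inequality $A/Q(\bW) + \lambda Q(\bW) \geq 2\sqrt{A\lambda}$, valid for any $\lambda > 0$, giving the key decoupling $\exp(-A/Q(\bW)) \leq \exp(-2\sqrt{A\lambda})\exp(\lambda Q(\bW))$. This is precisely the source of the outer square root in the target exponent, and it converts the conditional $(1/m)$-exponential rate in $\epsilon$ into the claimed $(1/(2m))$-exponential rate. After taking expectation it remains to bound $\bbE[\exp(\lambda Q(\bW))]$: a further AM-GM step $Q(\bW) \leq m^{-1}\sum_{p=1}^m \norm{\bW^{(p)}}_2^2$ combined with the independence of the columns $\bw_j^{(p)} \iidsim \Lambda$ reduces this to a power of the scalar MGF $\bbE_{\bw \sim \Lambda}[\exp(\lambda \norm{\bw}_2^2/m)]$. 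A Taylor-plus-geometric-series argument under the Bernstein condition~\eqref{eq:rfsf_trp_approx_cond} (in the spirit of the proof of Theorem~\ref{thm:bernstein_onetail}) yields the marginal bound $\bbE_{\bw \sim \Lambda}[\exp(w_i^2/(2R))] \leq 1 + S/(2R) + S^2/(4R^2)$, and coordinate independence of $\Lambda$ raises this to the $d$-th power, recovering $C_{d,\Lambda}/2$. Stirling's approximation $(m!)^{2/m} \sim m^2/e^2$ then accounts for the $m^2/e^3$ factors inside the target exponent.

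The main technical obstacle is precisely this last MGF bound: a crude application of AM-GM reintroduces an unwanted factor of $m\,\dimRFF$ in the exponent of $\bbE[\exp(\lambda Q(\bW))]$, yielding a constant of order $(1 + S/(2R) + S^2/(4R^2))^{d\, m\, \dimRFF}$ rather than the claimed $C_{d,\Lambda}$. Matching the stated constant requires either a delicate choice of $\lambda$ that exploits the asymptotic $\pars{\bbE[e^{X/n}]}^n \to \exp(\bbE[X])$ so that the surplus $m\,\dimRFF$ factors collapse inside the logarithm, or a sharper spectral-norm concentration bound for $\norm{\bW^{(p)}}_2$ in lieu of the Frobenius bound, which would sidestep the dimension-dependent blowup altogether.
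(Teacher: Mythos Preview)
Your overall strategy matches the paper's proof almost step for step: condition on $\bW$, apply the hypercontractivity bound of Theorem~\ref{thm:hyper_concentration} to the degree-$2m$ Gaussian polynomial, control the conditional variance via an Isserlis expansion together with Lemma~\ref{lem:1var_rffsig_norm}, and decouple the two sources of randomness by an AM--GM type inequality (the paper phrases this as the reverse Young inequality of Lemma~\ref{lem:reverse}, but your $A/Q + \lambda Q \geq 2\sqrt{A\lambda}$ is the same inequality). So the architecture is correct.

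The gap is exactly where you flagged it, but the resolution is much simpler than either of your two proposals. First, in the decoupling step you should keep the factor $\dimRFF$ coming from the conditional variance bound attached to $Q(\bW)$ rather than absorbing it into $A$: writing the conditional exponent as $A\,\dimRFF/Q(\bW)$ and applying AM--GM with parameter $\lambda/\dimRFF$ (equivalently, the paper's reverse Young with $a=\lambda A$, $b=\dimRFF/(\lambda Q)$) gives
\[
\exp\bigl(-A\,\dimRFF/Q(\bW)\bigr)\;\le\;\exp\bigl(-2\sqrt{A\lambda}\bigr)\,\exp\bigl(\lambda\,Q(\bW)/\dimRFF\bigr).
\]
After your second AM--GM $Q(\bW)\le m^{-1}\sum_p\norm{\bW^{(p)}}_F^2$, the quantity to bound is $\bbE\bigl[\exp\bigl(\tfrac{\lambda}{m\dimRFF}\,Y\bigr)\bigr]$ with $Y=\sum_{p,i,j}(w_{i,j}^{(p)})^2$. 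Now apply Jensen's inequality to the \emph{concave} map $t\mapsto t^{1/(m\dimRFF)}$:
\[
\bbE\bigl[e^{\lambda Y/(m\dimRFF)}\bigr]
=\bbE\bigl[(e^{\lambda Y})^{1/(m\dimRFF)}\bigr]
\le\bigl(\bbE[e^{\lambda Y}]\bigr)^{1/(m\dimRFF)}
=\bigl(\bbE_{\bw\sim\Lambda}[e^{\lambda w_i^2}]\bigr)^{d\,m\,\dimRFF/(m\,\dimRFF)}
=\bigl(\bbE_{\bw\sim\Lambda}[e^{\lambda w_i^2}]\bigr)^{d},
\]
the last step using independence of the entries $w_{i,j}^{(p)}$. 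The surplus $m\dimRFF$ power cancels \emph{exactly}, and evaluating the scalar MGF at $\lambda=1/(2R)$ via the Bernstein condition~\eqref{eq:rfsf_trp_approx_cond} gives $1+S/(2R)+S^2/(4R^2)$, hence $C_{d,\Lambda}$. Neither the asymptotic $(\bbE[e^{X/n}])^n\to e^{\bbE X}$ nor a sharper spectral-norm bound is needed; the single Jensen step is the missing idea.
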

\begin{proof}
    First, we consider the conditional probability $\prob{\abs{\rffsigkernelTRP[m](\bx, \by) - \rffsigkernel[m](\bx, \by)} \geq \epsilon \middle\vert \bW}$ by conditioning on the \RFSF{} weights $\bW \coloneqq (\bW^{(1)}, \dots, \bW^{(m)})$, so that the only source of randomness comes from the \TRP{} weights $\bP \coloneqq (\bP^{(1)}, \dots, \bP^{(m)})$. The idea is to call Theorem \ref{thm:hyper_concentration} to estimate the conditional probability, and then take expectation over $\bW$. Since Theorem \ref{thm:hyper_concentration} quantifies the concentration of a Gaussian polynomial around its mean in terms of its variance, we first compute the conditional statistics of $\rffsigkernelTRP[m](\bx, \by)$.
    
    \emph{Conditional expectation.}
    Recall the definition of $\rffsigTRP[m](\bx)$ \eqref{eq:rffsigtrpdef}, where $\bp^{(1)}_1, \dots, \bp^{(m)}_{\dimTRP} \stackrel{\iid}{\sim} \cN(0, \b I_{2\dimRFF})$, and $\bx \in \seq$, so that
    \begin{align}
        \rffsigTRP[m](\bx) &= \frac{1}{\sqrt{\dimTRP}} \pars{\sum_{\bi \in \Delta_m(\ell_\bx)} \prod_{p=1}^m \inner{\bp_i^{(p)}}{\delta \rff_p(\bx_{i_p})}}_{i=1}^{\dimTRP}.
    \end{align}
    
    Hence, $\rffsigkernelTRP[m](\bx, \by)$ is written for $\bx, \by \in \seq$ as
    \begin{align}
        \rffsigkernelTRP[m](\bx, \by) = \frac{1}{\dimTRP} \sum_{i=1}^{\dimTRP} \underbrace{\sum_{\substack{\bi \in \Delta_m(\ell_\bx) \\ \bj \in \Delta_m(\ell_\by)}} \prod_{p=1}^m \inner{\bp_i^{(p)}}{\delta \rff_p(\bx_{i_p})} \inner{\bp_i^{(p)}}{\delta \rff_p(\by_{i_p})}}_{A_i},
    \end{align}
    which is a sample average of $\dimTRP$ $\iid$ terms, i.e.~$\rffsigkernelTRP[m](\bx, \by) = \frac{1}{\dimTRP} \sum_{i=1}^{\dimTRP} A_i$. We only have to verify that $A_i$ is conditionally an unbiased approximator of $\rffsigkernel[m]$ given $\bW$. 
    \begin{align}
        \expe{A_i \cond \bW} &\stackrel{\text{(a)}}{=} \sum_{\substack{\bi \in \Delta_m(\ell_\bx) \\ \bj \in \Delta_m(\ell_\by)}} \prod_{p=1}^m  \expe{\inner{\bp^{(p)}_i}{\delta \rff_p(\bx_{i_p})} \inner{\bp^{(p)}_i}{\delta \rff_p(\by_{j_p})} \cond \bW}
        \\
        &\stackrel{\text{(b)}}{=}
        \sum_{\substack{\bi \in \Delta_m(\ell_\bx) \\ \bj \in \Delta_m(\ell_\by)}} \prod_{p=1}^m \inner{\expe{\bp^{(p)}_i \otimes \bp^{(p)}_i}}{\delta \rff_p(\bx_{i_p}) \otimes \delta \rff_p(\by_{j_p})}
        \\
        &\stackrel{\text{(c)}}{=}
        \sum_{\substack{\bi \in \Delta_m(\ell_\bx) \\ \bj \in \Delta_m(\ell_\by)}} \prod_{p=1}^m \inner{I_{\dimRFF}}{\delta \rff_p(\bx_{i_p}) \otimes \delta \rff_p(\by_{j_p})}
        \\
        &\stackrel{\text{(d)}}{=}
        \sum_{\substack{\bi \in \Delta_m(\ell_\bx) \\ \bj \in \Delta_m(\ell_\by)}} \prod_{p=1}^m \inner{\delta \rff_p(\bx_{i_p})}{\delta \rff_p(\by_{j_p})},
    \end{align}
    where (a) follows from linearity of expectation and independence of the $\bp_i^{(p)}$'s for $p \in [m]$, (b) from bilinearity of inner product, and independence of $\bP$ and $\bW$, (c) from substituting the covariance, (d) is since the outer product is projected onto the diagonal.

    \emph{Conditional variance.} We compute the conditional variance of $A_i$ given $\bW$:
    \begin{align}
        \bbE&\bracks{A_{m, i}^2 \cond \bW}&&
        \\
        &\stackrel{\text{(e)}}{=} \sum_{\substack{\bi,\bk \in \Delta_m({\abs{\bx}})\\\bj,\bl \in \Delta_m({\abs{\by}})}} \prod_{p=1}^m
        &&\bbE\bracks{\inner{\bp^{(p)}_i}{\delta \rff_p(\bx_{i_p})} \inner{\bp^{(p)}_i}{\delta \rff_p(\by_{j_p})}\inner{\bp^{(p)}_i}{\delta \rff_p(\bx_{k_p})} \inner{\bp^{(p)}_i}{\delta \rff_p(\by_{l_p})} \cond \bW}
        \\
        &\stackrel{\text{(f)}}{=}
        \sum_{\substack{\bi,\bk \in \Delta_m({\abs{\bx}})\\\bj,\bl \in \Delta_m({\abs{\by}})}} \prod_{p=1}^m
        &&\bigg(
        \bbE\bracks{\inner{\bp^{(p)}_i}{\delta \rff_p(\bx_{i_p})} \inner{\bp^{(p)}_i}{\delta \rff_p(\by_{j_p})} \cond \bW} \bbE\bracks{\inner{\bp^{(p)}_i}{\delta \rff_p(\bx_{k_p})} \inner{\bp^{(p)}_i}{\delta \rff_p(\by_{l_p})} \cond \bW}
        \\
        &&&+
        \bbE\bracks{\inner{\bp^{(p)}_i}{\delta \rff_p(\bx_{i_p})} \inner{\bp^{(p)}_i}{\delta \rff_p(\bx_{k_p})}\cond \bW} \bbE\bracks{\inner{\bp^{(p)}_i}{\delta \rff_p(\by_{j_p})} \inner{\bp^{(p)}_i}{\delta \rff_p(\by_{l_p})} \cond \bW}
        \\
        &&&+
        \bbE\bracks{\inner{\bp^{(p)}_i}{\delta \rff_p(\bx_{i_p})} \inner{\bp^{(p)}_i}{\delta \rff_p(\by_{l_p})} \cond \bW} \bbE\bracks{\inner{\bp^{(p)}_i}{\delta \rff_p(\bx_{k_p})} \inner{\bp^{(p)}_i}{\delta \rff_p(\by_{j_p})} \cond \bW}\bigg)
        \\
        &\stackrel{\text{(g)}}{=}
        \sum_{\substack{\bi,\bk \in \Delta_m({\abs{\bx}})\\\bj,\bl \in \Delta_m({\abs{\by}})}} \prod_{p=1}^m
        &&\bigg(
        \inner{\delta \rff_p(\bx_{i_p})}{\delta \rff_p(\by_{j_p})} \inner{\delta \rff_p(\bx_{k_p})}{\delta \rff_p(\by_{l_p})}
        \\
        &&&+
        \inner{\delta \rff_p(\bx_{i_p})}{\delta \rff_p(\bx_{k_p})} \inner{\delta \rff_p(\by_{j_p})}{\delta \rff_p(\by_{l_p})}
        \\
        &&&+
        \inner{\delta \rff_p(\bx_{i_p})}{\delta \rff_p(\by_{l_p})} \inner{\delta \rff_p(\bx_{k_p})}{\delta \rff_p(\by_{j_p})} \bigg)
        \\
        &\stackrel{\text{(h)}}{\leq} \mathrlap{
        \sum_{\substack{\bi,\bk \in \Delta_m({\abs{\bx}})\\\bj,\bl \in \Delta_m({\abs{\by}})}} 3^m \prod_{p=1}^m
        \norm{\delta \rff_p(\bx_{i_p})} \norm{\delta \rff_p(\by_{j_p})} \norm{\delta \rff_p(\bx_{k_p})}\norm{\delta \rff_p(\by_{l_p})}}
        \\
        &\stackrel{\text{(i)}}{=} \mathrlap{
        3^m \pars{\sum_{\bi \in \Delta_m({\abs{\bx}})} \prod_{p=1}^m \norm{\delta \rff_p(\bx_{i_p})}}^2
        \pars{\sum_{\bj \in \Delta_m({\abs{\by}})} \prod_{p=1}^m
        \norm{\delta \rff_p(\by_{j_p})}}^2}
        \\
        &\stackrel{\text{(j)}}{\leq} \mathrlap{
        \frac{1}{(m!)^4} {\pars{\frac{3 \norm{\bx}_\onevar^2 \norm{\by}_\onevar^2}{\dimRFF^2}}}^m \prod_{p=1}^m \norm{\bW^{(p)}}^4_2,}
    \end{align}
    where (e) follows from linearity of expectation and independence of the $\bp_i^{(p)}$'s for $p \in [m]$, (f) from Isserlis' theorem \cite{isserlis1918formula}, (g) is the same as (a)-(d), (h) is the Cauchy-Schwarz inequality, (i) from factorizing the summation, (j) is the same as Lemma \ref{lem:1var_rffsig_norm}. 
    
    Therefore, we have due to Lemma \ref{lem:krffsig_bound} for the variance that
    \begin{align}
        \bbV\bracks{A_{m, i} \,\vert\, \bW} 
        = \expe{A_{m, i}^2 \,\vert\, \bW} - \bbE^2\bracks{A_{m, i} \,\vert\, \bW}
        \leq 
        \frac{3^m + 1}{(m!)^4} {\pars{\frac{\norm{\bx}_\onevar^2 \norm{\by}_\onevar^2}{\dimRFF^2}}}^m \prod_{p=1}^m \norm{\bW^{(p)}}^4_2.
    \end{align}
    Let $\beta_m(\bx, \by) \coloneqq \frac{3^m + 1}{(m!)^4} \norm{\bx}_\onevar^{2m} \norm{\by}_\onevar^{2m}$. Then, as $\rffsigkernelTRP[m](\bx, \by) \,\vert\, \bW$ is a sample average,
    \begin{align}
        \bbV\bracks{\rffsigkernelTRP[m](\bx, \by) \cond \bW} \leq \frac{\beta_m(\bx, \by)}{\dimTRP \dimRFF^{2m}} \prod_{p=1}^m \norm{\bW^{(p)}}^4_2. \label{eq:cond_variance}
    \end{align}

    \emph{Conditional bound.} Since $\rffsigkernelTRP[m](\bx,\by) \,\vert\, \bW$ is a Gaussian polynomial of degree-$2m$, with expectation $\rffsigkernel[m](\bx, \by)$, and variance \eqref{eq:cond_variance}, we have by Theorem \ref{thm:hyper_concentration} for $\epsilon > 0$ that
    \begin{align}
        \bbP\bracks{\abs{\rffsigkernelTRP[m](\bx, \by) - \rffsigkernel[m](\bx, \by)} \geq \epsilon \cond \bW} 
        &\leq 2 \exp\pars{-\frac{\epsilon^{\frac{1}{m}}}{2\sqrt{2}e \bbV^{\frac{1}{2m}}\bracks{\rffsigkernelTRP[m](\bx, \by) \cond \bW}}}
        \\
        &\leq 2 \exp\pars{- \frac{\dimTRP^{\frac{1}{2m}} \dimRFF \epsilon^{\frac{1}{m}}}{2\sqrt{2}e \beta_m^{\frac{1}{2m}}(\bx, \by) \prod_{p=1}^m \norm{\bW^{(p)}}^\frac{2}{m}_2}}. \label{eq:conditional_bound}
    \end{align}
    
    \emph{Undoing the conditioning.} We take the expectation in \eqref{eq:conditional_bound} so that
    \begin{align}
        \bbP\bracks{\abs{\rffsigkernelTRP[m](\bx, \by) - \rffsigkernel[m](\bx, \by)} \geq \epsilon} &= \bbE\bracks{\prob{\abs{\rffsigkernelTRP[m](\bx, \by) - \rffsigkernel[m](\bx, \by)} \geq \epsilon \cond \bW}}
        \\
        &\leq 2\expe{\exp\pars{- \frac{\dimTRP^{\frac{1}{2m}} \dimRFF \epsilon^{\frac{1}{m}}}{2\sqrt{2}e \beta_m(\bx, \by)^{\frac{1}{2m}} \prod_{p=1}^m \norm{\bW^{(p)}}^\frac{2}{m}_2}}}
        \\
        &\stackrel{\text{(k)}}{\leq} 2\expe{\exp\pars{- \frac{\lambda \dimTRP^{\frac{1}{2m}} \epsilon^{\frac{1}{m}}}{2\sqrt{2}e \beta_m(\bx, \by)^{\frac{1}{2m}}} \frac{\dimRFF}{  \lambda \prod_{p=1}^m \norm{\bW^{(p)}}^\frac{2}{m}_2}}} 
        \\
        &\stackrel{\text{(l)}}{\leq} 2\expe{\exp\pars{- 2\pars{\frac{\lambda \dimTRP^{\frac{1}{2m}} \epsilon^{\frac{1}{m}}}{2\sqrt{2}e \beta_m(\bx, \by)^{\frac{1}{2m}}}}^\frac{1}{2} + \frac{\lambda \prod_{p=1}^m \norm{\bW^{(p)}}^\frac{2}{m}_2}{\dimRFF}}}
        \\
        &\stackrel{\text{(m)}}{\leq} 2\expe{\exp\pars{- \pars{\frac{\sqrt{2}\lambda \dimTRP^{\frac{1}{2m}} \epsilon^{\frac{1}{m}}}{e \beta_m(\bx, \by)^{\frac{1}{2m}}}}^\frac{1}{2} + \frac{\lambda \sum_{p=1}^m \norm{\bW^{(p)}}^2_2}{m\dimRFF}}}, \label{eq:expected_trp_prob_decomp}
    \end{align}
    where (k) follows form multiplying and dividing with a $\lambda > 0$, (l) from applying Lemma \ref{lem:reverse} with $p=\frac{1}{2}$ and $q = 1$, and (m) from the arithmetic-geometric mean inequality. 

    Bounding the MGF of ${w_{i,j}^{(p)}}^2$ for $p \in [m], i \in [d], j \in [\dimRFF]$, we have that 
    \begin{align}
    \bbE\bracks{\exp\pars{\lambda{w^{(p)}_{i,j}}^2}}
    &\stackrel{\text{(n)}}{\leq} \sum_{k \geq 0} \bbE\bracks{{w_{i,j}^{(p)}}^k} \frac{\lambda^k}{k!}
    \stackrel{\text{(o)}}{\leq} 1 + \lambda S + \frac{\lambda^2 S^2}{2} \sum_{k \geq 0} (\lambda R)^k
    \\
    &\stackrel{\text{(p)}}{=} 1 + \lambda S + \frac{\lambda^2 S^2}{2} \frac{1}{1 - \lambda R} \stackrel{\text{(q)}}{=} 1 + \frac{S}{2R} + \frac{S^2}{4R^2},
    \end{align}
    where (n) is the Taylor expansion, (o) is the condition \eqref{eq:rfsf_trp_approx_cond} and applying Jensen inequality to the degree-$1$ term, (p) is the geometric series for $\lambda < \frac{1}{R}$, and (q) is choosing $\lambda = \frac{1}{2R}$. Hence,
    \begin{align}
        \expe{\exp\pars{\frac{\lambda \sum_{p=1}^m \norm{\bW^{(p)}}^2_2}{m\dimRFF}}}
        &=
        \expe{\exp\pars{\frac{\lambda \sum_{p=1}^m \sum_{i=1}^d \sum_{j=1}^{\dimRFF} {w_{i, j}^{(p)}}^2}{m \dimRFF}}}
        \\
        &\stackrel{\text{(r)}}{\leq} 
        \bbE^{1/(m \dimRFF)}\bracks{\exp\pars{\lambda \sum_{p=1}^m \sum_{i=1}^d \sum_{j=1}^{\dimRFF} {w_{i, j}^{(p)}}^2}}
        \\
        &\stackrel{\text{(s)}}{\leq} 
        \pars{1 + \frac{S}{2R} + \frac{S^2}{4R^2}}^d,
    \end{align}
    where (r) is due to the Jensen inequality (Lemma \ref{lem:jensen}), and (s) follows from the independence of the $w_{i,j}^{(p)}$'s for $p \in [m], i \in [d], j \in [\dimRFF]$.
    Finally, plugging this into \eqref{eq:expected_trp_prob_decomp}, we get that
    \vspace{-10pt}
    \begin{align}
        \bbP\bracks{\abs{\rffsigkernelTRP[m](\bx, \by) - \rffsigkernel[m](\bx, \by)} \geq \epsilon}
        \leq
        2\pars{1 + \frac{S}{2R} + \frac{S^2}{4R^2}}^d
        \exp\pars{- \pars{\frac{ \dimTRP^{\frac{1}{2m}} \epsilon^{\frac{1}{m}}}{\sqrt{2}e R \beta_m(\bx, \by)^{\frac{1}{2m}}}}^\frac{1}{2}}.
    \end{align}
    \vspace*{-10pt}
    
    Finally, note that $\beta_m(\bx, \by)^\frac{1}{2m} = \pars{\frac{3^m + 1}{(m!)^4}}^\frac{1}{2m} \norm{\bx}_{\onevar} \norm{\by}_{\onevar} \leq \frac{2e^2}{m^2} \norm{\bx}_{\onevar} \norm{\by}_{\onevar}$, since $3^m + 1 \leq 4^m$ for $m \geq 1$, and $\frac{1}{m!} \leq \pars{\frac{e}{m}}^m$ due to Stirling's approximation.
\end{proof}

\section{Algorithms} \label{apx:algs}
We adopt the following notation for describing vectorized algorithms from \cite{kiraly2019kernels,toth2021seq2tens}. For arrays, $1$-based indexing is used. Let $A$ and $B$ be $k$-fold arrays with shape $(n_1 \times \dots \times n_k)$, and let $i_j \in [n_j]$ for $j \in [k]$. We define the following array operations:
  \begin{enumerate}[label=(\roman*)]
  	\item  The cumulative sum along axis $j$:
  	\begin{align}
  		A[\dots, :, \boxplus, :, \dots][\dots, i_{j-1}, i_j,, i_{j+1} \dots] := \sum_{\kappa=1}^{i_j} A[\dots, i_{j-1}, \kappa, i_{j+1}, \dots].
  	\end{align}
  	\item The slice-wise sum along axis $j$:
  	\begin{align}
	  	A[\dots, :, \Sigma, :, \dots][\dots, i_{j-1}, i_{j+1}, \dots] := \sum_{\kappa=1}^{n_j} A[\dots, i_{j-1}, \kappa, i_{j+1}, \dots].
  	\end{align}
  	\item The shift along axis $j$ by $+m$ for $m \in \bbZ+$:
  	\begin{align}
	  	A&[\dots, :, +m, :, \dots][\dots, i_{j-1}, i_j, i_{j+1}, \dots] 
            \\
            &:=
            \left\lbrace\begin{array}{ll} A[\dots, i_{j-1}, i_j-m, i_{j+1}, \dots], \quad&\text{if}\spc i_j > m \\ 0 \quad&\text{if}\spc i_j \leq m.  \end{array}\right.
  	\end{align}
  	\item The Hadamard product of arrays $A$ and $B$:
  	\begin{align}
  	 (A \odot B) [i_1, \dots, i_k] := A[i_1, \dots, i_k] B[i_1, \dots, i_k].
    \end{align}
        \item Now, if $A$ has shape $(n_1 \times \cdots \times n_j \times \cdots \times n_k)$ and $B$ has shape $(n_1 \times \cdots \times n_j^\prime \times \cdots \times n_k)$, then their (batch) outer product along axis $j$ is defined for $i_j \in [n_j n_j^\prime]$ as
        \begin{align}
            (A \boxtimes_j B) [i_1, \dots, i_j, \dots, i_k] := A[i_1, \cdots, \lceil i_j / n_j^\prime\rceil, \cdots i_k] B[i_1, \dots, i_j \bmod n_j \dots, i_k],
        \end{align}
        where $\lceil \cdot \rceil: \bbR \to \bbZ$ refers to the ceiling operation, and $\pars{\cdot \bmod n}: \bbZ \to [n]$ to the modulo $n$ operation that maps onto $[n]$ for $n \in \bbZ$.  
  \end{enumerate}

\begin{algorithm}[H]
    \begin{footnotesize}
	\caption{Computing the \RFSF{} map $\rffsig[\leq M]$.}
	\label{alg:rfsf}
	\begin{algorithmic}[1]
		\STATE {\bfseries Input:}  Sequences $\bX=(\bx_i)_{i=1}^N \subset \seq$, measure $\Lambda$, truncation $M \in \bbZ_+$, \RFF{} sample size $\dimRFF \in \bbZ_+$
		\STATE Optional: Add time-parametrization $\bx_i \gets (\bx_{i, t}, t / \ell_{\bx_i})_{t=1}^{\ell_{\bx_i}}$ for all $i \in [N]$
		\STATE Tabulate to uniform length $\ell \coloneqq \max_{j \in [N]} \ell_{\bx_j}$ by $\bx_i \gets (\bx_{i, 1}, \ldots, \bx_{i, \ell_{\bx_i}}, \ldots, \bx_{i, \ell_{\bx_i}})$ for all $i \in [N]$
		\STATE Sample independent \RFF{} weights $\bW^{(1)}, \dots, \bW^{(M)} \stackrel{\iid}{\sim} \Lambda^{\dimRFF}$
		\STATE Initialize an array $U$ with shape $[M, N, \ell-1, 2\dimRFF]$
		\STATE Compute increments $U[m, i, t, :] \gets \rff_m(\bx_{i,t+1}) - \rff_m(\bx_{i, t})$ for $m \in [M]$, $i \in [N]$, $t \in [\ell-1]$
        \STATE Initialize array $V \gets U[1, :, :, :]$
        \STATE Collapse into level-$1$ features $P_1 \gets V[:, \Sigma, :]$
		\FOR{$m=2$ {\bfseries to} $M$}
		\STATE Update with next increment $V \gets V[:, \boxplus+1, :] \boxtimes_{3} U[m, :, :, :]$
		\STATE Collapse into level-$m$ features $P_m \gets V[:, \Sigma, :]$
		\ENDFOR
		\STATE {\bfseries Output:} Arrays of \RFSF{} features per signature level $P_1, \dots, P_M$.
	\end{algorithmic}
    \end{footnotesize}
\end{algorithm}

\vspace{-5pt}

\begin{algorithm}[H]
    \begin{footnotesize}
	\caption{Computing the \RFSFD{} map $\rffsigDP[\leq M]$.}
	\label{alg:rsfsdp}
	\begin{algorithmic}[1]
		\STATE {\bfseries Input:} Sequences $\bX=(\bx_i)_{i=1}^{N} \subset \seq$, measure $\Lambda$, truncation $M \in \bbZ_+$, \RFSFD{} sample size $\dimRFF \in \bbZ_+$
		\STATE Optional: Add time-parametrization $\bx_i \gets (\bx_{i, t}, t / \ell_{\bx_i})_{t=1}^{\ell_{\bx_i}}$ for all $i \in [N]$
		\STATE Tabulate to uniform length $\ell \coloneqq \max_{j \in [N]} \ell_{\bx_j}$ by $\bx_i \gets (\bx_{i, 1}, \ldots, \bx_{i, \ell_{\bx_i}}, \ldots, \bx_{i, \ell_{\bx_i}})$ for all $i \in [N]$
		\STATE Sample independent \RFF{} weights $\bW^{(1)}, \dots, \bW^{(M)} \stackrel{\iid}{\sim} \Lambda^{\dimRFF}$
		\STATE Initialize an array $U$ with shape $[M, N, \ell-1, \dimRFF, 2]$
		\STATE Compute increments $U[m, i, t, k, :] \gets \hat\kernelfeatures_{m, k}(\bx_{i,t+1}) - \hat\kernelfeatures_{m, k}(\bx_{i, t})$ for $m \in [M]$, $i \in [N]$, $t \in [\ell-1]$, $k \in [\dimRFF]$
        \STATE Initialize array $V \gets \frac{1}{\sqrt{\dimRFF}} U[1, :, :, :, :]$
        \STATE Collapse into level-$1$ features $P_1 \gets V[:, \Sigma, :, :]$
		\FOR{$m=2$ {\bfseries to} $M$}
		\STATE Update with next increment $V \gets V[:, \boxplus+1, :, :] \boxtimes_4 U[m, :, :, :, :]$
		\STATE Collapse into level-$m$ features $P_m \gets V[:, \Sigma, :, :]$
		\ENDFOR
		\STATE {\bfseries Output:} Arrays of \RFSFD{} features per signature level $P_1, \dots, P_M$.
	\end{algorithmic}
    \end{footnotesize}
\end{algorithm}

\vspace{-5pt}

\begin{algorithm}[H]
    \begin{footnotesize}
	\caption{Computing the \RFSFT{} map $\rffsigTRP[\leq M]$.}
	\label{alg:rsfstrp}
	\begin{algorithmic}[1]
		\STATE {\bfseries Input:} Sequences $\bX=(\bx_i)_{i=1}^N \subset \seq$, measure $\Lambda$, truncation $M \in \bbZ_+$, \RFSF{} and \TRP{} sample size $\dimRFF \in \bbZ_+$
		\STATE Optional: Add time-parametrization $\bx_i \gets (\bx_{i, t}, t / \ell_{\bx_i})_{t=1}^{\ell_{\bx_i}}$ for all $i \in [N]$
		\STATE Tabulate to uniform length $\ell \coloneqq \max_{j \in [N]} \ell_{\bx_j}$ by $\bx_i \gets (\bx_{i, 1}, \ldots, \bx_{i, \ell_{\bx_i}}, \ldots, \bx_{i, \ell_{\bx_i}})$ for all $i \in [N]$
		\STATE Sample independent \RFF{} weights $\bW^{(1)}, \dots, \bW^{(M)} \stackrel{\iid}{\sim} \Lambda^{\dimRFF}$
		\STATE Sample standard normal matrices  $\bP^{(1)}, \dots, \bP^{(M)} \stackrel{\iid}{\sim} \cN^{\dimRFF}(0, \b I_{2\dimRFF})$
            \STATE Initialize an array $U$ with shape $[M, N, \ell-1, \dimRFF]$
		\STATE Compute projected increments $U[m, i, t, :] \gets {\bP^{(m)}}^\top \pars{\rff_m(\bx_{i,t+1}) - \rff_m(\bx_{i, t})}$ for $m \in [M]$, $i \in [N]$, $t \in [\ell-1]$
        \STATE Initialize array $V \gets \frac{1}{\sqrt{\dimRFF}} U[1, :, :, :]$
        \STATE Collapse into level-$1$ features $P_1 \gets V[:, \Sigma, :]$
		\FOR{$m=2$ {\bfseries to} $M$}
		\STATE Update with next increment $V \gets V[:, \boxplus+1, :] \odot U[m, :, :, :]$
		\STATE Collapse into level-$m$ features $P_m \gets V[:, \Sigma, :]$
		\ENDFOR
		\STATE {\bfseries Output:} Arrays of \RFSFT{} features per signature level $P_1, \dots, P_M$.
	\end{algorithmic}
    \end{footnotesize}
\end{algorithm}

\end{document}